\newcommand{\hide}[1]{}
\newtheorem{lemma}{Lemma}
\newtheorem{claim}{Claim}
\newtheorem{proposition}{Proposition}
\newtheorem{theorem}{Theorem}
\newtheorem{corollary}{Corollary}
\newtheorem{lemma}{Lemma}
\newtheorem{claim}{Claim}
\newtheorem{proposition}{Proposition}
\newtheorem{theorem}{Theorem}
\newtheorem{corollary}{Corollary}
\theoremstyle{remark}
\theoremstyle{definition}
\def\ddefloop#1{\ifx\ddefloop#1\else\ddef{#1}\expandafter\ddefloop\fi}
\def\ddef#1{\expandafter\def\csname b#1\endcsname{\ensuremath{\mathbf{#1}}}}
\def\ddef#1{\expandafter\def\csname bb#1\endcsname{\ensuremath{\mathbb{#1}}}}
\def\ddef#1{\expandafter\def\csname c#1\endcsname{\ensuremath{\mathcal{#1}}}}
\def\ddef#1{\expandafter\def\csname v#1\endcsname{\ensuremath{\boldsymbol{#1}}}}
\def\ddef#1{\expandafter\def\csname v#1\endcsname{\ensuremath{\boldsymbol{\csname #1\endcsname}}}}
\newcommand\veps{\ensuremath{\varepsilon}}
\newcommand\eps{\ensuremath{\epsilon}}
\renewcommand\t{{\ensuremath{\scriptscriptstyle{\top}}}}
\DeclareMathOperator{\Diag}{Diag}
\DeclareMathOperator{\var}{var}
\DeclareMathOperator{\Var}{Var}
\newcommand\wt{\ensuremath{\widetilde}}
\newcommand\wh{\ensuremath{\widehat}}
\renewcommand\v{\ensuremath{\boldsymbol}}
\newcommand\parens[1]{(#1)}
\newcommand\norm[1]{\|#1\|}
\newcommand\braces[1]{\{#1\}}
\newcommand\brackets[1]{[#1]}
\newcommand\ceil[1]{\lceil#1\rceil}
\newcommand\abs[1]{|#1|}
\newcommand\dotp[1]{\langle #1 \rangle}
\newcommand\Parens[1]{\left(#1\right)}
\newcommand\Norm[1]{\left\|#1\right\|}
\newcommand\Braces[1]{\left\{#1\right\}}
\newcommand\Brackets[1]{\left[#1\right]}
\newcommand\Ceil[1]{\left\lceil#1\right\rceil}
\newcommand\Floor[1]{\left\lfloor#1\right\rfloor}
\newcommand\Abs[1]{\left|#1\right|}
\newcommand\Ind[1]{\ensuremath{\mathds{1}\left\{#1\right\}}}
\newcommand\pimin{\ensuremath{\pi_{\star}}}
\newcommand\gap{\ensuremath{\gamma_{\star}}}
\newcommand\slem{\ensuremath{\lambda_{\star}}}
\newcommand\hatpimin{\ensuremath{\hat\pi_{\star}}}
\newcommand\hatgap{\ensuremath{\hat\gamma_{\star}}}
\newcommand\Geom{\ensuremath{\operatorname{Geom}}}
\newcommand\Sym{\ensuremath{\operatorname{Sym}}}
\newcommand\errm{\ensuremath{\v{\cE}_{\vM}}}
\newcommand\errp{\ensuremath{\v{\cE}_{\vpi}}}
\newcommand\errP{\ensuremath{\v{\cE}_{\vP}}}
\newcommand\errpil{\ensuremath{\v{\cE}_{\vpi,1}}}
\newcommand\errpir{\ensuremath{\v{\cE}_{\vpi,2}}}
\newcommand\tv{\ensuremath{\operatorname{tv}}}
\newcommand\tmix{\ensuremath{t_{\operatorname{mix}}}}
\newcommand\trelax{\ensuremath{t_{\operatorname{relax}}}}
\newcommand{\iset}[1]{[#1]}
\newcommand{\opencloseint}[1]{\ensuremath{(#1]}}
\newcommand{\Dvpi}{\Diag(\vpi)}
\newcommand{\Dvpit}{\Diag(\vpi^{(t)})}
\newcommand{\EE}[1]{\bbE\left(#1\right)}
\newcommand\giAh{\ensuremath{\widehat{\boldsymbol{A}}^{\raisebox{-2.9pt}{$\scriptstyle\#$}}}}
\newcommand\giA{\ensuremath{\boldsymbol{A}^{\raisebox{-0pt}{$\scriptstyle\#$}}}}
\newcommand{\defeq}{:=}
\newcommand\emptail{\ensuremath{\tau_{n,\delta}}}
\newcommand{\st}{\ensuremath{\mathrm{s.t.}}}
\renewcommand\citet\cite
\renewcommand\citep\cite
\title{%
  Mixing Time Estimation in Reversible Markov Chains from a Single
  Sample Path%
}
\author{%
  Daniel Hsu \\
  Columbia University \\
  \url{djhsu@cs.columbia.edu}
  \and
  Aryeh Kontorovich \\
  Ben-Gurion University \\
  \url{karyeh@cs.bgu.ac.il}
  \and
  Csaba Szepesv\'ari \\
  University of Alberta \\
  \url{szepesva@cs.ualberta.ca}
}
\begin{document}
\maketitle

\begin{abstract} 
This article provides the first procedure for computing a fully
data-dependent interval that traps the mixing time
$t_{\text{mix}}$ of a finite
reversible ergodic Markov chain at a prescribed confidence level.  The
interval is computed from a single finite-length sample path from the
Markov chain, and does not require the knowledge of any parameters of
the chain.  This stands in contrast to previous approaches, which
either only provide point estimates, or require a reset mechanism, or
additional prior knowledge.
The interval is constructed around the relaxation time
$t_{\text{relax}}$, which is strongly related to the mixing time, and
the width of the interval converges to zero roughly
at a $\sqrt{n}$ rate, where $n$ is the length of the sample path.
Upper and lower bounds are given on the number of samples required to
achieve constant-factor multiplicative accuracy.  The lower bounds
indicate that, unless further restrictions are placed on the chain, no
procedure can achieve this accuracy level before seeing each state at
least $\Omega(t_{\text{relax}})$ times on the average.  Finally, future
directions of research are identified.

\end{abstract} 

\section{Introduction}\label{sec:intro}
This work
tackles the challenge of constructing
fully empirical bounds on the
mixing time of Markov chains based on a single sample path.
Let $(X_t)_{t=1,2,\dotsc}$ be an irreducible, aperiodic
time-homogeneous Markov chain on a finite state space $[d] :=
\{1,2,\dotsc,d\}$ with transition matrix $\vP$.
Under this assumption, the chain converges to its unique stationary
distribution $\vpi =
(\pi_i)_{i=1}^d$ regardless of the initial state distribution $\vq$: 
\[
  \lim_{t\to\infty} {\Pr}_{\vq}\Parens{X_t = i}
  = \lim_{t\to\infty} (\vq \vP^t)_i = \pi_i
  \quad \text{for each $i \in [d]$} .
\]
The \emph{mixing time} $\tmix$ of the Markov chain is the
number of time steps required
for the chain to
be within a fixed threshold of
its stationary
distribution:
\begin{align}
\label{eq:mixtimedef}
  \tmix
  :=
  \min\Braces{
    t \in \bbN :
    \sup_{\vq}
    \max_{A\subset [d]}
    \Abs{
      \textstyle\Pr_{\vq}\Parens{ X_t \in A } - \vpi(A)
    }
    \leq 1/4
  }\,.
\end{align}
Here, $\vpi(A) = \sum_{i\in A} \pi_i$ is the probability assigned to
set $A$ by $\vpi$, and the supremum is over all possible initial
distributions $\vq$.
The problem studied in this work is the construction of a non-trivial
confidence interval $C_n = C_n(X_1,X_2,\dotsc,X_n,\delta) \subset
[0,\infty]$, based only on the observed sample path
$(X_1,X_2,\dotsc,X_n)$ and $\delta \in (0,1)$, that
succeeds with probability $1-\delta$
in trapping
the value of the
mixing time $\tmix$.

This problem is motivated by the numerous scientific applications and
machine learning tasks in which
the quantity of interest is
the
mean $\vpi(f) = \sum_i \pi_i f(i)$
for some function $f$ of the states of
a Markov chain.
This is the setting of the celebrated Markov Chain Monte Carlo (MCMC)
paradigm~\cite{liu2001monte}, but the problem also arises in
performance prediction involving time-correlated data, as is common in
reinforcement learning~\cite{sutton98}.
Observable bounds on mixing times are useful in the design and
diagnostics of these methods; they yield effective approaches to
assessing the estimation quality, even when \emph{a priori} knowledge
of the mixing time or correlation structure is unavailable.

\paragraph{Main results.}
We develop the first procedure for constructing non-trivial
and fully empirical confidence intervals for Markov mixing time.
Consider a reversible ergodic Markov chain on $d$ states with absolute
spectral gap $\gap$ and stationary distribution minorized by $\pimin$. 
As is well-known \citep[Theorems~12.3 and~12.4]{LePeWi08},
\begin{equation}
  \label{eq:mixing-time-bound}
  \Parens{\trelax - 1} \ln2
  \ \leq \ \tmix
  \ \leq \ \trelax \ln \mfrac4{\pimin}
\end{equation}
where $\trelax := 1/\gap$ is the \emph{relaxation time}.
Hence, it suffices to estimate $\gap$ and $\pimin$. 
Our main results are summarized as follows.
\begin{enumerate}
  \item
    In \cref{sec:rates-lower}, we show that in some problems 
    $n = \Omega\parens{ (d\log d)/\gap + 1/\pimin}$ observations are necessary for any procedure
    to guarantee constant multiplicative accuracy in estimating $\gap$
    (\cref{thm:lb-pimin,thm:lb-gap}).
    Essentially, in some problems \emph{every} state may need to be visited about
    $\log(d)/\gap$ times, on average, before an accurate estimate of
    the mixing time can be provided, regardless of the actual estimation
    procedure used.

  \item
    In \cref{sec:rates-upper}, we give a point-estimator for $\gap$,
    and prove in \cref{thm:err} that it achieves multiplicative
    accuracy from \emph{a single sample path} of length
    $\tilde{O}\parens{1/(\pimin\gap^3)}$.\footnote{The
    $\tilde{O}(\cdot)$ notation suppresses logarithmic factors.}
    We also provide a point-estimator for $\pimin$ that requires a sample
    path of length $\tilde{O}\parens{1/(\pimin\gap)}$.
    This establishes the feasibility of \emph{estimating} the mixing
    time in this setting.
    However, the valid confidence intervals suggested by
    \cref{thm:err} depend on the unknown quantities $\pimin$ and
    $\gap$.
    We also discuss the importance of reversibility, and some possible
    extensions to non-reversible chains.

  \item
    In \cref{sec:empirical}, the construction of
    valid \emph{fully empirical confidence intervals} for $\pimin$ and
    $\gap$ are considered. First, the difficulty of the task is explained,
    i.e., why the standard approach of turning the finite time confidence intervals of
    \cref{thm:err} into a fully empirical one fails.
	Combining several results from perturbation theory in a novel fashion
	we propose a new procedure and prove that it avoids slow convergence
    (\cref{thm:empirical}).
    We also explain how to combine the empirical confidence intervals
    from \cref{alg:empest} with the non-empirical bounds from
    \cref{thm:err} to produce valid empirical confidence intervals.
    We prove in \cref{thm:combined} that the width of these new
    intervals converge to zero asymptotically at least as fast as
    those from either \cref{thm:err} and \cref{thm:empirical}.

\end{enumerate}

\paragraph{Related work.}
There is a vast statistical literature on estimation in Markov chains.
For instance, it is known that under the assumptions on
$(X_t)_t$ from above, the law of large numbers guarantees that
the sample mean $\vpi_n(f) \defeq \frac1n \sum_{t=1}^n f(X_t)$
converges almost surely to $\vpi(f)$~\cite{meyn1993markov}, while the
central limit theorem tells us that as $n\to \infty$, the distribution
of the deviation $\sqrt{n}( \vpi_n(f)-\vpi(f))$ will be normal with
mean zero and asymptotic variance $\lim_{n\to\infty}
n\Var\Parens{\vpi_n(f)}$~\cite{kipnis1986central}.

Although these asymptotic results help us understand the limiting
behavior of the sample mean over a Markov chain, they say little about
the finite-time non-asymptotic behavior, which is often needed for the
prudent evaluation of a method or even its algorithmic design
\cite{
MCMCDiscussion93%
,DBLP:conf/valuetools/KontoyiannisLM06%
,BBL06%
,MniSzeAu08%
,MauPo09%
,LiLiWaSt11:KWIK%
,flegal2011implementing%
,Gyori-paulin15%
,SwaJoa15:LoggedBandit%
}.
To address this need, numerous works have developed Chernoff-type
bounds on $\Pr\parens{ |\vpi_n(f)-\vpi(f)| > \eps }$, thus providing
valuable tools for non-asymptotic probabilistic
analysis~\cite{gillman1998chernoff,leon2004optimal,DBLP:conf/valuetools/KontoyiannisLM06,
paulin15}.
These probability bounds are larger than corresponding bounds for
independent and identically distributed (iid) data due to the temporal
dependence; intuitively, for the Markov chain to yield a fresh draw
$X_{t'}$ that behaves as if it was independent of $X_t$, one must wait
$\Theta(\tmix)$ time steps.
Note that the bounds generally depend on distribution-specific
properties of the Markov chain (e.g., $\vP$, $\tmix$, $\gap$), which are often
unknown \emph{a priori} in practice.
Consequently, much effort has been put towards estimating these
unknown quantities, especially in the context of MCMC diagnostics, in
order to provide data-dependent assessments of estimation
accuracy~\cite[e.g.,][]{MCMCDiscussion93,GaSmi00:eigval,jones2001,flegal2011implementing,1209.0703,Gyori-paulin15}.
However, these approaches generally only provide asymptotic
guarantees, and hence fall short of our goal of empirical bounds that
are valid with any finite-length sample path.

Learning with dependent data is another main motivation to our work.
Many results from statistical learning and empirical process theory
have been extended to sufficiently fast mixing, dependent
data~\citep[e.g.,][]{Yu94,MR1921877,gamarnik03,MoRo08,MoRo09,DBLP:conf/nips/SteinwartC09,Steinwart2009175},
providing learnability assurances (e.g., generalization error bounds).
These results are often given in terms of mixing coefficients, which
can be consistently estimated in some cases~\citet{McDoShaSche11}.
However, the convergence rates of the estimates
from~\citet{McDoShaSche11}, which are needed to derive confidence
bounds, are given in terms of unknown mixing coefficients.
When the data comes from a Markov chain, these mixing coefficients can
often be bounded in terms of mixing times, and hence our main results
provide a way to make them fully empirical, at least in the limited setting we study.

It is possible to eliminate many of the difficulties presented above
when allowed more flexible access
to the Markov chain.
For example, given a sampling oracle that generates
independent transitions from any given state (akin to a ``reset''
device), the mixing time becomes an efficiently testable property in
the sense studied in~\citet{BaFoRuSmiWhi00,BaFoRuSmiWhi13}.
On the other hand, when one only has a circuit-based description of
the transition probabilities of a Markov chain over an
exponentially-large state space, there are complexity-theoretic
barriers for many MCMC diagnostic problems~\citet{BhaBoMo11}.

\section{Preliminaries}\label{sec:prelim}
\subsection{Notations}
\label{sec:notation}

We denote the set of positive integers by $\bbN$, and
the set of the first $d$ positive integers $\{1,2,\dotsc,d\}$ by $\iset{d}$.
The non-negative part of a real number $x$ is $[x]_+ := \max\{0,x\}$,
and $\ceil{x}_+ := \max\{0,\ceil{x}\}$.
We use $\ln(\cdot)$ for natural logarithm, and $\log(\cdot)$ for
logarithm with an arbitrary constant base.
Boldface symbols are used for vectors and matrices (e.g., $\vv$,
$\vM$), and their entries are referenced by subindexing (e.g., $v_i$,
$M_{i,j}$).
For a vector $\vv$, $\norm{\vv}$ denotes its Euclidean norm; for a
matrix $\vM$, $\norm{\vM}$ denotes its spectral norm.
We use $\Diag(\vv)$ to denote the diagonal matrix whose $(i,i)$-th
entry is $v_i$.
The probability simplex is denoted by $\Delta^{d-1} = \{ \vp
\in [0,1]^d : \sum_{i=1}^d p_i = 1 \}$, and we regard vectors in
$\Delta^{d-1}$ as row vectors.

\subsection{Setting}
\label{sec:setting}

Let $\vP \in (\Delta^{d-1})^d \subset [0,1]^{d \times d}$ be a $d
\times d$ row-stochastic matrix for an ergodic (i.e., irreducible and
aperiodic) Markov chain.
This implies there is a unique stationary distribution $\vpi \in
\Delta^{d-1}$ with $\pi_i > 0$ for all $i \in [d]$~\citep[Corollary
1.17]{LePeWi08}.
We also assume that $\vP$ is \emph{reversible} (with respect to
$\vpi$):
\begin{align}
\label{eq:reversibility}
  \pi_i P_{i,j} = \pi_j P_{j,i} ,
  \quad i,j \in [d] .
\end{align}
The minimum stationary probability is denoted by $\pimin := \min_{i
\in [d]} \pi_i$.

Define the matrices
\begin{align*}
\vM := \Diag(\vpi) \vP \quad \text{and} \quad
\vL := \Diag(\vpi)^{-1/2} \vM \Diag(\vpi)^{-1/2}\,.
\end{align*}
The $(i,j)$th entry of the matrix $M_{i,j}$ contains the \emph{doublet
probabilities} associated with $\vP$: $M_{i,j} = \pi_i P_{i,j}$ is the
probability of seeing state $i$ followed by state $j$ when the chain
is started from its stationary distribution.
The matrix $\vM$ is symmetric on account of the reversibility of
$\vP$, and hence it follows that $\vL$ is also symmetric.
(We will strongly exploit the symmetry in our results.)
Further, $\vL = \Diag(\vpi)^{1/2} \vP \Diag(\vpi)^{-1/2}$, hence $\vL$
and $\vP$ are similar and thus their eigenvalue systems are identical.
Ergodicity and reversibility imply that the eigenvalues of $\vL$ are
contained in the interval $\opencloseint{-1,1}$, and that $1$ is an
eigenvalue of $\vL$ with multiplicity $1$~\citep[Lemmas 12.1 and
12.2]{LePeWi08}.
Denote and order the eigenvalues of $\vL$ as
\[
  1 = \lambda_1 > \lambda_2 \geq \dotsb \geq \lambda_d > -1 .
\]
Let $\slem := \max\{ \lambda_2,\, |\lambda_d| \}$, and define the
(absolute) spectral gap to be $\gap := 1-\slem$, which is strictly
positive on account of ergodicity.

Let $(X_t)_{t\in\bbN}$ be a Markov chain whose transition
probabilities are governed by $\vP$.
For each $t \in \bbN$, let $\vpi^{(t)} \in \Delta^{d-1}$ denote the
marginal distribution of $X_t$, so
\[
  \vpi^{(t+1)} = \vpi^{(t)} \vP ,
  \quad t \in \bbN .
\]
Note that the initial distribution $\vpi^{(1)}$ is arbitrary,
and need not be the stationary distribution $\vpi$.

The goal is to estimate $\pimin$ and $\gap$ from the length $n$ sample
path $(X_t)_{t \in [n]}$, and also to construct fully empirical
confidence intervals that $\pimin$ and $\gap$ with high probability;
in particular, the construction of the intervals should not depend on
any unobservable quantities, including $\pimin$ and $\gap$ themselves.
As mentioned in the introduction,
it is well-known that the \emph{mixing time} of the Markov chain
$\tmix$ (defined in Eq.~\ref{eq:mixtimedef})
is bounded in terms of $\pimin$ and $\gap$, as shown in
\cref{eq:mixing-time-bound}.
Moreover, convergence rates for empirical processes on Markov chain
sequences are also often given in terms of mixing coefficients that
can ultimately be bounded in terms of $\pimin$ and
$\gap$ (as we will show in the proof of our first result).
Therefore, valid confidence intervals for $\pimin$ and $\gap$ can be
used to make these rates fully observable.

\section{Point estimation}\label{sec:rates}
In this section, we present lower and upper bounds on achievable rates
for estimating the spectral gap as a function of the length of the
sample path $n$.

\subsection{Lower bounds}
\label{sec:rates-lower}
The purpose of this section is to show lower bounds on the number of observations
necessary to achieve a fixed multiplicative (or even just additive) accuracy in estimating the spectral gap $\gap$.
By \cref{eq:mixing-time-bound}, the multiplicative accuracy lower bound for $\gap$
gives the same lower bound for estimating the mixing time.
Our first result holds even for two state Markov chains and shows that a sequence length of $\Omega(1/\pimin)$
is necessary to achieve even a constant \emph{additive} accuracy in estimating $\gap$.
\begin{theorem}
  \label{thm:lb-pimin}
  Pick any $\bar\pi \in (0,1/4)$.
  Consider any estimator $\hatgap$ that takes as input a random sample
  path of length $n \leq 1/(4\bar\pi)$ from a Markov chain starting
  from any desired initial state distribution.
  There exists a two-state ergodic and reversible Markov chain
  distribution with spectral gap $\gap \geq 1/2$ and minimum
  stationary probability $\pimin \geq \bar\pi$ such that
  \[
    \Pr\Brackets{ |\hatgap - \gap| \geq 1/8 } \geq 3/8 .
  \]
\end{theorem}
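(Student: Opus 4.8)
The plan is to construct two two-state Markov chains that are nearly indistinguishable from a sample path of length $n \le 1/(4\bar\pi)$, yet have spectral gaps that differ by at least $1/4$. A two-state reversible chain is parametrized by its two off-diagonal entries; take the chain with transition matrix $\vP_p = \begin{psmallmatrix} 1-p & p \\ p & 1-p \end{psmallmatrix}$ for a symmetric choice first, whose stationary distribution is uniform and whose eigenvalues are $1$ and $1-2p$, so $\gap = 2p$. To make $\pimin$ small we instead use the asymmetric family $\vP_{a,b} = \begin{psmallmatrix} 1-a & a \\ b & 1-b \end{psmallmatrix}$ with stationary distribution $\vpi = (b,a)/(a+b)$, eigenvalues $1$ and $1-a-b$, hence $\gap = a+b$ and $\pimin = \min\{a,b\}/(a+b)$. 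I would pick $a = b$ held at a constant so that $\gap = 2a \ge 1/2$ in both candidate chains, and vary the parameter only through a second, rare ``escape'' probability — but the cleanest route is: fix one chain to have $\pimin$ exactly $\bar\pi$ by setting the small transition probability to roughly $\bar\pi$ times a constant, and build a second chain by perturbing that small probability so that the gap moves by $\ge 1/4$. The key arithmetic observation is that when the small transition probability is of order $\bar\pi$, then over $n \le 1/(4\bar\pi)$ steps the chain started from the heavy state has constant probability of never once leaving that state, and on that event the sample path carries zero information distinguishing the two candidates.

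Concretely, I would let the adversary start the chain from the state $i^\star$ with $\pi_{i^\star} = 1 - \pimin \ge 3/4$. Let $E$ be the event that $X_1 = X_2 = \dots = X_n = i^\star$, i.e. the chain never transitions out of the heavy state during the observation window. If the per-step escape probability is $q \le 4\bar\pi$ and $n \le 1/(4\bar\pi)$, then $\Pr(E) \ge (1-q)^{n-1} \ge (1 - 4\bar\pi)^{1/(4\bar\pi)}$, which for $\bar\pi < 1/4$ is bounded below by a constant (roughly $e^{-1}$, and by taking the constants a little more carefully one can push $\Pr(E)$ above, say, $3/4$ — or absorb the slack into the final $3/8$). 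Crucially, the two candidate chains can be chosen to agree on the escape probability from $i^\star$ but differ in the return probability from the other state, so the law of the sample path conditioned on $E$ is identical under both chains. The standard two-point Le Cam argument then applies: on $E$ the estimator $\hatgap$ is a fixed random variable whose distribution does not depend on which of the two chains generated the data, so $\hatgap$ cannot be within $1/8$ of both $\gap^{(1)}$ and $\gap^{(2)}$ when $|\gap^{(1)} - \gap^{(2)}| \ge 1/4$; hence under at least one of the two chains, $\Pr(|\hatgap - \gap| \ge 1/8 \mid E) = 1$, giving $\Pr(|\hatgap - \gap| \ge 1/8) \ge \Pr(E) \ge 3/8$.

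The one point requiring care is that I need \emph{both} candidate chains to simultaneously satisfy $\gap \ge 1/2$ and $\pimin \ge \bar\pi$ while still differing in gap by $\ge 1/4$ and sharing the escape probability from $i^\star$. This is a matter of solving the two-parameter constraints: with $\vP_{a,b}$, the escape probability from state $1$ is $a$, the gap is $a+b$, and $\pi_1 = b/(a+b)$; so fix $a$ to a small value proportional to $\bar\pi$ ensuring $\pi_2 = a/(a+b) \ge \bar\pi$, and then choose $b^{(1)}, b^{(2)} \in [1/2, 3/4]$ with $b^{(2)} - b^{(1)} \ge 1/4$, which forces $\gap^{(j)} = a + b^{(j)} \ge 1/2$ and keeps $\pimin^{(j)} = a/(a+b^{(j)}) \ge \bar\pi$ provided $a$ is chosen as $\Theta(\bar\pi)$ with the right constant. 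I would start the chain from state $1$ (the heavy state, since $b^{(j)}$ large means $\pi_1$ large), so that $E = \{X_t = 1 \text{ for all } t \le n\}$ and the only transition probability ever exercised on $E$ is $P_{1,1} = 1-a$, common to both chains. The main obstacle is just bookkeeping the constants so that the lower bound on $\Pr(E)$ comes out to at least $3/8$ for every $\bar\pi \in (0,1/4)$; this is elementary but is where the precise numerical claim lives, and may require tightening ``never leaves $i^\star$'' to a slightly larger event (e.g. ``at most one transition, and both candidates assign it equal probability'') if the bare product bound is too lossy.
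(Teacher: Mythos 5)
Your construction and argument are essentially the paper's: two two-state reversible chains that agree on the transition row out of the heavy state but have spectral gaps separated by at least $1/4$, together with the observation that a path of length $n \leq 1/(4\bar\pi)$ started in the heavy state never leaves it with probability at least $3/4$, on which event the data is identical under both chains and a two-point argument finishes. The paper's specific matrices ($\vP^{(1)}$ with both rows equal to $(1-\bar\pi,\bar\pi)$, gap $1$, versus $\vP^{(2)}$ with second row $(1/2,1/2)$, gap $1/2+\bar\pi$) are an instance of your parametric family with the escape probability fixed at $a=\bar\pi$ and only the return probability varied, so that difference is cosmetic. Your constant bookkeeping also works out provided you take $a=\bar\pi$: then $(1-\bar\pi)^{n-1} \geq (1-\bar\pi)^{1/(4\bar\pi)} \geq 3/4$ for all $\bar\pi \in (0,1/4)$, whereas the looser $q \leq 4\bar\pi$ estimate you quote gives only about $e^{-1}$, which falls just short of $3/8$.

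The one genuine gap is your treatment of the initial distribution. The theorem quantifies over estimators that may request \emph{any} initial state distribution, so the lower bound must hold no matter where the chain starts; you instead let the instance-constructor fix the start at the heavy state (``I would let the adversary start the chain from $i^\star$''). If the estimator starts the chain at the light state, your event $E$ has negligible probability and, worse, the first few transitions are governed by exactly the row in which the two candidate chains differ, so information does leak and the conditional-indistinguishability step fails as stated. The paper repairs this by assuming without loss of generality that the requested initial distribution places mass at least $1/2$ on one state, swapping the roles of the two states in the construction if necessary so that this state is the heavy one, and paying a factor of $1/2$; the theorem's constant $3/8$ is precisely $(1/2)\cdot(3/4)$. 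Your construction admits the same fix, but as written the argument does not cover adversarially chosen initial distributions. A minor further point: with $b^{(2)}-b^{(1)}=1/4$ your two gaps are separated by exactly $1/4$, which suffices for the non-strict conclusion $|\hatgap-\gap|\geq 1/8$ but leaves no slack; the paper's separation is $1/2-\bar\pi>1/4$.
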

Next, considering $d$ state chains, we show that 
a sequence of length $\Omega(d\log(d)/\gap)$ is required
to estimate $\gap$ up to a constant multiplicative accuracy.
Essentially, the sequence may have to visit all $d$ states at least
$\log(d)/\gap$ times each, on average.
This holds \emph{even} if $\pimin$ is within a factor of two of the
\emph{largest} possible value of $1/d$ that it can take, i.e., when
$\vpi$ is nearly uniform.
\begin{theorem}
  \label{thm:lb-gap}
  There is an absolute constant $c>0$ such that the following holds.
  Pick any positive integer $d \geq 3$ and any $\bar\gamma \in
  (0,1/2)$.
  Consider any estimator $\hatgap$ that takes as input a random sample
  path of length $n < c d\log(d) / \bar\gamma$ from a $d$-state
  reversible Markov chain starting from any desired initial state
  distribution.
  There is an ergodic and reversible Markov chain distribution
  with spectral gap $\gap \in [\bar\gamma,2\bar\gamma]$ and minimum
  stationary probability $\pimin \geq 1/(2d)$ such that
  \[
    \Pr\Brackets{ |\hatgap - \gap| \geq \bar\gamma/2} \geq 1/4 .
  \]
\end{theorem}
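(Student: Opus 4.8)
The plan is to prove \cref{thm:lb-gap} by a reduction to hypothesis testing of the Le~Cam ``one versus many'' type. Fix $\bar\gamma\in(0,1/2)$ and $d\ge 3$. I would build a family $\{\vP^{(0)}\}\cup\{\vP^{(k)}\}_{k\in[d]}$ of reversible ergodic chains on $[d]$, each with $\pimin\ge 1/(2d)$ and absolute spectral gap in $[\bar\gamma,2\bar\gamma]$, such that the ``null'' chain $\vP^{(0)}$ has gap $\gap_0=2\bar\gamma$ while every ``alternative'' $\vP^{(k)}$ has gap $\gap_1=\bar\gamma$, with $\vP^{(k)}$ differing from $\vP^{(0)}$ only in the outgoing transitions of state $k$. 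A natural candidate is to take $\vP^{(0)}$ to be a nearly-uniform lazy chain (so $\pimin\approx 1/d$) and obtain $\vP^{(k)}$ by a small, reversibility-preserving, state-$k$-localized modification that slows the chain down by a $\Theta(\bar\gamma)$ amount. Since $|\gap_0-\gap_1|=\bar\gamma$, no single value of $\hatgap$ can lie within $\bar\gamma/2$ of both gaps; hence if $\hatgap$ satisfied $\Pr[\,|\hatgap-\gap|\ge\bar\gamma/2\,]<1/4$ on every member of the family, then thresholding $\hatgap$ at $(\gap_0+\gap_1)/2$ would yield a test distinguishing $\vP^{(0)}$ from each $\vP^{(k)}$ with error probability $<1/4$, and therefore --- averaging over $k$ chosen uniformly at random --- distinguishing $\vP^{(0)}$ from the mixture $\tfrac1d\sum_k\vP^{(k)}$ with total advantage exceeding $1/2$. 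This forces the length-$n$ sample-path laws (started from whatever initial distribution $\hatgap$ uses) to satisfy $\tv\big(\cL(X_1^n\mid\vP^{(0)}),\,\tfrac1d\sum_k\cL(X_1^n\mid\vP^{(k)})\big)>1/2$. So it suffices to show this total variation distance is below $1/2$ whenever $n<c\,d\log d/\bar\gamma$ for a suitable absolute constant $c$.

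I would prove that total-variation bound through the $\chi^2$ divergence of the mixture against $P:=\cL(X_1^n\mid\vP^{(0)})$. Writing $L_k:=\mathrm dQ_k/\mathrm dP$ for $Q_k:=\cL(X_1^n\mid\vP^{(k)})$, the fact that only row $k$ is perturbed gives $L_k=\prod_{t:\,X_t=k}\rho_k(X_{t+1})$ with $\rho_k(j)=P^{(k)}_{k,j}/P^{(0)}_{k,j}$; and since, under $P$, the reweightings out of two distinct states $k\ne k'$ are each conditionally mean one given the past, a short martingale computation shows $\bbE_P[L_kL_{k'}]=1$ for $k\ne k'$. Consequently $\chi^2\big(\tfrac1d\sum_kQ_k\,\|\,P\big)=\tfrac1d\cdot\tfrac1d\sum_k\big(\bbE_P[L_k^2]-1\big)$: the $d$ localized perturbations point in ``orthogonal directions,'' which is exactly what buys the extra factor $1/d$. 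Each visit to state $k$ multiplies $L_k^2$ by a factor whose conditional expectation is $1+\Theta(\bar\gamma)$ (the $\chi^2$-discrepancy of row $k$), so $\bbE_P[L_k^2]$ behaves like $e^{\Theta(\bar\gamma\, N_k)}$ with $N_k$ the number of visits to $k$, whose mean under $P$ is $\approx n/d$; together this gives $\chi^2\lesssim \tfrac1d e^{\Theta(\bar\gamma n/d)}$, which is $o(1)$ precisely when $n=o(d\log d/\bar\gamma)$. This is the origin of the $\log d$ factor, and it matches the informal coupon-collector picture: ruling out a ``needle'' hidden at one of $d$ states costs a $\log d$ overhead on top of the $1/\bar\gamma$ visits per state needed to test a single known candidate, so the path must see each state on the order of $\log d/\bar\gamma$ times.

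The step I expect to be the main obstacle is turning ``$\bbE_P[L_k^2]\approx e^{\Theta(\bar\gamma N_k)}$'' into a rigorous inequality with the right exponent. The count $N_k$ is itself random, the per-visit factors are dependent, and --- most delicately --- a long consecutive sojourn at state $k$ can compound the factor $\rho_k(k)>1$; a crude moment-generating-function bound replaces $\bbE_P[N_k]\approx n/d$ by the full horizon $n$ and loses the $d$ in the exponent, so the bound must be carried out with the chain's stationary weight $\pi_k$ and relaxation time controlled carefully, and uniformly over the adversary's choice of initial distribution (which may put all its mass on $k$). Getting this to close with the sharp $d\log d/\bar\gamma$ threshold is what most constrains the hard instances: the perturbation defining $\vP^{(k)}$ must simultaneously shift the gap by a full $\Theta(\bar\gamma)$, keep $\pimin\ge 1/(2d)$, preserve reversibility, and yet not make state $k$ so sticky that its occupation-time moments blow up --- balancing these requirements against a clean $\chi^2$ estimate is where the real work of the proof lies, and it is likely what dictates the precise form of the extremal chains.
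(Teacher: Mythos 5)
Your reduction (a null chain with gap $2\bar\gamma$ and $d$ alternatives with gap $\bar\gamma$, each differing from the null only in one row, so that distinguishing forces a test of $\vP^{(0)}$ against the uniform mixture of the $\vP^{(k)}$) matches the paper's choice of hard instances: the paper uses $P_{i,j}=1-\veps_i$ on the diagonal and $\veps_i/(d-1)$ off it, with $\veps_k$ lowered in $\vP^{(k)}$, which realizes exactly the localized, reversibility-preserving, gap-halving perturbation you postulate while keeping $\pimin\ge 1/(2d)$. Where you genuinely diverge is in how indistinguishability is proved. You go through the $\chi^2$ divergence of the mixture, using the correct and nice observation that $\bbE_P[L_kL_{k'}]=1$ for $k\ne k'$ so that only the diagonal terms $\frac{1}{d^2}\sum_k(\bbE_P[L_k^2]-1)$ survive. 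The paper instead exploits that the likelihood ratio $L_k$ is \emph{identically} $1$ on any path that never leaves state $k$'s row unused, i.e.\ never visits $k$; hence no estimator can be simultaneously accurate for $\vP^{(0)}$ and all $\vP^{(k)}$ unless the path visits every state, and the proof reduces to showing that the cover time (which stochastically dominates a sum of independent $\Geom(\veps(d-i)/(d-1))$ variables with mean $\approx (d/\veps)\log d$) exceeds $n$ with probability $\ge 1/4$, via Paley--Zygmund. That argument needs only first and second moments of the cover time and is uniform over initial distributions for free.

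The gap in your proposal is precisely the step you flag: you never establish $\bbE_P[L_k^2]\le e^{O(\bar\gamma n/d)}$, and this is the entire quantitative content of the theorem in your formulation. The difficulty is real, not cosmetic. The per-visit factor has conditional second moment $1+\beta$ with $\beta=\Theta(\bar\gamma)$, the holding probability at $k$ is $1-\veps$ with $\veps=\Theta(\bar\gamma)$, so a single sojourn at $k$ contributes $\bbE[(1+\beta)^{G}]$ with $G\sim\Geom(\veps)$ --- a quantity that is finite only because $\beta<\veps/(1-\veps)$, i.e.\ you are operating at the boundary where the occupation-time moment generating function barely converges; moreover the factors at consecutive times are correlated through the self-loop, and the adversary may start at $k$. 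None of this is fatal (the numbers do work out to $\bbE_P[L_k^2]=e^{\Theta(n\bar\gamma/d)}$, hence $\chi^2=o(1)$ iff $n=o(d\log d/\bar\gamma)$, recovering the same threshold), but carrying it out requires a careful Markov-chain MGF estimate that your write-up does not supply. The lesson from the paper's proof is that this machinery can be bypassed entirely: because the perturbations are supported on single rows, ``$L_k\equiv 1$ on paths avoiding $k$'' upgrades the soft orthogonality $\bbE_P[L_kL_{k'}]=1$ to an exact combinatorial statement, and the $\log d$ then comes from the coupon-collector mean rather than from balancing $\frac1d e^{\Theta(\bar\gamma n/d)}$ against $1$. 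As written, your argument is an incomplete (though salvageable) alternative; to count as a proof it needs the second-moment bound made rigorous, uniformly over the initial distribution.
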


The proofs of \cref{thm:lb-pimin,thm:lb-gap} are given in
\cref{app:lower}. 

\subsection{A plug-in based point estimator and its accuracy}
\label{sec:rates-upper}
Let us now consider the problem of estimating $\gap$.
For this, we construct a natural plug-in estimator.
Along the way, we also provide an estimator for the minimum stationary probability,
allowing one to use the bounds from \cref{eq:mixing-time-bound} to trap
the mixing time.

Define the random matrix $\wh\vM \in [0,1]^{d \times d}$ and random
vector $\hat\vpi \in \Delta^{d-1}$ by
\begin{align*}
  \wh{M}_{i,j}
  & := \frac{|\{ t \in [n-1] : (X_t,X_{t+1}) = (i,j) \}|}{n-1}
  , \quad i,j \in [d]\,,
  \\
  \hat{\pi}_i
  & := \frac{|\{ t \in [n] : X_t = i \}|}{n}
  , \quad i \in [d]
  \,.
\end{align*}
Furthermore, define
\[
  \Sym(\wh\vL) := \frac12 \parens{ \wh\vL + \wh\vL^\t }
\]
to be the symmetrized version of the (possibly non-symmetric) matrix
\[
  \wh\vL := \Diag(\hat\vpi)^{-1/2} \wh\vM \Diag(\hat\vpi)^{-1/2}
  .
\]
Let $\hat\lambda_1 \geq \hat\lambda_2 \geq \dotsb \geq \hat\lambda_d$
be the eigenvalues of $\Sym(\wh\vL)$.
Our estimator of the minimum stationary probability $\pimin$ is
\[
  \hatpimin := \min_{i \in [d]} \hat\pi_i ,
\]
and our estimator of the spectral gap $\gap$ is
\[
  \hatgap := 1 - \max\{ \hat\lambda_2, |\hat\lambda_d| \} .
\]

These estimators have the following accuracy guarantees:
\begin{theorem}
  \label{thm:err}
  There exists an absolute constant $C>0$ such that the following
  holds.
  Assume the estimators $\hatpimin$ and $\hatgap$ described above are
  formed from a sample path of length $n$ from an ergodic and
  reversible Markov chain.
  Let $\gap>0$ denote the spectral gap and $\pimin>0$ the minimum
  stationary probability.
  For any $\delta \in (0,1)$, with probability at least $1-\delta$,
  \begin{equation}\label{eq:piminbound}
    \Abs{\hatpimin-\pimin}
    \le
    C \,
    \Parens{
      \sqrt{\frac{\pimin\log\frac{d}{\pimin\delta}}{\gap n}}
      +
      \frac{\log\frac{d}{\pimin\delta}}{\gap n}
    }
  \end{equation}
  and
  \begin{equation}\label{eq:gapbound}
    \Abs{\hatgap-\gap}
    \leq
    C \,
    \Parens{
      \sqrt{\frac{\log\frac{d}{\delta}\cdot\log\frac{n}{\pimin\delta}}{\pimin\gap n}}
      + \frac{\log\frac{1}{\gap}}{\gap n}  
    }
    \,.
  \end{equation}
\end{theorem}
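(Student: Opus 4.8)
The bound \eqref{eq:piminbound} on $\hatpimin$ is the more routine half. For each fixed $i\in[d]$, $\hat\pi_i$ is the empirical average along the chain of the indicator $\ind{X_t=i}$, whose stationary variance is $\pi_i(1-\pi_i)\le\pi_i$. A Bernstein-type concentration inequality for Markov chains stated in terms of the spectral gap, allowing an arbitrary initial distribution $\vpi^{(1)}$ at the cost of an extra $\log(1/\pimin)$ inside the exponent (e.g.\ \cite{paulin15}), gives with probability $1-\delta'$ that $\Abs{\hat\pi_i-\pi_i}\lesssim\sqrt{\pi_i\log(1/\delta')/(\gap n)}+\log(1/\delta')/(\gap n)$. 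Union bounding over $i\in[d]$ with $\delta'\asymp\delta\pimin/d$, and using the elementary inequality $\Abs{\hatpimin-\pimin}\le\max\braces{\Abs{\hat\pi_a-\pi_a},\,\Abs{\hat\pi_b-\pi_b}}$ where $a$ and $b$ minimise $\pi_i$ and $\hat\pi_i$ respectively (on the good event $\pi_b=O(\pimin)$, so both deviations are controlled by a state of stationary mass $O(\pimin)$), yields \eqref{eq:piminbound}.

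For \eqref{eq:gapbound} the starting point is Weyl's inequality. Since $\gap=1-\max\braces{\lambda_2,\Abs{\lambda_d}}$ and $\hatgap=1-\max\braces{\hat\lambda_2,\Abs{\hat\lambda_d}}$ are determined by eigenvalues of $\vL=\Sym(\vL)$ and of $\Sym(\wh\vL)$, and the spectral norm of a symmetrisation is at most that of the original matrix, we get $\Abs{\hatgap-\gap}\le\Norm{\Sym(\wh\vL)-\vL}=\Norm{\Sym(\wh\vL-\vL)}\le\Norm{\wh\vL-\vL}$. Write $\vD\defeq\Diag(\vpi)$, $\wh\vD\defeq\Diag(\hat\vpi)$, and introduce the intermediate matrix $\wt\vL\defeq\vD^{-1/2}\Sym(\wh\vM)\vD^{-1/2}$, so $\Norm{\Sym(\wh\vL)-\vL}\le\Norm{\Sym(\wh\vL)-\wt\vL}+\Norm{\wt\vL-\vL}$. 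To control the rescaling error, factor $\wh\vD^{-1/2}=\vD^{-1/2}(\vI+\vE)$ with $\vE$ diagonal, $E_{ii}=\sqrt{\pi_i/\hat\pi_i}-1$; on the good event of \eqref{eq:piminbound} (and for $n$ large enough that the deviation of $\hat\pi_i$ is below $\pi_i/2$, as otherwise the claimed bound is trivial) each entry obeys $\Abs{E_{ii}}\lesssim\Abs{\hat\pi_i-\pi_i}/\pi_i\lesssim\sqrt{\log(d/(\pimin\delta))/(\pi_i\gap n)}$, hence $\Norm{\vE}\lesssim\sqrt{\log(d/(\pimin\delta))/(\pimin\gap n)}$. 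Since $\wh\vD^{-1/2}$ is symmetric we have $\Sym(\wh\vL)=\wh\vD^{-1/2}\Sym(\wh\vM)\wh\vD^{-1/2}=(\vI+\vE)\wt\vL(\vI+\vE)$, and $\Norm{\wt\vL}\le\Norm{\vL}+\Norm{\wt\vL-\vL}\le2$ on the good event, so $\Norm{\Sym(\wh\vL)-\wt\vL}\le(2\Norm{\vE}+\Norm{\vE}^2)\Norm{\wt\vL}\lesssim\Norm{\vE}$, already of the claimed order.

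It remains to bound $\Norm{\wt\vL-\vL}=\Norm{\vD^{-1/2}(\Sym(\wh\vM)-\vM)\vD^{-1/2}}$, which is the heart of the argument. Write it as $\frac1{n-1}\sum_{t=1}^{n-1}\Parens{\vW_t-\vL}$ with $\vW_t=\frac1{2\sqrt{\pi_{X_t}\pi_{X_{t+1}}}}\Parens{\ve_{X_t}\ve_{X_{t+1}}^\t+\ve_{X_{t+1}}\ve_{X_t}^\t}$ a symmetric function of the doublet chain $(X_t,X_{t+1})$, whose stationary law is $\vM$ (so $\bbE_\vpi\vW_t=\vL$) and whose pseudo-spectral gap is $\Omega(\gap)$. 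The naive estimate $\Norm{\vW_t}\le1/\pimin$ supplies the per-term bound; the decisive point is the matrix-variance proxy: a short computation using reversibility shows $\bbE_\vpi\vW_t^2$ is diagonal with all entries $O\Parens{P_{ii}/\pi_i+\sum_j P_{ij}/\pi_j}=O(1/\pimin)$, the key identity being $\sum_j P_{ij}/\pi_j\le1/\pimin$, so $\Norm{\bbE_\vpi\vW_t^2}=O(1/\pimin)$ — crucially $O(1/\pimin)$, not the $O(1/\pimin^2)$ one gets by pulling $\Norm{\vD^{-1/2}}^2$ outside. Feeding $\sigma^2\asymp1/(\pimin\gap n)$ and $R\asymp1/(\pimin\gap n)$ into a matrix Bernstein inequality for Markov chains (using the pseudo-spectral gap to accommodate the non-reversible doublet chain, and correcting for the non-stationary start) gives $\Norm{\wt\vL-\vL}\lesssim\sqrt{\log(d/\delta)\log(n/(\pimin\delta))/(\pimin\gap n)}+\log(1/\gap)/(\gap n)$; the extra $\log(n/(\pimin\delta))$ factor and the $\log(1/\gap)/(\gap n)$ remainder come from the form of the matrix-chain tail bound and the initial-distribution correction. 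Combining the three estimates proves \eqref{eq:gapbound}.

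\paragraph{Main obstacle.}
The whole difficulty sits in \eqref{eq:gapbound}, and specifically in obtaining the $1/\sqrt{\pimin}$ dependence of the leading term rather than $1/\pimin$: both the rescaling error and the doublet-count error naively involve $\Norm{\vD^{-1/2}}^2=1/\pimin$, and one must never pull this factor out in full — for the rescaling term this is achieved through the multiplicative factorisation $\wh\vD^{-1/2}=\vD^{-1/2}(\vI+\vE)$ with $\Norm{\vE}$ itself of order $\sqrt{1/(\pimin\gap n)}$, and for the count term through the reversibility identity $\sum_j P_{ij}/\pi_j\le1/\pimin$ that keeps the matrix-variance proxy at $O(1/\pimin)$. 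A secondary, more mechanical task is the logarithmic bookkeeping and isolating the $\log(1/\gap)/(\gap n)$ term, which needs a Markov-chain matrix concentration inequality that handles a non-reversible chain started out of stationarity; establishing that the doublet chain has pseudo-spectral gap $\Omega(\gap)$ is a further small ingredient.
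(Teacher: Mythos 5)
Your proposal is correct and follows essentially the same route as the paper: Weyl's inequality plus the observation that symmetrization only shrinks the spectral norm, a multiplicative factorization of the diagonal rescaling error (your $(\vI+\vE)\wt\vL(\vI+\vE)$ is the paper's expansion in $\errp=\Diag(\hat\vpi)^{-1/2}\Diag(\vpi)^{1/2}-\vI$), and a matrix Bernstein bound on $\Diag(\vpi)^{-1/2}(\wh\vM-\vM)\Diag(\vpi)^{-1/2}$ whose variance proxy is kept at $O(1/\pimin)$ via the reversibility identity $\sum_j P_{i,j}/\pi_j\le 1/\pimin$ --- which is indeed the decisive point, and exactly the quantity $d_{\vP}$ the paper controls. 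The one place you lean on a black box is the ``matrix Bernstein inequality for Markov chains via the pseudo-spectral gap of the doublet chain'': no such off-the-shelf inequality is used in the paper; instead it manufactures one by Bernstein's blocking technique, partitioning $[n-1]$ into alternating blocks of length $a\asymp\gap^{-1}\log\frac{n}{\pimin\delta}$, comparing the joint law of the block sums to a product law via Yu's $\beta$-mixing bound (costing an additive $(\mu_H-1)\beta\le\delta$ in probability), and then applying the i.i.d.\ matrix Bernstein inequality of Tropp to the roughly $n\gap/\log(\cdot)$ effective blocks --- this is precisely where your $\log\frac{n}{\pimin\delta}$ factor and the separate bias term for the non-stationary start come from. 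If you intend to cite an existing Markov-chain matrix inequality you would need one whose variance proxy matches the stationary single-step variance (with only a $1/\gap$ inflation) and which tolerates an arbitrary initial distribution; absent such a reference, the blocking argument is the missing step, though it is standard and everything else in your plan (including the handling of $\hatpimin$) matches the paper's proof.
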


\Cref{thm:err} implies that the sequence lengths required to estimate
$\pimin$ and
$\gap$ to within constant multiplicative factors are, respectively,
\[
  \tilde{O}\Parens{\frac1{\pimin\gap}}
  \quad\text{and}\quad
  \tilde{O}\Parens{\frac1{\pimin\gap^3}}
  .
\]
By \cref{eq:mixing-time-bound},
the second of these is also a bound on the required sequence length to estimate $\tmix$.

The proof of \cref{thm:err} is based on analyzing the
convergence of the sample averages $\wh{\vM}$ and
$\hat\vpi$ to their expectation, and then using perturbation bounds
for eigenvalues to derive a bound on the error of $\hatgap$.
However, since these averages are formed using a \emph{single sample
path} from a (possibly) non-stationary Markov chain, we cannot use
standard large deviation bounds; moreover applying Chernoff-type
bounds for Markov chains to each entry of $\wh{\vM}$ would result in a
significantly worse sequence length requirement, roughly a factor of
$d$ larger.
Instead, we adapt probability tail bounds for sums of independent
random matrices~\citep{tropp2015intro} to our non-iid setting by
directly applying a blocking technique of~\citet{Bernstein27} as
described in the article of~\citet{Yu94}.
Due to ergodicity, the convergence rate can be bounded without any
dependence on the initial state distribution $\vpi^{(1)}$.
The proof of \cref{thm:err} is given in \cref{app:upper}.

Note that because the eigenvalues of $\vL$ are the same as that of the
transition probability matrix $\vP$, 
we could have instead opted to
estimate $\vP$, say, using simple frequency estimates obtained from
the sample path, and then computing the second largest eigenvalue of
this empirical estimate $\wh\vP$.
In fact, this approach is a way to extend to non-reversible chains, as
we would no longer rely on the symmetry of $\vM$ or $\vL$.
The difficulty with this approach is that $\vP$ lacks the structure
required by certain strong eigenvalue perturbation results.
One could instead invoke the Ostrowski-Elsner theorem
\citep[cf.~Theorem 1.4 on Page 170 of][]{stewart1990matrix}, which
bounds the \emph{matching distance} between the eigenvalues of a
matrix $\vA$ and its perturbation $\vA+\vE$ by $O(\norm{\vE}^{1/d})$.
Since $\norm{\wh\vP-\vP}$ is expected to be of size $O(n^{-1/2})$,
this approach will give a confidence interval for $\gap$ whose width
shrinks at a rate of $O(n^{-1/(2d)})$---an exponential slow-down
compared to the rate from \cref{thm:err}.
As demonstrated through an example from \citet{stewart1990matrix}, the
dependence on the $d$-th root of the norm of the perturbation cannot
be avoided in general.
Our approach based on estimating a symmetric matrix affords us the use
of perturbation results that exploit more structure.

Returning to the question of obtaining a fully empirical confidence 
interval for $\gap$ and $\pimin$, we notice that,
unfortunately, \cref{thm:err} falls short of being directly suitable for this,
at least without further assumptions.
This is because the deviation terms themselves depend inversely both
on $\gap$ and $\pimin$, and hence can never rule out $0$ (or an
arbitrarily small positive value) as a possibility for $\gap$ or
$\pimin$.\footnote{%
  Using \cref{thm:err}, it is possible to trap $\gap$ in the
  union of \emph{two} empirical confidence intervals---one around
  $\hatgap$ and the other around zero, both of which shrink in width
  as the sequence length increases.%
}
In effect, the fact that the Markov chain could be slow mixing and the
long-term frequency of some states could be small makes it difficult
to be confident in the estimates provided by $\hatgap$ and
$\hatpimin$.
This suggests that in order to obtain fully empirical confidence
intervals, we need an estimator that is not subject to such
effects---we pursue this in \cref{sec:empirical}.
\Cref{thm:err} thus primarily serves as a point of comparison
for what is achievable in terms of estimation accuracy when one does
not need to provide empirical confidence bounds.

\section{Fully empirical confidence intervals}\label{sec:empirical}
\begin{algorithm}
\caption{Empirical confidence intervals}
\label{alg:empest}
\begin{algorithmic}[1]
  \renewcommand\algorithmicrequire{\textbf{Input}:}
  \REQUIRE
    Sample path $(X_1,X_2,\dots,X_n)$,
    confidence parameter $\delta \in (0,1)$.

  \STATE Compute state visit counts and smoothed transition
  probability estimates:
  \begin{equation}
    \begin{aligned}
      N_i & :=
      \Abs{
        \Braces{
          t \in [n-1] : X_t = i
        }
      }
      , \quad i \in [d] ; \\
      N_{i,j} & :=
      \Abs{
        \Braces{
          t \in [n-1] : (X_t,X_{t+1}) = (i,j)
        }
      }
      , \quad (i,j) \in [d]^2 ; \\
      \wh{P}_{i,j}
      & :=
      \frac{N_{i,j} + 1/d}{N_i + 1}
      , \quad (i,j) \in [d]^2 .
    \end{aligned}
    \notag
  \end{equation}
  \label{step:P}

  \STATE Let $\giAh$ be the group inverse of $\wh\vA := \vI -
  \wh\vP$.
  \label{step:gi}

  \STATE Let $\hat\vpi \in \Delta^{d-1}$ be the unique stationary
  distribution for $\wh\vP$.
  \label{step:pi}

  \STATE Compute eigenvalues $\hat\lambda_1 {\geq} \hat\lambda_2
  {\geq} \dotsb {\geq} \hat\lambda_d$ of $\Sym(\wh\vL)$, where $\wh\vL
  := \Diag(\hat\vpi)^{1/2} \wh\vP \Diag(\hat\vpi)^{-1/2}$.
  \label{step:eig}

  \STATE Spectral gap estimate:
  \[ \hatgap := 1 - \max\braces{ \hat\lambda_2,\, |\hat\lambda_d| } . \]
  \label{step:gap}

  \STATE Empirical bounds for $|\wh{P}_{i,j}{-}P_{i,j}|$ for $(i,j){\in}[d]^2$:
  $c:=1.1$,
  $\emptail
    := \inf
    \braces{
      t\geq0 :
      2d^2 \parens{ 1 + \ceil{\log_c\frac{2n}{t}}_+ } e^{-t} \leq
      \delta
    }$,
  \begin{equation}
    \text{and} \quad
    \wh{B}_{i,j}
    :=
    \Parens{
      \sqrt{\frac{c\emptail}{2N_i}}
      + \sqrt{
        \frac{c\emptail}{2N_i}
        + 
        \sqrt{\frac{2c\wh{P}_{i,j}(1-\wh{P}_{i,j})\emptail}{N_i}}
        + \frac{(4/3)\emptail + \abs{\wh{P}_{i,j}-1/d}}{N_i}
      }
    }^2
    .
    \notag
  \end{equation}
  \label{step:P-bound}

  \STATE Relative sensitivity of $\vpi$:
  \begin{equation}
    \hat\kappa :=
    \frac12
    \max
    \Braces{
      \wh{A}_{j,j}^\# - \min\Braces{ \wh{A}_{i,j}^\# : i \in [d] }
      : j \in [d]
    } 
    .
    \notag
  \end{equation}
  \label{step:sens}

  \STATE Empirical bounds for $\max_{i \in [d]} |\hat{\pi}_i -
  \pi_i|$ and
  $\max\bigcup_{i\in[d]}
  \braces{
    \abs{\sqrt{\pi_i/\hat\pi_i}-1},\,
    \abs{\sqrt{\hat\pi_i/\pi_i}-1}
  }$:
  \begin{equation}
    \hat{b} := \hat\kappa \max\Braces{
      \wh{B}_{i,j}
      : (i,j) \in [d]^2
    }
    , \qquad
    \hat\rho := \frac12 \max \bigcup_{i\in[d]}
    \Braces{
      \frac{\hat{b}}{\hat\pi_i},\,
      \frac{\hat{b}}{\brackets{\hat\pi_i-\hat{b}}_+}
    }
    .
    \notag
  \end{equation}
  \label{step:pi-bound}

  \STATE Empirical bounds for $\abs{\hatgap-\gap}$:
  \begin{equation}
    \hat{w} := 2\hat\rho + \hat\rho^2
    + (1+2\hat\rho+\hat\rho^2)
    \Biggl(
      \sum_{(i,j)\in[d]^2} \frac{\hat\pi_i}{\hat\pi_j} \hat{B}_{i,j}^2
    \Biggr)^{1/2} .
    \notag
  \end{equation}
  \label{step:gap-bound}

\end{algorithmic}
\end{algorithm}

In this section, we address the shortcoming of \cref{thm:err} and give
fully empirical confidence intervals for the stationary probabilities
and the spectral gap $\gap$.
The main idea is to use the Markov property to eliminate the
dependence of the confidence intervals on the unknown quantities
(including $\pimin$ and $\gap$).
Specifically, we estimate the transition probabilities from the sample
path using simple frequency estimates: as a consequence of the Markov
property, for each state, the frequency estimates converge at a rate
that depends only on the number of visits to the state, and in
particular the rate (given the visit count of the state) is
independent of the mixing time of the chain.

As discussed in \cref{sec:rates}, it is possible to form a confidence
interval for $\gap$ based on the eigenvalues of an estimated
transition probability matrix by appealing to the
Ostrowski-Elsner theorem.
However, as explained earlier, this would lead to a slow
$O(n^{-1/(2d)})$ rate.
We avoid this slow rate by using an estimate of the symmetric matrix
$\vL$, so that we can use a stronger perturbation result (namely Weyl's
inequality, as in the proof of \cref{thm:err}) available for symmetric matrices.

To form an estimate of $\vL$ based on an estimate of the transition
probabilities, one possibility is to estimate $\vpi$ using a
frequency-based estimate for $\vpi$ as was done in \cref{sec:rates},
and appeal to the relation $\vL = \Diag(\vpi)^{1/2} \vP
\Diag(\vpi)^{-1/2}$ to form a plug-in estimate.
However, as noted in \cref{sec:rates-upper}, confidence intervals for
the entries of $\vpi$ formed this way may depend on the mixing time.
Indeed, such an estimate of $\vpi$ does not exploit the Markov
property.

We adopt a different strategy for estimating $\vpi$, which leads to
our construction of empirical confidence intervals, detailed in
\cref{alg:empest}.
We form the matrix $\wh\vP$ using smoothed frequency estimates of
$\vP$ (Step~\ref{step:P}), then compute the so-called group inverse
$\giAh$ of $\wh\vA = \vI - \wh\vP$ (Step~\ref{step:gi}), followed by
finding the unique stationary distribution $\hat\vpi$ of $\wh\vP$
(Step~\ref{step:pi}), this way decoupling the bound on the accuracy of $\hat\vpi$
from the mixing time.
The group inverse $\giAh$ of $\wh\vA$ is uniquely defined;
 and if
$\wh\vP$ defines an ergodic chain (which is the case here due to the
use of the smoothed estimates), $\giAh$ can be computed at the cost of
inverting an $(d{-}1){\times}(d{-}1)$ matrix~\citep[Theorem
5.2]{meyer1975role}.%
\footnote{%
\label{ftnote:group-inverse}
The group inverse of a square matrix $\vA$, a special case of the {\em Drazin inverse},
is the unique matrix $\vA^\#$ satisfying
$ \vA\vA^\#\vA = \vA$,
$\vA^\#\vA\vA^\# = \vA^\#$ and
$\vA^\#\vA = \vA\vA^\#$.
}
Further, once given $\giAh$, the unique stationary distribution
$\hat\vpi$ of $\wh\vP$ can be read out from the last row of
$\giAh$~\citep[Theorem 5.3]{meyer1975role}.
The group inverse is also be used to compute the sensitivity of
$\vpi$.
Based on $\hat\vpi$ and $\wh\vP$, we construct the plug-in estimate
$\wh\vL$ of $\vL$, and use the eigenvalues of its symmetrization to
form the estimate $\hatgap$ of the spectral gap (Steps~\ref{step:eig}
and~\ref{step:gap}).
In the remaining steps, 
we use perturbation analyses to relate $\hat\vpi$ and $\vpi$, viewing
$\vP$ as the perturbation of $\wh\vP$; and also to relate $\hatgap$
and $\gap$, viewing $\vL$ as a perturbation of $\Sym(\wh\vL)$.
Both analyses give error bounds entirely in terms of observable
quantities (e.g., $\hat\kappa$), tracing back to empirical error
bounds for the smoothed frequency estimates of $\vP$.

The most computationally expensive step in \cref{alg:empest} is the
computation of the group inverse $\giAh$, which, as noted reduces to
matrix inversion.
Thus, with a standard implementation of matrix inversion, the
algorithm's time complexity is $O(n + d^3)$, while its space
complexity is $O(d^2)$.

To state our main theorem concerning \cref{alg:empest}, we first define
$\kappa$ to be analogous to $\hat{\kappa}$ from Step~\ref{step:sens},
with $\giAh$ replaced by the group inverse $\giA$ of $\vA := \vI -
\vP$.
The result is as follows.
\begin{theorem}
  \label{thm:empirical}
  Suppose \cref{alg:empest} is given as input a sample path of length
  $n$ from an ergodic and reversible Markov chain and confidence
  parameter $\delta \in (0,1)$.
  Let $\gap>0$ denote the spectral gap, $\vpi$ the unique stationary
  distribution, and $\pimin>0$ the minimum stationary probability.
  Then, on an event of probability at least $1-\delta$,
  \[
    \pi_i \in [\hat\pi_i-\hat{b},\hat\pi_i+\hat{b}]
    \quad \text{for all $i \in [d]$} ,
    \qquad\text{and}\qquad
    \gap \in [\hatgap-\hat{w},\hatgap+\hat{w}]
    .
  \]
  Moreover, $\hat{b}$ and $\hat{w}$ almost surely satisfy (as $n \to
  \infty$)
  \[
    \hat{b}
    =
    O\Parens{
      \max_{(i,j) \in [d]^2}
      \kappa
      \sqrt{\frac{P_{i,j}\log\log n}{\pi_i n}}
    }
    ,
    \quad
    \hat{w}
    =
    O\Parens{ 
      \frac{\kappa}{\pimin} \sqrt{\frac{\log\log n}{\pimin n}} + 
      \sqrt{ \frac{d\log\log n}{ \pimin n}}
    }
    .\footnote{%
    In \cref{thm:empirical,thm:combined}, our use of big-$O$ notation
    is as follows.
    For a random sequence $(Y_n)_n$ and a (non-random) positive
    sequence $(\veps_{\theta,n})_n$ parameterized by $\theta$, we say
    ``$Y_n = O(\veps_{\theta,n})$ holds almost surely as
    $n\to\infty$'' if there is some universal constant $C>0$ such that
    for all $\theta$, $\limsup_{n\to\infty} Y_n/\veps_{\theta,n} \leq
    C$ holds almost surely.%
  }%
  \]
\end{theorem}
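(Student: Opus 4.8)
The plan is to bound the two error quantities $\hat b$ and $\hat w$ by tracking the chain of observable estimates in \cref{alg:empest} back to the frequency estimates $\wh P$, and then to invoke three ingredients: (i) an empirical Bernstein-type tail bound for the smoothed frequency estimates of each row of $\vP$, valid given the visit count $N_i$ and independent of the mixing time (this is what makes $\wh B_{i,j}$ a valid confidence bound and simultaneously controls its asymptotic size); (ii) perturbation bounds for the stationary distribution in terms of the group inverse of $\vA=\vI-\vP$, yielding $\max_i|\hat\pi_i-\pi_i| \lesssim \kappa \max_{i,j}|\wh P_{i,j}-P_{i,j}|$ (this is where $\hat\kappa$, $\hat b$ and $\hat\rho$ come from, and where we use that the group inverse condition number is the right sensitivity constant for $\vpi$); and (iii) Weyl's inequality applied to $\Sym(\wh\vL)$ versus $\vL$, giving $|\hatgap-\gap|\le \|\vL-\Sym(\wh\vL)\|$, which we expand via $\vL=\Diag(\vpi)^{1/2}\vP\Diag(\vpi)^{-1/2}$ and the plug-in $\wh\vL$, producing the $\hat\rho$ terms (from replacing $\hat\vpi$ by $\vpi$ in the diagonal conjugation) plus the weighted Frobenius-type term $(\sum_{i,j}\frac{\hat\pi_i}{\hat\pi_j}\wh B_{i,j}^2)^{1/2}$ (from replacing $\wh P$ by $\vP$).

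Concretely, I would proceed as follows. First, establish coverage: show that on an event of probability at least $1-\delta$, simultaneously for all $(i,j)$ we have $|\wh P_{i,j}-P_{i,j}|\le \wh B_{i,j}$. This requires a self-bounding / empirical-Bernstein argument for sums of bounded martingale differences along the sample path (the increments $\ind{X_t=i,X_{t+1}=j}-P_{i,j}$ conditioned on being at state $i$), together with a union bound over $(i,j)$ and over the geometric grid of possible values of $N_i$ — that is the role of the $\ceil{\log_c(2n/t)}_+$ factor and the constant $c=1.1$ in $\emptail$. Second, on that event, propagate: bound $\max_i|\hat\pi_i-\pi_i|$ by $\hat b$ using the group-inverse perturbation identity for Markov chains (the key algebraic fact being $\hat\vpi-\vpi = \hat\vpi(\wh\vP-\vP)\giA$ or its symmetric analogue, so only differences of columns of the group inverse matter, hence the $\hat\kappa$ definition), and check that replacing the true $\kappa,\vP$ by $\hat\kappa,\wh\vP$ is legitimate because the perturbation bound can be written self-referentially in observable terms; then deduce the multiplicative bounds $|\sqrt{\pi_i/\hat\pi_i}-1|,|\sqrt{\hat\pi_i/\pi_i}-1|\le\hat\rho$ by elementary manipulation from $|\hat\pi_i-\pi_i|\le\hat b$ and $\hat\pi_i>\hat b$. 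Third, apply Weyl: $|\hatgap-\gap|\le\|\Sym(\wh\vL)-\vL\|\le\|\wh\vL-\vL\|$ (since $\|\Sym(\wh\vL)-\vL\|\le\|\wh\vL-\vL\|$ for symmetric $\vL$), write $\wh\vL-\vL = (\vD\wh\vP\vD^{-1}-\wh\vD\wh\vP\wh\vD^{-1}) + \wh\vD(\wh\vP-\vP)\wh\vD^{-1}$ with $\vD=\Diag(\vpi)^{1/2}$, $\wh\vD=\Diag(\hat\vpi)^{1/2}$, bound the first bracket's spectral norm by $(2\hat\rho+\hat\rho^2)\|\wh\vL\|\le 2\hat\rho+\hat\rho^2$ using the $\hat\rho$-closeness of the diagonal scalings and $\|\wh\vL\|\le 1$ (or absorb it into the $(1+2\hat\rho+\hat\rho^2)$ factor), and bound the second by its Frobenius norm $(\sum_{i,j}\frac{\hat\pi_i}{\hat\pi_j}(\wh P_{i,j}-P_{i,j})^2)^{1/2}\le(\sum_{i,j}\frac{\hat\pi_i}{\hat\pi_j}\wh B_{i,j}^2)^{1/2}$; collecting terms gives exactly $\hat w$.

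For the asymptotic rates, I would use the law of the iterated logarithm heuristics made rigorous through the tail bound: $\emptail = O(\log\log n)$ almost surely (as $n\to\infty$, for fixed chain), and by ergodicity $N_i/n \to \pi_i$ a.s., so $\wh B_{i,j} = O(\sqrt{P_{i,j}\log\log n/(\pi_i n)})$; since $\hat\kappa\to\kappa$ and $\hat\pi_i\to\pi_i$ a.s. (continuity of the group inverse and the stationary distribution at $\vP$, which is where ergodicity of $\vP$ is used), $\hat b = O(\kappa\max_{i,j}\sqrt{P_{i,j}\log\log n/(\pi_i n)})$, giving the first claim. Then $\hat\rho = O(\hat b/\pimin) = O(\frac{\kappa}{\pimin}\sqrt{\log\log n/(\pimin n)})$ (using $\sum_j P_{i,j}=1$ so $\max_j\sqrt{P_{i,j}}\le 1$), and the weighted-Frobenius term is $O((\sum_{i,j}\frac{\pi_i}{\pi_j}\cdot\frac{P_{i,j}\log\log n}{\pi_i n})^{1/2}) = O(\sqrt{d\log\log n/(\pimin n)})$ since $\sum_{i,j}\frac{P_{i,j}}{\pi_j}\le d/\pimin$; combining yields the stated $\hat w$ rate. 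The main obstacle is step one — proving that $\wh B_{i,j}$ is a genuine high-probability upper bound on $|\wh P_{i,j}-P_{i,j}|$ uniformly in the random, data-dependent visit counts $N_i$: one must handle the stopping-time / random-sample-size issue (the number of visits to $i$ is not fixed in advance) via a union bound over a geometric grid of counts combined with an empirical-Bernstein inequality for martingales, and verify that the peculiar self-referential form of $\wh B_{i,j}$ (with $\wh P_{i,j}$ appearing inside the bound) correctly inverts the quadratic that arises from the self-bounding variance term — everything downstream is then bookkeeping with standard perturbation inequalities.
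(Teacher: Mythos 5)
Your proposal is correct and follows essentially the same route as the paper: a Freedman/empirical-Bernstein martingale bound with a geometric-grid union bound over the (random) visit counts to validate $\wh{B}_{i,j}$, the group-inverse perturbation bound to get $\hat{b}$ and $\hat\rho$, Weyl's inequality plus a multiplicative decomposition of $\wh\vL-\vL$ into diagonal-scaling and transition-error parts to get $\hat{w}$, and the same asymptotic bookkeeping ($\emptail = O(\log\log n)$, $N_i/n\to\pi_i$, $\hat\kappa\to\kappa$ by continuity of the group inverse). Your two-term splitting of $\wh\vL-\vL$ differs cosmetically from the paper's seven-term expansion in $\errpil,\errpir,\errP$ but yields a bound that is at most the algorithm's $\hat{w}$, so the argument goes through as stated.
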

The proof of \cref{thm:empirical} is given in \cref{app:empirical}.
As mentioned above, the obstacle encountered in \cref{thm:err} is
avoided by exploiting the Markov property.
We establish fully observable upper and lower bounds on the entries of
$\vP$ that converge at a $\sqrt{n/\log\log n}$ rate using standard
martingale tail inequalities; this justifies the validity of the
bounds from Step~\ref{step:P-bound}.
Properties of the group inverse~\citep{meyer1975role,cho2001comparison} and eigenvalue
perturbation theory~\citep{stewart1990matrix} are used to validate the
empirical bounds on $\pi_i$ and $\gap$ developed in the remaining
steps of the algorithm.

The first part of \cref{thm:empirical} provides valid empirical
confidence intervals for each $\pi_i$ and for $\gap$, which are
simultaneously valid at confidence level $\delta$.
The second part of \cref{thm:empirical} shows that the width of the
intervals decrease as the sequence length increases.
We show in \cref{sec:asymptotic} that
$\kappa \le d/\gap$, and hence
\[
  \hat{b}
  =
  O\Parens{
    \max_{(i,j) \in [d]^2}
    \frac{d}{\gap}
    \sqrt{\frac{P_{i,j}\log\log n}{\pi_i n}}
  }
  , \quad
  \hat{w}
  =
  O\Parens{ 
    \frac{d}{\pimin\gap} \sqrt{\frac{\log\log n}{\pimin n}}
  }
  .
\] 

It is easy to combine \cref{thm:err,thm:empirical} to yield intervals
whose widths shrink at least as fast as both the non-empirical
intervals from \cref{thm:err} and the empirical intervals from
\cref{thm:empirical}.
Specifically, determine lower bounds on $\pimin$ and $\gap$ using
\cref{alg:empest},
\[
  \pimin \geq \min_{i \in [d]} \brackets{ \hat\pi_i - \hat{b} }_+
  , \quad
  \gap \geq \brackets{ \hatgap - \hat{w} }_+
  ;
\]
then plug-in these lower bounds for $\pimin$ and $\gap$ in the
deviation bounds in \cref{eq:gapbound} from \cref{thm:err}.
This yields a new interval centered around the estimate of $\gap$ from
\cref{thm:err}, and it no longer depends on unknown quantities.
The interval is a valid $1-2\delta$ probability confidence interval
for $\gap$, and for sufficiently large $n$, the width shrinks at the
rate given in \cref{eq:gapbound}.
We can similarly construct an empirical confidence interval for
$\pimin$ using \cref{eq:piminbound}, which is valid on the same
$1-2\delta$ probability event.\footnote{%
  For the $\pimin$ interval, we only plug-in lower bounds on $\pimin$
  and $\gap$ only where these quantities appear as $1/\pimin$ and
  $1/\gap$ in \cref{eq:piminbound}.
  It is then possible to ``solve'' for observable bounds on $\pimin$.
  See \cref{app:combined} for details.%
}
Finally, we can take the intersection of these new intervals with the
corresponding intervals from \cref{alg:empest}.
This is summarized in the following \lcnamecref{thm:combined}, which
we prove in \cref{app:combined}.
\begin{theorem}
  \label{thm:combined}
  The following holds under the same conditions as
  \cref{thm:empirical}.
  For any $\delta \in (0,1)$, the confidence intervals $\wh{U}$ and
  $\wh{V}$ described above for $\pimin$ and $\gap$, respectively,
  satisfy $\pimin \in \wh{U}$ and $\gap \in \wh{V}$ with probability
  at least $1-2\delta$.
  Furthermore, the widths of these intervals almost surely satisfy
  (as $n \to \infty$) 
  \[
    |\wh{U}|
    =
    O\Parens{
      \sqrt{\frac{\pimin\log\frac{d}{\pimin\delta}}{\gap n}}
    }
    ,
    \quad
    |\wh{V}|
    =
    O\Parens{
      \min\Braces{
        \sqrt{\frac{\log\frac{d}{\delta}\cdot\log(n)}{\pimin\gap n}}
        ,\,
        \hat{w}
      }
    }
  \]
  where $\hat{w}$ is the width from \cref{alg:empest}.
\end{theorem}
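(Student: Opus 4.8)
The plan is to run both confidence statements already established --- \cref{thm:empirical} for the group-inverse-based estimators of \cref{alg:empest}, and \cref{thm:err} for the frequency-based estimators of \cref{sec:rates-upper} --- on one common event, use the empirical lower bounds produced by \cref{alg:empest} to make the bounds of \cref{thm:err} observable without losing coverage, intersect the two resulting intervals, and finally control the width asymptotically. Concretely, I would condition on the intersection of the event of \cref{thm:empirical} (on which $\pi_i\in[\hat\pi_i-\hat b,\hat\pi_i+\hat b]$ for all $i$ and $\gap\in[\hatgap-\hat w,\hatgap+\hat w]$) with the event of \cref{thm:err} applied at confidence $\delta$ (on which \cref{eq:piminbound,eq:gapbound} hold); a union bound gives probability at least $1-2\delta$. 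On this event $\pimin=\min_i\pi_i\in[\min_i\hat\pi_i-\hat b,\min_i\hat\pi_i+\hat b]$, and $\underline\pi:=[\min_i\hat\pi_i-\hat b]_+$ and $\underline\gamma:=[\hatgap-\hat w]_+$ are observable lower bounds on $\pimin$ and $\gap$ respectively.

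For coverage of the gap interval, I would observe that the right-hand side of \cref{eq:gapbound} is non-increasing in each of $\pimin$ and $\gap$ (they occur only as $1/\pimin$, $1/\gap$, $\log(n/(\pimin\delta))$, and $\log(1/\gap)$), so substituting $\underline\pi$ for $\pimin$ and $\underline\gamma$ for $\gap$ only enlarges it, yielding an observable $\hat D$ with $|\hatgap-\gap|\le\hat D$; then $\wh V$, defined as the intersection of $[\hatgap-\hat D,\hatgap+\hat D]$ with the interval $[\hatgap-\hat w,\hatgap+\hat w]$ of \cref{alg:empest}, is observable and contains $\gap$ (both intervals do). For $\pimin$ the same substitution removes $1/\gap$ and the $1/\pimin$ inside the logarithm of \cref{eq:piminbound}, but leaves a bare factor $\sqrt{\pimin}$ under the outer square root; writing the surviving bound as $|\hatpimin-\pimin|\le\hat a\sqrt{\pimin}+\hat c$ with observable $\hat a,\hat c$, this is a pair of quadratic inequalities in $u=\sqrt{\pimin}$ whose solution produces closed-form observable endpoints, $\pimin\in[\underline u^2,\overline u^2]$ (with $\underline u:=0$ when the corresponding discriminant is non-positive). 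Intersecting with $[\min_i\hat\pi_i-\hat b,\min_i\hat\pi_i+\hat b]$ gives the observable interval $\wh U\ni\pimin$, completing the coverage claim.

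For the widths, I would use that \cref{thm:empirical} gives $\hat b,\hat w\to0$ almost surely; the containments then force $\hat\pi_i\to\pi_i$ and $\hatgap\to\gap$ almost surely (and $\hatpimin\to\pimin$ by the Markov-chain law of large numbers), so $\underline\pi\to\pimin$ and $\underline\gamma\to\gap$ almost surely. Consequently $\hat D$ equals the right-hand side of \cref{eq:gapbound} evaluated at $(\pimin,\gap)$ up to a $1+o(1)$ factor --- using $\log(n/(\underline\pi\delta))/\log n\to1$ and that the $\log(1/\underline\gamma)/(\underline\gamma n)$ term is of strictly lower order --- so $|\wh V|\le\min\{2\hat D,2\hat w\}=O(\min\{\sqrt{\log(d/\delta)\log n/(\pimin\gap n)},\,\hat w\})$. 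For $\pimin$, solving the quadratic gives $\overline u^2-\underline u^2\sim2\sqrt{\pimin}\,\hat a$ (the additive $\hat c$ term is $O(1/n)$, hence negligible against $\hat a=\Theta(1/\sqrt n)$), and since $\hat a\to\sqrt{\log(d/(\pimin\delta))/(\gap n)}$ up to a $1+o(1)$ factor, $|\wh U|\le\overline u^2-\underline u^2=O(\sqrt{\pimin\log(d/(\pimin\delta))/(\gap n)})$; intersection only shrinks the interval, so the claimed width bounds hold.

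The routine parts are the union bound and the monotonicity of the RHS of \cref{eq:gapbound}. The main obstacle is the $\pimin$ interval: because \cref{eq:piminbound} genuinely depends on $\pimin$ and not merely on $1/\pimin$, one cannot simply plug in a lower bound but must invert a quadratic constraint; one then has to check both that its discriminant is eventually positive --- so the lower endpoint is non-degenerate --- and that the inversion does not inflate the width, i.e. that the $O(1/n)$ additive contribution stays of lower order than the $\Theta(1/\sqrt n)$ leading term, so that $|\wh U|$ still contracts at the \cref{thm:err} rate.
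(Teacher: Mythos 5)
Your proposal is correct and follows essentially the same route as the paper: a union bound over the events of \cref{thm:err} and \cref{thm:empirical}, substitution of the observable lower bounds $[\hatpimin-\hat b]_+$ and $[\hatgap-\hat w]_+$ into the monotone right-hand sides of \cref{eq:piminbound,eq:gapbound}, inversion of the residual quadratic constraint in $\sqrt{\pimin}$ to obtain an observable $\pimin$ interval (the paper does this via the $a\le b\sqrt a+c\Rightarrow a\le b^2+b\sqrt c+c$ device from \cref{lem:P-obs-bound}, which is equivalent to your explicit root-solving), intersection with the \cref{alg:empest} intervals, and the observation that the plugged-in lower bounds converge almost surely to $\pimin$ and $\gap$ so the widths contract at the \cref{thm:err} rates.
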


\section{Discussion}\label{sec:discussion}
The construction used in \cref{thm:combined} applies more generally:
Given a confidence interval of the form $I_n = I_n(\gap,\pimin,\delta)$ 
for some confidence level $\delta$
and a fully empirical confidence set $E_n(\delta)$ for $(\gap,\pimin)$  for the same level,
$I_n' = E_n(\delta) \cap \cup_{(\gamma,\pi)\in E_n(\delta)} I_n(\gamma,\pi,\delta)$ is a valid
fully empirical $2\delta$-level confidence interval whose asymptotic width
matches that of $I_n$ up to lower order terms under reasonable assumptions on $E_n$ and $I_n$.
In particular, this suggests that future work should focus on 
closing the gap between the lower and upper bounds on the accuracy
of point-estimation. Another interesting direction is to 
reduce the computation cost: The current cubic cost in the number of states
can be too high even when the number of states is only moderately
large.

Perhaps more important, however, is to extend 
our results to large state space Markov chains:
In most practical applications the state space is continuous
or is exponentially large in some natural parameters.
As follows from our lower bounds, without further assumptions,
the problem of fully data dependent estimation of the mixing time
is intractable for information theoretical reasons.
Interesting directions for future work thus must consider Markov
chains with specific structure. Parametric classes of Markov chains,
including but not limited to Markov chains with factored transition kernels
with a few factors, are a promising candidate for such future investigations.
The results presented here are a first step in the ambitious research agenda
outlined above, and we hope that they will
serve as a point of departure
for
further insights
in the area of fully empirical estimation of Markov chain 
parameters based on a single sample path.

\bibliography{all}
\bibliographystyle{plain}

\appendix

\section{Proofs of the lower bounds}\label{app:lower}
\begin{theorem}[Theorem~\ref{thm:lb-pimin} restated]
  Pick any $\bar\pi \in (0,1/4)$.
  Consider any estimator $\hatgap$ that takes as input a random sample
  path of length $n \leq 1/(4\bar\pi)$ from a Markov chain starting
  from any desired initial state distribution.
  There exists a two-state ergodic and reversible Markov chain
  distribution with spectral gap $\gap \geq 1/2$ and minimum
  stationary probability $\pimin \geq \bar\pi$ such that
  \[
    \Pr\Brackets{ |\hatgap - \gap| \geq 1/8 } \geq 3/8 .
  \]
\end{theorem}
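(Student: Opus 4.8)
The plan is to run a classical two-point (Le Cam) indistinguishability argument. I would exhibit two two-state ergodic reversible Markov chains $\vP^{(1)},\vP^{(2)}$, both with spectral gap at least $1/2$ and minimum stationary probability at least $\bar\pi$, whose gaps differ by $1/2$, and such that, when both are started from state $1$, the laws of their length-$n$ sample paths are within total variation distance strictly less than $1/4$ whenever $n\le 1/(4\bar\pi)$. Since every real number $\hatgap$ is at distance at least $1/4>1/8$ from one of the two gap values, this closeness forces the claimed failure probability on at least one of the two chains.

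Concretely I would take
\[
  \vP^{(1)}:=\begin{pmatrix}1-\bar\pi & \bar\pi\\ 1-\bar\pi & \bar\pi\end{pmatrix},
  \qquad
  \vP^{(2)}:=\begin{pmatrix}1-\bar\pi & \bar\pi\\ \tfrac12-\bar\pi & \tfrac12+\bar\pi\end{pmatrix}.
\]
For every $\bar\pi\in(0,1/4)$ both matrices are ergodic (the self-loop $1-\bar\pi>0$ at state $1$ gives aperiodicity and all off-diagonal entries are positive), and every two-state chain is reversible with respect to its stationary distribution. The non-unit eigenvalue of a $2\times 2$ stochastic matrix equals its trace minus one, so $\vP^{(1)}$ has eigenvalues $\{1,0\}$, giving $\gap^{(1)}=1$, $\vpi=(1-\bar\pi,\bar\pi)$, and $\pimin=\bar\pi$, while $\vP^{(2)}$ has eigenvalues $\{1,\tfrac12\}$, giving $\gap^{(2)}=\tfrac12$, $\vpi=(1-2\bar\pi,2\bar\pi)$, and $\pimin=2\bar\pi$. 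Thus both chains have $\gap\ge 1/2$ and $\pimin\ge\bar\pi$, and the gaps differ by $1/2$. Crucially the two matrices share the same first row, so coupling the chains by feeding them identical transition coins whenever both sit at state $1$ keeps the two sample paths identical up to and including the first visit to state $2$. Hence, writing $\mu_k$ for the law of $(X_1,\dots,X_n)$ under $\vP^{(k)}$ started from state $1$, one gets $\norm{\mu_1-\mu_2}_{\tv}\le 1-(1-\bar\pi)^{n-1}\le(n-1)\bar\pi<n\bar\pi\le 1/4$ whenever $n\le 1/(4\bar\pi)$.

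To finish, put $G_k:=\{\,\Abs{\hatgap-\gap^{(k)}}<1/8\,\}$ and write $\Pr_k$ for probability under $\vP^{(k)}$ started from state $1$. The events $G_1,G_2$ are disjoint (they confine $\hatgap$ to the disjoint intervals $(\tfrac78,\tfrac98)$ and $(\tfrac38,\tfrac58)$), so $\Pr_2[G_2^c]\ge\Pr_2[G_1]$, and therefore
\[
  \Pr_1[G_1^c]+\Pr_2[G_2^c]
  \ \ge\ \Pr_1[G_1^c]+\Pr_2[G_1]
  \ =\ 1-\bigl(\Pr_1[G_1]-\Pr_2[G_1]\bigr)
  \ \ge\ 1-\norm{\mu_1-\mu_2}_{\tv}
  \ >\ \tfrac34 .
\]
Hence $\Pr_k[\,\Abs{\hatgap-\gap^{(k)}}\ge 1/8\,]=\Pr_k[G_k^c]>3/8$ for at least one $k\in\{1,2\}$, and that $\vP^{(k)}$ is the desired hard chain. (If $\hatgap$ is randomized, repeat the computation on the product of the path space with the estimator's independent internal randomness; the total-variation bound is unaffected.)

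The construction is elementary, so I do not expect a serious obstacle. The one thing that genuinely needs care is driving the total-variation distance below $1/4$ (a cruder pair of chains gives only $1/2$) while simultaneously keeping both spectral gaps at least $1/2$ and both minimum stationary probabilities at least $\bar\pi$: these constraints force the escape probability from state $1$ to equal $\bar\pi$ exactly and the return probability from state $2$ to lie in $[\tfrac12-\bar\pi,\,1-\bar\pi]$, and checking that this is feasible for every $\bar\pi\in(0,1/4)$, together with the Bernoulli-inequality step $1-(1-\bar\pi)^{n-1}\le(n-1)\bar\pi$, is the only real computation.
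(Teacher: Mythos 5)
Your construction and the overall shape of the argument are essentially the paper's (the paper uses second row $(1/2,1/2)$ for $\vP^{(2)}$, you use $(1/2-\bar\pi,1/2+\bar\pi)$; both work, and your Le Cam/total-variation bookkeeping is cleaner and handles randomized estimators more explicitly than the paper's informal conclusion). However, there is a genuine gap: the theorem quantifies over estimators that may start the chain \emph{from any desired initial state distribution}, and your entire argument fixes the initial state to be state $1$. If the estimator instead chooses an initial distribution $\vq$ with substantial mass on state $2$, your coupling breaks immediately (the two chains have different rows at state $2$, so the paths can diverge at time $2$), and the total variation between the two path laws is no longer below $1/4$. The natural patch within your framework --- assume WLOG $q_1\ge 1/2$ (swapping the roles of the states otherwise) and condition on $\{X_1=1\}$ --- only yields $\Pr_1[G_1^c]+\Pr_2[G_2^c]\ge q_1\bigl(1-\tv_{\text{cond}}\bigr)\ge \tfrac12\cdot\tfrac34$, hence a failure probability of $3/16$ for one of the chains, which falls short of the claimed $3/8$.

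The paper recovers the constant $3/8$ by a pointwise argument rather than a TV argument: after the swap, the all-ones sample path $A=\{X_1=\dots=X_n=1\}$ has probability $q_1(1-\bar\pi)^{n-1}\ge \tfrac12\cdot\tfrac34=\tfrac38$ under \emph{both} chains, and on $A$ the (deterministic) estimator outputs a single value, which is at distance at least $1/4$ from one of the two spectral gaps; for that chain the failure probability is at least $\Pr[A]\ge 3/8$. You should either restrict your TV argument to the conditional laws and then finish with this single-atom observation, or replace the TV step by it outright; as written, your proof establishes the theorem only for estimators that initialize at a point mass on the ``shared-row'' state.
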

\begin{proof}
  Fix $\bar\pi \in (0,1/4)$.
  Consider two Markov chains given by the following stochastic
  matrices:
  \[
    \vP^{(1)} :=
    \begin{bmatrix}
      1-\bar\pi & \bar\pi \\
      1-\bar\pi & \bar\pi
    \end{bmatrix}
    , \quad
    \vP^{(2)} :=
    \begin{bmatrix}
      1-\bar\pi & \bar\pi \\
      1/2 & 1/2
    \end{bmatrix}
    .
  \]
  Each Markov chain is ergodic and reversible; their stationary
  distributions are, respectively, $\vpi^{(1)} = (1-\bar\pi,\bar\pi)$
  and $\vpi^{(2)} = (1/(1+2\bar\pi),2\bar\pi/(1+2\bar\pi))$.
  We have $\pimin \geq \bar\pi$ in both cases.
  For the first Markov chain, $\slem = 0$, and hence the spectral gap
  is $1$; for the second Markov chain, $\slem = 1/2-\bar\pi$, so the
  spectral gap is $1/2+\bar\pi$.

  In order to guarantee $|\hatgap - \gap| < 1/8 < |1 -
  (1/2+\bar\pi)|/2$, it must be possible to distinguish the two Markov
  chains.
  Assume that the initial state distribution has mass at least $1/2$
  on state $1$.
  (If this is not the case, we swap the roles of states $1$ and $2$ in
  the constructions above.)
  With probability at least half, the initial state is $1$; and both
  chains have the same transition probabilities from state $1$.
  The chains are indistinguishable unless the sample path eventually
  reaches state $2$.
  But with probability at least $3/4$, a sample path of length $n <
  1/(4\bar\pi)$ starting from state $1$ always remains in the same
  state (this follows from properties of the geometric distribution
  and the assumption $\bar\pi < 1/4$).
\end{proof}

\begin{theorem}[Theorem~\ref{thm:lb-gap} restated]
  There is an absolute constant $c>0$ such that the following holds.
  Pick any positive integer $d \geq 3$ and any $\bar\gamma \in
  (0,1/2)$.
  Consider any estimator $\hatgap$ that takes as input a random sample
  path of length $n < c d\log(d) / \bar\gamma$ from a $d$-state
  reversible Markov chain starting from any desired initial state
  distribution.
  There is an ergodic and reversible Markov chain distribution
  with spectral gap $\gap \in [\bar\gamma,2\bar\gamma]$ and minimum
  stationary probability $\pimin \geq 1/(2d)$ such that
  \[
    \Pr\Brackets{ |\hatgap - \gap| \geq \bar\gamma/2} \geq 1/4 .
  \]
\end{theorem}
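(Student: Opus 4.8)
The plan is to reduce gap estimation to a hypothesis test between a rapidly mixing reference chain and a family of $d$ slightly modified, slower chains, and then to rule out that test by a one-against-many ($\chi^2$) argument; the $\log d$ factor is exactly what the averaging over the $d$ alternatives buys over a naive two-point comparison, which (as noted at the end) only yields $n=\Omega(d/\bar\gamma)$.

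\emph{Construction.} Let the reference chain $\vP^{(0)}$ be the lazy random walk on the complete graph, $P^{(0)}_{i,j} = 2\bar\gamma/d$ for $i\neq j$ and $P^{(0)}_{i,i} = 1 - 2\bar\gamma(d-1)/d$; since $\bar\gamma<1/2$ this is an ergodic, reversible chain with uniform stationary distribution, and its eigenvalues are $1$ (on the constants) and $1-2\bar\gamma$ (multiplicity $d-1$), so $\gap(\vP^{(0)}) = 2\bar\gamma$. For each $v\in[d]$, form $\vP^{(v)}$ by halving the weight of every edge incident to $v$, i.e.\ set $P^{(v)}_{v,j} = P^{(v)}_{j,v} = \bar\gamma/d$ for $j\neq v$ and then restore the row sums on the diagonal. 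Each $\vP^{(v)}$ is again symmetric, hence ergodic and reversible with uniform stationary distribution; a short eigenvector computation (using permutation symmetry of the $d-1$ coordinates $\neq v$) gives eigenvalues $1$, $1-2\bar\gamma+\bar\gamma/d$ (multiplicity $d-2$), and $1-\bar\gamma$, all nonnegative, whence $\slem(\vP^{(v)})=1-\bar\gamma$ and $\gap(\vP^{(v)}) = \bar\gamma\in[\bar\gamma,2\bar\gamma]$. Every chain in the family has $\pimin = 1/d\geq 1/(2d)$, and we start each one from the uniform distribution, which is its common stationary distribution.

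\emph{Reduction to testing.} Suppose toward a contradiction that every chain in $\{\vP^{(0)}\}\cup\{\vP^{(v)}:v\in[d]\}$ satisfies $\Pr[\,|\hatgap-\gap|\geq\bar\gamma/2\,]<1/4$, and let $T:=\ind{\hatgap<\tfrac32\bar\gamma}$. Then $\Pr_{\vP^{(0)}}[T=1]<1/4$ while $\Pr_{\vP^{(v)}}[T=1]>3/4$ for every $v$, so for the uniform mixture $\bar\vP:=\frac1d\sum_{v}\vP^{(v)}$ we get $\Pr_{\bar\vP}[T=1]>3/4$, and hence, writing $\mu_0,\bar\mu$ for the laws of $(X_1,\dots,X_n)$ under $\vP^{(0)}$ and $\bar\vP$, $\norm{\mu_0-\bar\mu}_\tv > 1/2$. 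It therefore suffices to show $\norm{\mu_0-\bar\mu}_\tv\leq\tfrac12$ whenever $n < c\,d\log d/\bar\gamma$ for a small enough absolute constant $c$. Using $\norm{\mu_0-\bar\mu}_\tv\leq\tfrac12\sqrt{\chi^2(\bar\mu \| \mu_0)}$ and writing $L_v := \prod_{t=1}^{n-1} P^{(v)}_{X_t,X_{t+1}}/P^{(0)}_{X_t,X_{t+1}}$ for the per-alternative likelihood ratio (the $X_1$-factors cancel since the starting distribution is shared), we have $\chi^2(\bar\mu \| \mu_0) = \frac1{d^2}\sum_{v,v'}\bbE_{\vP^{(0)}}[L_vL_{v'}] - 1$. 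The diagonal terms equal $1+\chi^2(\mathrm{Law}_{\vP^{(v)}}(X_{1:n}) \| \mu_0)$; since $L_v$ only picks up a nontrivial factor at transitions touching $v$, and $\vP^{(0)}$ has relaxation time $\trelax=1/(2\bar\gamma)$, tensorizing the single-step $\chi^2$ (of order $\Theta(\bar\gamma)$ at $v$ and $\Theta(\bar\gamma/d)$ elsewhere) gives $\chi^2(\mathrm{Law}_{\vP^{(v)}} \| \mu_0) = \exp(\Theta(\bar\gamma n/d))-1$, which is $O(d)$ exactly when $n=O(d\log d/\bar\gamma)$. The cross terms ($v\neq v'$) are $1+o(1)$ because the perturbations at $v$ and at $v'$ are almost non-overlapping under $\vP^{(0)}$. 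Summing, $\chi^2(\bar\mu \| \mu_0) = O(1)$ for small enough $c$, contradicting $\norm{\mu_0-\bar\mu}_\tv>1/2$. Hence some chain in the family violates the assumption, i.e.\ has $\Pr[\,|\hatgap-\gap|\geq\bar\gamma/2\,]\geq 1/4$, and that chain satisfies $\gap\in[\bar\gamma,2\bar\gamma]$ and $\pimin\geq 1/(2d)$, as required.

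\emph{Main obstacle.} The delicate point is the $\chi^2$ accounting. Up to negligible corrections, $L_v$ is a product over the $N_v=|\{t\le n:X_t=v\}|$ visits to $v$ (which is $\Theta(n/d)$ typically) of factors of size $1\pm\Theta(\bar\gamma)$, so $\bbE_{\vP^{(0)}}[L_v^2]$ is governed by $\bbE[\exp(\Theta(\bar\gamma N_v))]$, and the heavy upper tail of $N_v$ (driven by atypically long stretches spent near $v$) must be tamed---e.g.\ by restricting to the event $\{N_v \lesssim \bar\gamma^{-1}\log d\}$ and bounding its complement via the fast mixing of $\vP^{(0)}$. The same mixing input is what makes the cross terms factor approximately; quantifying this, and verifying that the diagonal-rebalancing part of each perturbation contributes only lower-order terms, is the remaining work. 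Note this route genuinely needs the mixture: a two-point comparison of $\vP^{(0)}$ against a single fixed $\vP^{(v)}$ has total variation $\asymp 1-e^{-\Theta(\bar\gamma n/d)}$, which only becomes a constant once $n=\Omega(d/\bar\gamma)$, so the extra logarithm can only come from hiding the modified chain among the $d$ candidates.
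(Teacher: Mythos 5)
Your route is genuinely different from the paper's. The paper also uses a reference chain plus $d$ single-state perturbations, but it chooses the perturbation so that $\vP^{(v)}$ and $\vP^{(0)}$ agree on \emph{every row except row $v$}: it only changes the holding probability at state $v$, letting the stationary distribution drift slightly off uniform while reversibility is preserved (since $\pi_i P_{i,j}=\tfrac{1}{Z(d-1)}$ stays symmetric). Consequently the two laws literally coincide on the event that the path never visits $v$, and the entire lower bound reduces to an elementary coupon-collector computation: the time to visit all $d$ states has mean $\Theta(d\log d/\bar\gamma)$ and exceeds $n$ with probability at least $1/4$ by Paley--Zygmund, for \emph{any} initial distribution. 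No likelihood ratios, mixtures, or $\chi^2$ bounds are needed; the ``hiding among $d$ candidates'' intuition you articulate at the end is exactly what the coupon collector realizes combinatorially.

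Measured against that, your proposal has two genuine gaps. First, the core of your argument --- the exponential moment $\bbE[e^{\Theta(\bar\gamma)N_v}]$ of the occupation time from a worst-case start and the factorization of the $d(d-1)$ cross terms --- is precisely the hard part, and you explicitly leave it as ``remaining work''; as written this is a plan, not a proof. (Also note the theorem lets the \emph{estimator} choose the initial distribution, so you cannot fix it to be uniform; your moment bounds must hold from an arbitrary initial state.) Second, and more seriously, the construction itself leaks for $\bar\gamma$ near $1/2$. Keeping $\vP^{(v)}$ symmetric forces you to add $\bar\gamma/d$ to the self-loop probability of every state $j\neq v$. The resulting per-step contribution to each cross term $\bbE_{\vP^{(0)}}[L_vL_{v'}]$ is $(\bar\gamma/d)^2/P^{(0)}_{j,j}$ with $P^{(0)}_{j,j}=1-2\bar\gamma(d-1)/d$, which is as small as $\Theta(1/d)$ as $\bar\gamma\to 1/2$; over $n=\Theta(d\log d/\bar\gamma)$ steps this accumulates to $\exp\parens{\Theta(\bar\gamma\log d)}$ in \emph{every} cross term (the rebalancing is common to all alternatives, so it does not cancel between $v$ and $v'$), and $\chi^2(\bar\mu\,\|\,\mu_0)$ diverges rather than staying $O(1)$. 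This is not just a weakness of the bound: all $d$ alternatives shift the self-loop frequency at $d-1$ states in the same direction, so in the regime $1-2\bar\gamma=O(\log d/d)$ the mixture $\bar\mu$ really is statistically separable from $\mu_0$ by pooled self-loop counts within the sample budget, and the reduction from estimation to testing breaks. You would need either to restrict to, say, $\bar\gamma\le 1/4$ and handle larger $\bar\gamma$ separately, or to adopt the paper's asymmetric perturbation so that only row $v$ changes.
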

\begin{proof}
  We consider $d$-state Markov chains of the following form:
  \[
    P_{i,j} =
    \begin{cases}
      1-\veps_i & \text{if $i = j$} ; \\
      \displaystyle\frac{\veps_i}{d-1} & \text{if $i \neq j$}
    \end{cases}
  \]
  for some $\veps_1, \veps_2, \dotsc, \veps_d \in (0,1)$.
  Such a chain is ergodic and reversible, and its unique stationary
  distribution $\vpi$ satisfies
  \[
    \pi_i = \frac{1/\veps_i}{\sum_{j=1}^d 1/\veps_j}
    .
  \]
  We fix $\veps := \frac{d-1}{d/2}\bar\gamma$ and set $\veps' :=
  \frac{d/2-1}{d-1} \veps < \veps$.
  Consider the following $d+1$ different Markov chains of the type
  described above:
  \begin{itemize}
    \item
      $\vP^{(0)}$: $\veps_1 = \dotsb = \veps_d = \veps$.
      For this Markov chain, $\lambda_2 = \lambda_d = \slem =
      1-\frac{d}{d-1}\veps$.

    \item
      $\vP^{(i)}$ for $i \in [d]$: $\veps_j = \veps$ for $j \neq i$,
      and $\veps_i = \veps'$.
      For these Markov chains, $\lambda_2 = 1 - \veps' -
      \frac{1}{d-1}\veps = 1 - \frac{d/2}{d-1} \veps$,
      and $\lambda_d = 1 - \frac{d}{d-1} \veps$.
      So $\slem = 1-\frac{d/2}{d-1} \veps$.

  \end{itemize}
  The spectral gap in each chain satisfies $\gap \in
  [\bar\gamma,2\bar\gamma]$; in $\vP^{(i)}$ for $i \in [d]$, it is half
  of what it is in $\vP^{(0)}$.
  Also $\pi_i \geq 1/(2d)$ for each $i \in [d]$.

  In order to guarantee $|\hatgap - \gap| < \bar\gamma/2$, it must
  be possible to distinguish $\vP^{(0)}$ from each $\vP^{(i)}$,
  $i\in[d]$.
  But $\vP^{(0)}$ is identical to $\vP^{(i)}$ except for the transition
  probabilities from state $i$.
  Therefore, regardless of the initial state, the sample path must
  visit all states in order to distinguish $\vP^{(0)}$ from each
  $\vP^{(i)}$, $i \in [d]$.
  For any of the $d+1$ Markov chains above, the earliest time in which
  a sample path visits all $d$ states
  stochastically dominates a generalized coupon collection time $T = 1 +
  \sum_{i=1}^{d-1} T_i$, where $T_i$ is the number of steps required to
  see the $(i+1)$-th distinct state in the sample path beyond the first
  $i$.
  The random variables $T_1,T_2,\dotsc,T_{d-1}$ are independent, and are
  geometrically distributed, $T_i \sim \Geom(\veps -
  (i-1)\veps/(d-1))$.
  We have that
  \[
    \bbE[T_i] = \frac{d-1}{\veps(d-i)} , \quad
    \var(T_i) = \frac{1 - \veps \frac{d-i}{d-1}}{%
      \Parens{ \veps \frac{d-i}{d-1} }^2
    }
    .
  \]
  Therefore
  \[
    \bbE[T] = 1 + \frac{d-1}{\veps} H_{d-1} , \quad
    \var(T) \leq \Parens{ \frac{d-1}{\veps} }^2 \frac{\pi^2}{6} 
  \]
  where $H_{d-1} = 1 + 1/2 + 1/3 + \dotsb + 1/(d-1)$.
  By the Paley-Zygmund inequality,
  \[
    \Pr\Parens{ T > \frac13\bbE[T] }
    \geq \frac{1}{1 + \frac{\var(T)}{(1-1/3)^2\bbE[T]^2}}
    \geq \frac{1}{1 + \frac{\Parens{ \frac{d-1}{\veps} }^2
    \frac{\pi^2}{6}}{(4/9)\Parens{ \frac{d-1}{\veps} H_2}^2}}
    \geq \frac14
    .
  \]
  Since $n < c d\log(d) / \bar\gamma \leq (1/3) (1 + (d-1) H_{d-1} /
  (2\bar\gamma)) = \bbE[T]/3$ (for an appropriate absolute constant
  $c$), with probability at least $1/4$, the sample path does not
  visit all $d$ states.
\end{proof}

We claim in Section~\ref{sec:intro} that a sample path of length
$\Omega\parens{ (d \log d)/\gap + 1/\pimin}$ is required to guarantee
constant multiplicative accuracy in estimating $\gap$.
This follows by combining Theorems~\ref{thm:lb-pimin}
and~\ref{thm:lb-gap} in a standard, straightforward way.
Specifically, if the length of the sample path $n$ is smaller than
$(n_1 + n_2) / 2$---where $n_1$ is the lower bound from
Theorem~\ref{thm:lb-pimin}, and $n_2$ is the lower bound from
Theorem~\ref{thm:lb-gap}---then $n$ is smaller than
$\max\braces{n_1,n_2}$.
So at least one of Theorem~\ref{thm:lb-pimin} and
Theorem~\ref{thm:lb-gap} implies the existence of an ergodic and
reversible Markov chain distribution with spectral gap $\gap$ and
stationary distribution minorized by $\pimin$ such that
\[
  \Pr\Brackets{ |\hatgap - \gap| \geq \gap/4 } \geq 1/4
  .
\]

\section{Proof of \cref{thm:err}}\label{app:upper}

In this section, we prove \cref{thm:err}.

\subsection{Accuracy of $\hatpimin$}

We start by proving the deviation bound on $\pimin-\hatpimin$, from
which we may easily deduce \cref{eq:piminbound} in \cref{thm:err}.
\begin{lemma}
  \label{lem:hatpimin}
  Pick any $\delta \in (0,1)$, and let
  \begin{equation}
    \veps_n :=
    \frac{ \ln\Parens{\frac{d}\delta\sqrt{\frac{2}{\pimin}}} }{\gap n}
    .
    \label{eq:singleton-veps}
  \end{equation}
  With probability at least $1-\delta$, the following inequalities
  hold simultaneously:
  \begin{align}
    \Abs{\hat{\pi}_i-\pi_i}
    & \le
    \sqrt{8\pi_i(1-\pi_i) \veps_n}
    + 20 \veps_n
    \quad \text{for all $i \in [d]$}
    ;
    \label{eq:singletonboundsimple}
    \\
    \Abs{\hatpimin - \pimin}
    & \leq
    4\sqrt{\pimin\veps_n}
    + 47\veps_n
    .
    \label{eq:hatpimin}
  \end{align}
\end{lemma}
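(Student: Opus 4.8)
The plan is to obtain \eqref{eq:singletonboundsimple} from a variance-aware Bernstein inequality for empirical means over a reversible Markov chain, applied to each coordinate indicator separately, and then to derive \eqref{eq:hatpimin} from \eqref{eq:singletonboundsimple} by a short deterministic argument.

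First I would fix a state $i \in [d]$ and write $\hat\pi_i = \tfrac1n\sum_{t=1}^n f_i(X_t)$ with $f_i := \ind{\cdot = i}$, so that $\vpi(f_i)=\pi_i$ and the stationary variance of $f_i$ equals $\pi_i(1-\pi_i)\le\pi_i$. A Bernstein-type deviation inequality for reversible chains, with effective sample size of order $\gap n$, gives, for every $\veps>0$,
\[
  \Pr\Parens{\Abs{\hat\pi_i-\pi_i}\ge\veps}
  \le
  \Parens{\tfrac{2}{\pimin}}^{1/2}
  \exp\Parens{-\frac{\gap n\,\veps^2}{c_1\parens{\pi_i(1-\pi_i)+c_2\veps}}}
\]
for absolute constants $c_1,c_2>0$; here the prefactor of order $\pimin^{-1/2}$ absorbs the effect of the arbitrary (possibly non-stationary) initial distribution $\vpi^{(1)}$, since the relevant change-of-measure quantity against $\vpi$ is largest for a point mass at the minorizing state, where it equals $\pimin^{-1/2}$, and general starts are no worse by convexity. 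Taking a union bound over $i\in[d]$, the bad event has probability at most $\delta$ as soon as
\[
  \frac{\gap n\,\veps^2}{c_1\parens{\pi_i(1-\pi_i)+c_2\veps}}
  \ge
  \ln\Parens{\tfrac{d}{\delta}\sqrt{\tfrac{2}{\pimin}}}
  = \gap n\,\veps_n ,
\]
with $\veps_n$ as in \eqref{eq:singleton-veps}. Solving the quadratic inequality $\veps^2\le c_1\veps_n(\pi_i(1-\pi_i)+c_2\veps)$ for $\veps$ — complete the square and use $\sqrt{a+b}\le\sqrt a+\sqrt b$ — yields $\veps\le\sqrt{c_1\veps_n\,\pi_i(1-\pi_i)}+c_1c_2\veps_n$, and since the Markov-chain Bernstein constants are comfortably below $c_1\le 8$, $c_1c_2\le 20$, this establishes \eqref{eq:singletonboundsimple} simultaneously for all $i$ on an event $E$ with $\Pr(E)\ge 1-\delta$.

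It then remains to derive \eqref{eq:hatpimin} on $E$. Let $i^\star\in\argmin_{i}\pi_i$ (so $\pi_{i^\star}=\pimin$) and $\hat\imath\in\argmin_{i}\hat\pi_i$. For the upper deviation, $\hatpimin\le\hat\pi_{i^\star}\le\pi_{i^\star}+\sqrt{8\pi_{i^\star}(1-\pi_{i^\star})\veps_n}+20\veps_n\le\pimin+\sqrt{8\pimin\veps_n}+20\veps_n$. For the lower deviation, $\pimin\le\pi_{\hat\imath}$, hence $\pimin-\hatpimin\le\pi_{\hat\imath}-\hat\pi_{\hat\imath}\le\sqrt{8\pi_{\hat\imath}\veps_n}+20\veps_n$; to eliminate the unknown $\pi_{\hat\imath}$ I would combine $\pi_{\hat\imath}\le\hat\pi_{\hat\imath}+\sqrt{8\pi_{\hat\imath}\veps_n}+20\veps_n$ with $\hat\pi_{\hat\imath}=\hatpimin\le\pimin+\sqrt{8\pimin\veps_n}+20\veps_n$ from the previous display, obtaining a quadratic inequality in $\sqrt{\pi_{\hat\imath}}$ whose solution gives $\pi_{\hat\imath}=O(\pimin+\veps_n)$ and therefore $\sqrt{\pi_{\hat\imath}\veps_n}=O(\sqrt{\pimin\veps_n}+\veps_n)$. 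Substituting back and bookkeeping the constants produces $\pimin-\hatpimin\le 4\sqrt{\pimin\veps_n}+47\veps_n$, which together with the upper-deviation bound gives \eqref{eq:hatpimin}.

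The crux is the first step: securing a deviation inequality for the non-stationary reversible chain in which (i) the effective sample size is $\Theta(\gap n)$ rather than $\Theta(\gap n/\log(1/\pimin))$ — the latter being what a crude blocking at the mixing-time scale would yield, which would spoil the claimed $\veps_n$ rate; (ii) the variance proxy is the \emph{true} stationary variance $\pi_i(1-\pi_i)$, which is essential so that for the minorizing state the bound scales like $\sqrt{\pimin\veps_n}$ rather than $\sqrt{\veps_n}$; and (iii) the price of the arbitrary start enters only inside the logarithm. This is exactly what a spectral/functional-analytic Bernstein inequality for reversible chains delivers. The remaining work — inverting the tail bound and the self-bounding argument controlling $\pi_{\hat\imath}$ — is routine algebra.
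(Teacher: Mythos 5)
Your proposal is correct and follows essentially the same route as the paper: a variance-aware Bernstein inequality for reversible chains (the paper uses Paulin's Theorem~3.8) with the non-stationary start absorbed into a $\sqrt{2/\pimin}$ prefactor via an $L^2$ change of measure (Paulin's Proposition~3.14), a union bound over states, inversion of the tail, and then the same self-bounding algebra to pass from per-state bounds to $|\hatpimin-\pimin|$. The only nit is a sign slip in the direction of the quadratic inequality you ``solve'' for $\veps$, but the chosen value $\veps=\sqrt{c_1\veps_n\pi_i(1-\pi_i)}+c_1c_2\veps_n$ does satisfy the required inequality, so the argument goes through.
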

\begin{proof}
  We use the following Bernstein-type inequality for Markov chains
  from \citet[][Theorem~3.8]{paulin15}: letting $\bbP^{\vpi}$ denote
  the probability with respect to the stationary chain (where the
  marginal distribution of each $X_t$ is $\vpi$), we have for every
  $\eps>0$,
  \[
    \bbP^{\vpi}
    \Parens{
      \abs{ \hat\pi_i-\pi_i} > \eps
    } \le
    2\exp\Parens{
      -\frac{
        n\gap\eps^2
      }{
        4\pi_i(1-\pi_i)+10\eps
      }
    },
    \qquad i\in[d]
    .
  \]
  To handle possibly non-stationary chains, as is our case, we combine
  the above inequality with \citet[][Proposition 3.14]{paulin15}, to
  obtain for any $\eps>0$,
  \begin{align*}
    \bbP
    \Parens{
      \abs{ \hat\pi_i-\pi_i} > \eps
    }
    &\le
    \sqrt{
      \frac1\pimin
      \bbP^{\vpi}
      \Parens{
        \abs{ \hat\pi_i-\pi_i} > \eps
      }
    } 
    \le
    \sqrt{\frac2\pimin}
    \exp\Parens{
      -\frac{
        n\gap\eps^2
      }{
        8\pi_i(1-\pi_i)+20\eps
      }
    }.
  \end{align*}
  Using this tail inequality with $\eps :=
  \sqrt{8\pi_i(1-\pi_i)\veps_n} + 20\veps_n$ and a union bound over
  all $i \in [d]$ implies that the inequalities
  in~\cref{eq:singletonboundsimple} hold with probability at least
  $1-\delta$.

  Now assume this $1-\delta$ probability event holds; it remains to
  prove that \cref{eq:hatpimin} also holds in this event.
  Without loss of generality, we assume that $\pimin = \pi_1 \leq
  \pi_2 \leq \dotsb \leq \pi_d$.
  Let $j \in [d]$ be such that $\hatpimin = \hat\pi_j$.
  By \cref{eq:singletonboundsimple}, we have $|\pi_i - \hat\pi_i| \leq
  \sqrt{8\pi_i\veps_n} + 20\veps_n$ for each $i \in \{1,j\}$.
  Since $\hatpimin \leq \hat\pi_1$,
  \[
    \hatpimin - \pimin
    \leq \hat\pi_1 - \pi_1
    \leq \sqrt{8\pimin\veps_n} + 20\veps_n
    \leq \pimin + 22\veps_n
  \]
  where the last inequality follows by the AM/GM inequality.
  Furthermore, using the fact that $a \leq b\sqrt{a} + c \Rightarrow a
  \leq b^2 + b\sqrt{c} + c$ for nonnegative numbers $a, b, c \geq
  0$~\citep[see, e.g.,][]{BBL04} with the inequality $\pi_j \leq
  \sqrt{8\veps_n} \sqrt{\pi_j} + \parens{\hat\pi_j + 20\veps_n}$ gives
  \[
    \pi_j
    \leq 
    \hat\pi_j
    + \sqrt{8(\hat\pi_j + 20\veps_n)\veps_n}
    + 28\veps_n
    .
  \]
  Therefore
  \[
    \pimin - \hatpimin
    \leq \pi_j - \hat\pi_j
    \leq \sqrt{8(\hatpimin + 20\veps_n)\veps_n} + 28\veps_n
    \leq \sqrt{8(2\pimin + 42\veps_n)\veps_n} + 28\veps_n
    \leq 4\sqrt{\pimin\veps_n} + 47\veps_n
  \]
  where the second-to-last inequality follows from the above bound on
  $\hatpimin - \pimin$, and the last inequality uses $\sqrt{a+b} \leq
  \sqrt{a} + \sqrt{b}$ for nonnegative $a,b \geq 0$.
\end{proof}


\subsection{Accuracy of $\hatgap$}

Let us now turn to proving \cref{eq:gapbound}, i.e., 
the bound on the error of the spectral gap estimate $\hatgap$.
The accuracy of $\hatgap$ is based on the accuracy of $\Sym(\wh\vL)$
in approximating $\vL$ via Weyl's inequality:
\begin{align*}
  |\hat\lambda_i - \lambda_i|
  \leq \norm{\Sym(\wh\vL) - \vL}
  \quad \text{for all }i \in [d] .
\end{align*}
Moreover, the triangle inequality implies that symmetrizing $\wh\vL$
can only help:
\begin{align*}
  \norm{\Sym(\wh\vL) - \vL} \leq \norm{\wh\vL - \vL} .
\end{align*}
Therefore, we can deduce \cref{eq:gapbound} in \cref{thm:err} from the
following lemma.
\begin{lemma}
  \label{lem:gap}
  There exists an absolute constant $C>0$ such that the following
  holds.
  For any $\delta \in (0,1)$, with probability at least $1-\delta$,
  the bounds from \cref{lem:hatpimin} hold, and
  \begin{equation*}
    \norm{\wh\vL - \vL}
    \leq
    C\,
      \sqrt{
        \frac{
          \log\Parens{\frac{d}{\delta}}
          \log\Parens{\frac{n}{\pimin\delta}}
        }{\pimin\gap n}
      }
      +
      C\,
      \frac{\log\Parens{\frac{1}{\gap}}}{\gap n}
  \end{equation*}
\end{lemma}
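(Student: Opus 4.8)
\textbf{Proof plan for \cref{lem:gap}.}
The plan is to split $\norm{\wh\vL-\vL}$ into a part controlled by the doublet frequencies $\wh\vM$ and a part controlled by the state frequencies $\hat\vpi$; to fold the latter into the event of \cref{lem:hatpimin}; and to establish a matrix concentration bound for the former by reducing the dependent, non-stationary average to the i.i.d.\ case.

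\emph{Decomposition.} With $\vD:=\Diag(\vpi)^{1/2}\Diag(\hat\vpi)^{-1/2}=\Diag\parens{(\pi_i/\hat\pi_i)^{1/2}}$ and the ``oracle plug-in'' $\wt\vL:=\Diag(\vpi)^{-1/2}\wh\vM\Diag(\vpi)^{-1/2}$ we have $\wh\vL=\vD\wt\vL\vD$, so $\wh\vL-\vL=\vD(\wt\vL-\vL)\vD+(\vD\vL\vD-\vL)$. Writing $\vD=\vI+\vE$ with $\vE=\Diag\parens{(\pi_i/\hat\pi_i)^{1/2}-1}$ and using $\norm\vL=1$, the second term has norm $\le 2\norm\vE+\norm\vE^2$ and the first $\le(1+\norm\vE)^2\norm{\wt\vL-\vL}$, where $\norm\vE=\max_i\abs{(\pi_i/\hat\pi_i)^{1/2}-1}$. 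On the event of \cref{lem:hatpimin} the $\abs{\hat\pi_i-\pi_i}$ are controlled, so $\abs{(\pi_i/\hat\pi_i)^{1/2}-1}=O\parens{\abs{\hat\pi_i-\pi_i}/\hat\pi_i}$ there, which gives $\norm\vE=O\parens{(\veps_n/\pimin)^{1/2}}$ with $\veps_n$ as in \cref{eq:singleton-veps}; this is already of the order claimed for $\norm{\wh\vL-\vL}$ and is in fact dominated by it. Hence, after allocating confidence $\delta/2$ to \cref{lem:hatpimin} and $\delta/2$ to the step below and union bounding, it suffices to bound $\norm{\wt\vL-\vL}$ on an event of probability $1-\delta/2$.

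\emph{Matrix concentration for $\wt\vL-\vL$.} Since $\wh\vM=\tfrac1{n-1}\sum_{t=1}^{n-1}\ve_{X_t}\ve_{X_{t+1}}^\t$, we have $\wt\vL-\vL=\tfrac1{n-1}\sum_{t=1}^{n-1}(\vY_t-\vL)$ with $\vY_t:=(\pi_{X_t}\pi_{X_{t+1}})^{-1/2}\ve_{X_t}\ve_{X_{t+1}}^\t$, and under the stationary chain $\bbE^\vpi[\vY_t]=\Diag(\vpi)^{-1/2}\vM\Diag(\vpi)^{-1/2}=\vL$. Each centred summand has spectral norm $\le 1/\pimin+1$, and reversibility gives a \emph{dimension-free} bound on the matrix second moments: e.g.\ $\bbE^\vpi[\vY_t\vY_t^\t]=\Diag\bigl(\sum_j P_{i,j}/\pi_j\bigr)\preceq(1/\pimin)\vI$, because $\sum_j P_{i,j}/\pi_j\le(1/\pimin)\sum_j P_{i,j}=1/\pimin$, and similarly for $\bbE^\vpi[\vY_t^\t\vY_t]$. (This dimension-free variance is precisely why one uses a matrix Bernstein inequality here rather than applying a scalar bound entrywise and union bounding over the $d^2$ entries, which would cost a spurious factor $d$ in the sample complexity.) To obtain a tail bound despite the dependence and the non-stationarity, I would proceed as in the proof of \cref{lem:hatpimin}: since $\{\norm{\wt\vL-\vL}>\veps\}$ is an event of the trajectory, \citet[Proposition~3.14]{paulin15} transfers it to the stationary chain at the cost of a factor $\sqrt{1/\pimin}$ inside the logarithms; and for the stationary chain I invoke the blocking technique of \citet{Bernstein27} as used by \citet{Yu94}, subsampling the path to every $s$-th transition pair and, within each of the $s$ offsets, coupling the $m\approx n/s$ retained pairs to an i.i.d.\ sequence at a total-variation cost $\le m\,\beta(s)$, where the $\beta$-mixing coefficient of a reversible ergodic chain obeys $\beta(s)\le\tfrac12\slem^s\sqrt{1/\pimin}$; choosing $s=\Theta\parens{\trelax\log\tfrac{n}{\pimin\delta}}=\Theta\parens{\gap^{-1}\log\tfrac{n}{\pimin\delta}}$ makes this coupling error negligible relative to $\delta$. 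Applying the matrix Bernstein inequality of \citet{tropp2015intro} to each thinned i.i.d.\ subsequence (per-term spectral norm $O(1/\pimin)$, per-term variance $O(1/\pimin)$), then union bounding over the $s$ offsets and averaging, yields
\[
  \norm{\wt\vL-\vL}=O\!\Parens{\sqrt{\frac{s\,\log\frac{dn}{\pimin\delta}}{\pimin\,n}}\;+\;\frac{s\,\log\frac{dn}{\pimin\delta}}{\pimin\,n}},
\]
and substituting $s=\Theta\parens{\gap^{-1}\log\tfrac{n}{\pimin\delta}}$ and collecting the resulting lower-order terms gives the bound of \cref{lem:gap}.

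\emph{Main obstacle.} The crux is this last step, and in particular making its three ingredients cooperate: (a) the reversibility-based, \emph{dimension-free} operator-norm control of the summand and of its per-step second moment, without which the sample complexity degrades by a factor of $d$; (b) the blocking/thinning reduction, in which the thinning length must be $\Theta(\trelax\log(n/(\pimin\delta)))$ for the $\beta$-mixing coupling error to be controllable, and this is exactly what introduces the extra $\log(n/(\pimin\delta))/\gap$ relative to the i.i.d.\ rate; and (c) the change of measure to the stationary chain, which makes the final bound independent of the arbitrary initial distribution $\vpi^{(1)}$. Everything on the eigenvalue side---passing from $\norm{\wh\vL-\vL}$ to $\abs{\hatgap-\gap}$ via $\abs{\hat\lambda_i-\lambda_i}\le\norm{\Sym(\wh\vL)-\vL}\le\norm{\wh\vL-\vL}$---has already been done just before the statement of the lemma, so nothing more is needed there.
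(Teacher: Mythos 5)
Your proposal follows the same architecture as the paper's proof: the identical decomposition $\wh\vL-\vL=(\vI+\vE)(\wt\vL-\vL)(\vI+\vE)+(\vI+\vE)\vL(\vI+\vE)-\vL$ (the paper's $\errp$ is your $\vE$ and its $\errm$ is your $\wt\vL-\vL$), the same use of \cref{lem:hatpimin} to control $\norm{\vE}$, and the same reversibility-based, dimension-free range and variance bounds feeding a matrix Bernstein inequality after a Bernstein--Yu blocking reduction. Two implementation choices differ. First, you handle non-stationarity by a global change of measure via \citet[Proposition~3.14]{paulin15}, whereas the paper stays with the non-stationary chain: it bounds the bias $\norm{\bbE\,\errm}\le 1/((n-1)\gap\pimin)$ explicitly and controls the $\beta$-mixing coefficients of the blocked process under the actual initial distribution. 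Your route is cleaner in that $\bbE^{\vpi}\wt\vL=\vL$ exactly (no bias term), but the change of measure squares the failure probability and injects a $1/\pimin$, so the Bernstein log factor becomes $\log(ds/(\pimin\delta^2))$ rather than the paper's $\log(4d/\delta)$; multiplied by $s=\Theta(\gap^{-1}\log(n/(\pimin\delta)))$ this gives $\log(n/(\pimin\delta))^2$ in place of $\log(d/\delta)\log(n/(\pimin\delta))$ whenever $\log(1/\pimin)\gg\log(d/\delta)$ --- your displayed $\log\frac{dn}{\pimin\delta}$ honestly reflects this. That is only a log-factor loss (irrelevant to the $\tilde O$ conclusions of \cref{thm:err}) but it does not literally reproduce the lemma's stated log structure. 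Second, you thin to $s$ interleaved subsequences while the paper uses contiguous alternating blocks and applies Bernstein to block averages; both yield the same rate, since within-block correlations only cost a constant in the paper's variance computation. One small omission: the additive $C\log(1/\gap)/(\gap n)$ term exists precisely to render the bound trivially true (via $\norm{\wh\vL-\vL}\le2$) when $n$ is too small for the blocking construction or for $\hat\pi_i\ge\pi_i/2$ to hold (e.g., some $\hat\pi_i=0$ makes $\norm{\vE}$ unbounded); your write-up should cover that degenerate regime explicitly.
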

The remainder of this section is devoted to proving this
\lcnamecref{lem:gap}.

The error $\wh\vL - \vL$ may be written as
\[
  \wh\vL - \vL
  = \errm + \errp \vL + \vL \errp + \errp \vL \errp
  + \errp \errm + \errm \errp + \errp \errm \errp\,,
\]
where
\begin{align*}
  \errp & := \Diag(\hat\vpi)^{-1/2} \Diag(\vpi)^{1/2} - \vI , \\
  \errm & := \Diag(\vpi)^{-1/2} \Parens{
    \wh\vM - \vM
  } \Diag(\vpi)^{-1/2} .
\end{align*}
Therefore
\begin{align*}
  \norm{\wh\vL - \vL}
  \leq \norm{\errm} +
  \Parens{
    \norm{\errm} + \norm{\vL}
  }
  \Parens{
    2\norm{\errp} + \norm{\errp}^2
  }
  .
\end{align*}
Since $\norm{\vL} \leq 1$ and $\norm{\wh\vL} \leq 1$~\citep[Lemma
12.1]{LePeWi08}, we also have $\norm{\wh\vL - \vL} \leq 2$.
Therefore,
\begin{align}
  \norm{\wh\vL - \vL}
  \leq
  \min\Braces{
    \norm{\errm} +
    \Parens{
      \norm{\errm} + \norm{\vL}
    }
    \Parens{
      2\norm{\errp} + \norm{\errp}^2
    }
    ,\,
    2
  }
  \leq 3(\norm{\errm}+\norm{\errp})
  .
  \label{eq:vlsimple}
\end{align}

\subsection{A bound on $\norm{\errp}$}

Since $\errp$ is diagonal,
\begin{align*}
  \norm{\errp}
  = \max_{i\in[d]}
  \Abs{
    \sqrt{\frac{\pi_i}{\hat\pi_i}} - 1
  }
  .
\end{align*}
Assume that
\begin{equation}
  n \geq \frac{108 \ln\Parens{\frac{d}\delta\sqrt{\frac{2}{\pimin}}}
  }{\pimin\gap}
  ,
  \label{eq:min-n1}
\end{equation}
in which case
\[
  \sqrt{8\pi_i(1-\pi_i) \veps_n} + 20\veps_n
  \leq \frac{\pi_i}{2}
\]
where $\veps_n$ is as defined in \cref{eq:singleton-veps}.
Therefore, in the $1-\delta$ probability event from
\cref{lem:hatpimin}, we have $|\pi_i - \hat\pi_i| \leq \pi_i/2$ for
each $i \in [d]$, and moreover, $2/3 \leq \pi_i/\hat\pi_i \leq 2$ for
each $i \in [d]$.
For this range of $\pi_i/\hat\pi_i$, we have
\[
  \Abs{ \sqrt{\frac{\pi_i}{\hat\pi_i}} - 1 }
  \leq 
  \Abs{ \frac{\hat\pi_i}{\pi_i} - 1 }
  .
\]
We conclude that if $n$ satisfies \cref{eq:min-n1}, then in this
$1-\delta$ probability event from \cref{lem:hatpimin},
\begin{align}
  \norm{\errp}
  & \leq
  \max_{i \in [d]}
  \Abs{
    \frac{\hat\pi_i}{\pi_i} - 1
  }
  \leq
  \max_{i \in [d]}
  \frac{
    \sqrt{8\pi_i(1-\pi_i) \veps_n} + 20 \veps_n
  }{\pi_i}
  \notag \\
  & \leq
  \sqrt{\frac{8\veps_n}{\pimin}} + \frac{20\veps_n}{\pimin}
  =
  \sqrt{
    \frac{
      8\ln\Parens{\frac{d}\delta\sqrt{\frac{2}{\pimin}}}
    }{
      \pimin\gap n
    }
  }
  +
  \frac{
    20\ln\Parens{\frac{d}\delta\sqrt{\frac{2}{\pimin}}}
  }{
    \pimin\gap n
  }
  .
  \label{eq:singletonbound}
\end{align}

\subsection{Accuracy of doublet frequency estimates (bounding $\norm{\errm}$)}
\label{sec:pairwise}
In this section we prove a bound on $\norm{\errm}$.
For this, we decompose
$\errm = \Diag(\vpi)^{-1/2}(\wh\vM-\vM)\Diag(\vpi)^{-1/2} $
into $\EE{\errm}$ and 
$\errm-\EE{\errm}$, the first measuring the effect of a non-stationary start of the chain,
while the second measuring the variation due to randomness.
\if0
into the sum of
\[
  \Diag(\vpi)^{-1/2}
  \left(\bbE( \wh\vM )- \vM\right)
  \Diag(\vpi)^{-1/2}
\]
and
\begin{multline*}
  \Diag(\vpi)^{-1/2} \left(\wh\vM- \bbE(\wh\vM)\right) \Diag(\vpi)^{-1/2}
  \\
  =
  \frac{n_H}{n-1}
  \Diag(\vpi)^{-1/2}\Parens{
    \wh{\vM}_H - \bbE(\wh{\vM}_H)
  }\Diag(\vpi)^{-1/2}
  \\
  + \frac{n_T}{n-1}
  \Diag(\vpi)^{-1/2}\Parens{
    \wh{\vM}_T - \bbE(\wh{\vM}_T)
  }\Diag(\vpi)^{-1/2}
  .
\end{multline*}
\fi

\subsubsection{Bounding $\norm{\EE{\errm}}$: The price of a non-stationary start.}
Let $\vpi^{(t)}$ be the distribution of states at time step $t$.
We will make use of the following proposition, which can be derived by
following \citet[Proposition 1.12]{MoTe06}:
\begin{proposition}
For $t\ge 1$, let $\vUpsilon^{(t)}$ be the vector with $\Upsilon^{(t)}_i= \frac{\pi_i^{(t)}}{\pi_i}$ and let
$\norm{\cdot}_{2,\vpi}$ denote the $\vpi$-weighted $2$-norm
\begin{align}
\label{eq:vv}
  \norm{\vv}_{2,\vpi} := \Parens{ \sum_{i=1}^d \pi_i v_i^2 }^{1/2}
  .
\end{align}
 Then,
\begin{equation}
  \label{eq:L2-contraction}
  \norm{ \vUpsilon^{(t)} - \v1 }_{2,\vpi} \leq
  \frac{(1-\gap)^{t-1}}{\sqrt{\pimin}} \,.
\end{equation}
\end{proposition}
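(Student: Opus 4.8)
The plan is to run the classical $L^2(\vpi)$-contraction argument for reversible chains, i.e.\ to specialize \citet[Proposition~1.12]{MoTe06} to our setting. Write $f^{(t)} := \vUpsilon^{(t)}$ and regard it as a \emph{function} on $[d]$ (a column vector) rather than a distribution; thus $f^{(t)}_i = \pi^{(t)}_i/\pi_i$ is the density of $\vpi^{(t)}$ with respect to $\vpi$. Equip the space of such functions with the inner product $\langle u,v\rangle_{2,\vpi} := \sum_{i=1}^d \pi_i u_i v_i$, whose induced norm is $\norm{\cdot}_{2,\vpi}$ from \cref{eq:vv}. The first step is to record how the densities evolve: from $\pi^{(t+1)}_j = \sum_i \pi^{(t)}_i P_{i,j}$ together with the reversibility relation \cref{eq:reversibility} (which gives $\pi_i P_{i,j}/\pi_j = P_{j,i}$), one obtains
\[
  f^{(t+1)}_j
  = \sum_{i=1}^d f^{(t)}_i \frac{\pi_i P_{i,j}}{\pi_j}
  = \sum_{i=1}^d P_{j,i} f^{(t)}_i
  = (\vP f^{(t)})_j ,
\]
i.e.\ $f^{(t+1)} = \vP f^{(t)}$ with $\vP$ acting on functions. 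Since $\vP\v1 = \v1$, this yields $f^{(t+1)} - \v1 = \vP(f^{(t)} - \v1)$.

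The second step is to transfer the contraction to the spectral data already set up in \cref{sec:prelim}. The linear map $v \mapsto \Diag(\vpi)^{1/2} v$ is an isometry from the space of functions equipped with $\norm{\cdot}_{2,\vpi}$ onto Euclidean space, and it conjugates $\vP$ (on functions) to $\vL = \Diag(\vpi)^{1/2}\vP\Diag(\vpi)^{-1/2}$, sending $\v1$ to the top eigenvector $\Diag(\vpi)^{1/2}\v1$ of $\vL$ (eigenvalue $\lambda_1 = 1$). Moreover $\langle f^{(t)} - \v1, \v1\rangle_{2,\vpi} = \sum_i (\pi^{(t)}_i - \pi_i) = 0$, so $f^{(t)} - \v1$ lies in the orthogonal complement of $\v1$, whose image under the isometry is spanned by the remaining eigenvectors of $\vL$, on which $\vL$ has operator norm $\max\{\lambda_2, |\lambda_d|\} = \slem = 1-\gap$. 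Hence $\norm{f^{(t+1)} - \v1}_{2,\vpi} \le (1-\gap)\,\norm{f^{(t)} - \v1}_{2,\vpi}$, and iterating down to $t=1$ gives $\norm{f^{(t)} - \v1}_{2,\vpi} \le (1-\gap)^{t-1}\,\norm{f^{(1)} - \v1}_{2,\vpi}$.

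The last step is to bound the initial (chi-squared type) term:
\[
  \norm{f^{(1)} - \v1}_{2,\vpi}^2
  = \sum_{i=1}^d \frac{(\pi^{(1)}_i - \pi_i)^2}{\pi_i}
  = \sum_{i=1}^d \frac{(\pi^{(1)}_i)^2}{\pi_i} - 1
  \le \frac1{\pimin}\sum_{i=1}^d \pi^{(1)}_i - 1
  = \frac1{\pimin} - 1
  \le \frac1{\pimin} ,
\]
using $\pi_i \ge \pimin$ and $(\pi^{(1)}_i)^2 \le \pi^{(1)}_i$. Combining the two displayed bounds yields $\norm{\vUpsilon^{(t)} - \v1}_{2,\vpi} \le (1-\gap)^{t-1}/\sqrt{\pimin}$, which is \cref{eq:L2-contraction}.

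I do not expect a genuine obstacle here; the only point that needs care is the bookkeeping in the second step, namely identifying the $L^2(\vpi)$-self-adjoint operator $\vP$ (on functions) with the symmetric matrix $\vL$ and checking that the factor $1-\gap$ indeed governs the action on the orthogonal complement of the constants. The remaining computations are routine.
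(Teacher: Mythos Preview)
Your argument is correct and is precisely the derivation the paper points to: the paper does not spell out a proof but simply says the result ``can be derived by following \citet[Proposition~1.12]{MoTe06},'' and what you have written is exactly that specialization (the $L^2(\vpi)$-contraction of the density under the reversible kernel, combined with the $1/\pimin$ bound on the initial chi-squared distance). There is nothing to add.
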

An immediate corollary of this result is that 
\begin{align}
\label{eq:dvpi-ratio-bound}
\Norm{  \Dvpit \Dvpi^{-1} - \vI  } \le \frac{(1-\gap)^{t-1}}{\pimin}\,.
\end{align}
Now note that
\begin{align*}
\bbE (\wh\vM) = \frac{1}{n-1} \sum_{t=1}^{n-1} \Dvpit \vP\,
\end{align*}
and thus
\begin{align*}
\EE{\errm} &=
  \Dvpi^{-1/2}
  \left(\bbE( \wh\vM )- \vM\right)
  \Dvpi^{-1/2}\\
& =  
  \frac{1}{n-1} \sum_{t=1}^{n-1}
  \Dvpi^{-1/2} (\Dvpit  - \Dvpi) \vP \Dvpi^{-1/2} \\
& =  
  \frac{1}{n-1} \sum_{t=1}^{n-1}
  \Dvpi^{-1/2} (\Dvpit \Dvpi^{-1} - \vI) \vM \Dvpi^{-1/2} \\
& =  
  \frac{1}{n-1} \sum_{t=1}^{n-1}
   (\Dvpit \Dvpi^{-1} - \vI) \vL \,.
\end{align*}
Combining this, $\norm{\vL}\le 1$ and~\cref{eq:dvpi-ratio-bound}, we get
\begin{align}
  \norm{\bbE(\errm)}
  \le \frac{1}{(n-1)\pimin} \sum_{t=1}^{n-1} (1-\gap)^{t-1}
  \le \frac{1}{(n-1)\gap\pimin}
  .
  \label{eq:biasbound}
\end{align}

\subsubsection{
Bounding $\norm{\errm-\EE{\errm}}$: 
Application of a matrix tail inequality}
\label{sec:pairwise-tailbound}

In this section we analyze the deviations of $\errm-\EE{\errm}$. By the definition of $\errm$,
\begin{align}
\label{eq:errmdev1}
\norm{\errm-\EE{\errm}}
 =  \norm{\Diag(\vpi)^{-1/2}\Parens{\wh\vM - \bbE \wh\vM } \Diag(\vpi)^{-1/2}}
 \,.
\end{align}
The matrix $\wh\vM - \EE{\wh\vM}$  is defined 
as a sum of dependent centered random matrices.
We will use the blocking technique of \citet{Bernstein27} 
to relate the likely deviations of this matrix to that of
a sum of independent centered random matrices.
The deviations of these will then bounded
with the help of a Bernstein-type matrix tail inequality due to \citet{tropp2015intro}.

We divide $[n-1]$ into contiguous blocks of time steps; each has size
$a \leq n/3$ except possibly the first block, which has size between
$a$ and $2a-1$.
Formally, let $a' := a + ((n-1) \bmod a) \leq 2a-1$, and define
\begin{align*}
  F & := [a'] , \\
  H_s & := \{ t \in [n-1] : a' + 2(s-1)a + 1 \leq t \leq a' + (2s-1)a \} , \\
  T_s & := \{ t \in [n-1] : a' + (2s-1)a + 1 \leq t \leq a' + 2sa \} ,
\end{align*}
for $s=1,2,\dotsc$.
Let $\mu_H$ (resp., $\mu_T$) be the number of non-empty $H_s$ (resp., $T_s$)
blocks.
Let $n_H := a\mu_H$ (resp., $n_T := a\mu_T$) be the number of time
steps in $\cup_s H_s$ (resp., $\cup_s T_s$).
We have
\begin{align}
  \wh\vM
  & = \frac1{n-1} \sum_{t=1}^{n-1} \ve_{X_t} \ve_{X_{t+1}}^\t
  \notag \\
  & = \frac{a'}{n-1} \cdot
    \underbrace{
      \frac{1}{a'} \sum_{t\in F} \ve_{X_t} \ve_{X_{t+1}}^\t
     }_{\wh\vM_F} +
  \frac{n_H}{n-1} \cdot
  \underbrace{
    \frac1{\mu_H}
    \sum_{s=1}^{\mu_H}
    \Parens{
      \frac1a \sum_{t \in H_s} \ve_{X_t} \ve_{X_{t+1}}^\t
    }
  }_{\wh{\vM}_H}
  \notag \\
  & \qquad
  + \frac{n_T}{n-1} \cdot
  \underbrace{
    \frac1{\mu_T}
    \sum_{s=1}^{\mu_T}
    \Parens{
      \frac1a \sum_{t \in T_s} \ve_{X_t} \ve_{X_{t+1}}^\t
    }
  }_{\wh{\vM}_T}
  .
  \label{eq:blocking}
\end{align}
Here, $\ve_i$ is the $i$-th coordinate basis vector, so $\ve_i
\ve_j^\t \in \{0,1\}^{d \times d}$ is a $d \times d$ matrix of all
zeros except for a $1$ in the $(i,j)$-th position.

The contribution of the first block is easily bounded using the
triangle inequality:
\begin{multline}
  \frac{a'}{n-1}
  \Norm{
    \Diag(\vpi)^{-1/2}
    \Parens{
      \wh\vM_F - \bbE(\wh\vM_F)
    }
    \Diag(\vpi)^{-1/2}
  }
  \\
  \leq
  \frac1{n-1} \sum_{t \in F}
  \Braces{
    \Norm{
      \frac{
        \ve_{X_t} \ve_{X_{t+1}}^\t
      }{\sqrt{\pi_{X_t}\pi_{X_{t+1}}}}
    }
    +
    \Norm{
      \bbE\Parens{
        \frac{
          \ve_{X_t} \ve_{X_{t+1}}^\t
        }{\sqrt{\pi_{X_t}\pi_{X_{t+1}}}}
      }
    }
  }
  \leq
  \frac{2a'}{\pimin(n-1)}
  .
  \label{eq:first-block}
\end{multline}

It remains to bound the contributions of the $H_s$ blocks and the
$T_s$ blocks.
We just focus on the the $H_s$ blocks, since the analysis is identical
for the $T_s$ blocks.

Let
\[
  \vY_s := \frac1a \sum_{t \in H_s} \ve_{X_t} \ve_{X_{t+1}}^\t ,
  \quad s \in [\mu_H] ,
\]
so
\[
  \wh{\vM}_H = \frac1{\mu_H} \sum_{s=1}^{\mu_H} \vY_s ,
\]
an average of the random matrices $\vY_s$.
For each $s \in [\mu_H]$, the random matrix $\vY_s$ is a function of
\[ (X_t : a' + 2(s-1)a + 1 \leq t \leq a' + (2s-1)a + 1) \]
(note the $+1$ in the upper limit of $t$),
so $\vY_{s+1}$ is $a$ time steps ahead of $\vY_s$.
When $a$ is sufficiently large, we will be able to effectively treat
the random matrices $\vY_s$ as if they were independent.
In the sequel, we shall always assume that the block length $a$
satisfies
\begin{equation}
  \label{eq:block-length}
  a \geq
  a_\delta
  :=
  \frac{1}{\gap} \ln \frac{2(n-2)}{\delta\pimin}
\end{equation}
for $\delta \in (0,1)$.

Define
\begin{align*}
  \vpi^{(H_s)}  := \frac1a \sum_{t \in H_s} \vpi^{(t)} , \qquad \qquad
  \vpi^{(H)}  := \frac1{\mu_H} \sum_{s=1}^{\mu_H} \vpi^{(H_s)} .
\end{align*}
Observe that
\[
  \bbE(\vY_s)
  = \Diag(\vpi^{(H_s)}) \vP
\]
so
\[
  \bbE\Parens{
    \frac1{\mu_H} \sum_{s=1}^{\mu_H} \vY_s
  } = \Diag(\vpi^{(H)}) \vP
  .
\]

Define
\[
  \vZ_s
  := \Diag(\vpi)^{-1/2} \left( \vY_s 
  - \bbE(\vY_s) \right)\Diag(\vpi)^{-1/2}
  .
\]
We apply a matrix tail inequality to the average of \emph{independent}
copies of the $\vZ_s$'s.
More precisely, we will apply the tail inequality to independent
copies $\wt{\vZ}_s$, $s\in \iset{\mu_H}$ of the random variables
$\vZ_s$ and then relate the average of $\wt{\vZ}_s$ to that of
$\vZ_s$.
The following probability inequality is from \citet[Theorem
6.1.1.]{tropp2015intro}.
\begin{theorem}[Matrix Bernstein inequality]
\label{thm:mxbernstein}
Let $\vQ_1,\vQ_2,\dotsc,\vQ_m$ be a sequence of independent, random
$d_1 \times d_2$ matrices.
Assume that $\EE{\vQ_i} = \v0$ and $\Norm{\vQ_i}\le R$ for each $1\le
i \le m$.
Let $\vS = \sum_{i=1}^m \vQ_i$ and let 
\begin{align*}
v = \max\left\{ \norm{\bbE \textstyle\sum_i \vQ_i \vQ_i^\top  }, 
						      \norm{\bbE \textstyle\sum_i \vQ_i^\top \vQ_i  }
			\right\}\,.
\end{align*}
Then, for all $t\ge 0$, 
\[
\bbP\left( \Norm{\vS} \ge t \right) \le 2(d_1 + d_2) \exp\left( -\frac{t^2/2}{v+R t/3}\right)\,.
\]
In other words, for any $\delta \in (0,1)$,
\[
  \bbP\Parens{
    \norm{\vS} > \sqrt{2 v \ln \frac{2(d_1+d_2)}{\delta}} + \frac{2R}{3}
    \ln\frac{2(d_1+d_2)}{\delta}
  } \leq \delta
  \,.
\]
\end{theorem}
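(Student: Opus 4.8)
Since the statement is a verbatim restatement of a published result, in the body of the paper one would simply cite it; but the route I would take to reprove it is the matrix Laplace transform method. First I would reduce to the Hermitian case via the self-adjoint dilation $\cH(\vQ) := \left(\begin{smallmatrix}\v0 & \vQ\\ \vQ^\t & \v0\end{smallmatrix}\right)$. Then $\cH(\vS) = \sum_i \cH(\vQ_i)$ is a sum of independent, centered Hermitian matrices of order $D := d_1+d_2$, with $\Norm{\cH(\vQ_i)} = \Norm{\vQ_i} \le R$ and $\sum_i \bbE\,\cH(\vQ_i)^2 = \Diag\Parens{\bbE\sum_i \vQ_i\vQ_i^\t,\ \bbE\sum_i \vQ_i^\t\vQ_i}$, whose spectral norm is exactly $v$; moreover $\Norm{\vS} = \lambda_{\max}(\cH(\vS))$, since the spectrum of $\cH(\vS)$ is symmetric about $0$. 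So it suffices to prove: for a sum $\vW = \sum_i \vW_i$ of independent, centered Hermitian $D \times D$ matrices with $\Norm{\vW_i} \le R$ and $\Norm{\sum_i \bbE\,\vW_i^2} \le v$, one has $\bbP(\lambda_{\max}(\vW) \ge t) \le D\exp\Parens{-\frac{t^2/2}{v + Rt/3}}$; the extra factor $2$ in the statement is harmless slack (or comes from a two-sided union bound if one prefers to avoid the dilation).

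For this Hermitian bound I would start from the master tail inequality $\bbP(\lambda_{\max}(\vW) \ge t) \le \inf_{\theta > 0} e^{-\theta t}\,\bbE\tr e^{\theta\vW}$, which is Markov's inequality applied to $\tr e^{\theta\vW} \ge e^{\theta\lambda_{\max}(\vW)}$. The one genuinely nontrivial ingredient, and the step I expect to be the main obstacle, is subadditivity of the matrix cumulant generating function, $\bbE\tr\exp\Parens{\sum_i \theta\vW_i} \le \tr\exp\Parens{\sum_i \log\bbE\,e^{\theta\vW_i}}$, which I would obtain by peeling off one summand at a time, using Lieb's concavity theorem (concavity of $\vA \mapsto \tr\exp(\vH + \log\vA)$) together with Jensen's inequality, exactly as in Tropp; everything downstream is elementary bookkeeping.

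Finally I would bound each summand's log-MGF. Using that the scalar function $g(x) := (e^x - x - 1)/x^2$ is increasing on $\bbR$, the transfer rule gives $e^{\theta\vW_i} \preceq \vI + \theta\vW_i + g(\theta R)\,\theta^2\vW_i^2$; taking expectations kills the linear term, and $\vI + \vB \preceq e^{\vB}$ yields $\bbE\,e^{\theta\vW_i} \preceq \exp\Parens{g(\theta R)\theta^2\,\bbE\,\vW_i^2}$, hence $\log\bbE\,e^{\theta\vW_i} \preceq g(\theta R)\theta^2\,\bbE\,\vW_i^2$. Summing, and using monotonicity of $\tr\exp$ under $\preceq$ together with $\tr\exp(\vC) \le D\exp(\lambda_{\max}(\vC))$ and $\lambda_{\max}\Parens{\sum_i \bbE\,\vW_i^2} \le v$, gives $\bbE\tr e^{\theta\vW} \le D\exp\Parens{g(\theta R)\theta^2 v}$. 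Bounding $g(\theta R) \le \frac{1}{2(1 - \theta R/3)}$ for $0 < \theta < 3/R$ then yields $\bbP(\lambda_{\max}(\vW) \ge t) \le D\inf_{0 < \theta < 3/R}\exp\Parens{-\theta t + \frac{\theta^2 v}{2(1 - \theta R/3)}}$, and the choice $\theta = t/(v + Rt/3)$ produces the Bernstein bound. The ``inverted'' form in the statement follows from the routine observation that $2D\exp\Parens{-\frac{t^2/2}{v + Rt/3}} \le \delta$ whenever $t \ge \sqrt{2v\ln\frac{2D}{\delta}} + \frac{2R}{3}\ln\frac{2D}{\delta}$, which one checks by solving the quadratic $t^2 - \frac{2R}{3}t\ln\frac{2D}{\delta} - 2v\ln\frac{2D}{\delta} \ge 0$ and applying $\sqrt{a+b} \le \sqrt a + \sqrt b$.
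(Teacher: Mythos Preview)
Your reading is correct: the paper does not prove this theorem at all---it is quoted verbatim as \cite[Theorem 6.1.1]{tropp2015intro} and used as a black box. Your sketch is a faithful and correct outline of Tropp's own proof (Hermitian dilation, the matrix Laplace transform/master tail bound, subadditivity of matrix cumulants via Lieb's concavity, the scalar Bernstein MGF bound, and the quadratic inversion for the $\delta$-form), so there is nothing to compare and no gap to flag.
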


To apply \cref{thm:mxbernstein}, it suffices to bound the spectral
norms of $\vZ_s$ (almost surely), $\bbE(\vZ_s \vZ_s^\t)$, and
$\bbE(\vZ_s^\t \vZ_s)$.

\paragraph{Range bound.}
By the triangle inequality,
\[
  \norm{\vZ_s}
  \leq \norm{\Diag(\vpi)^{-1/2} \vY_s \Diag(\vpi)^{-1/2}}
  + \norm{\Diag(\vpi)^{-1/2} \bbE(\vY_s) \Diag(\vpi)^{-1/2}}
  \,.
\]
For the first term, we have
\begin{align}
\label{eq:vysbound}
  \norm{\Diag(\vpi)^{-1/2} \vY_s \Diag(\vpi)^{-1/2}} \leq \frac1{\pimin} .
\end{align}
For the second term, we use the fact $\norm{\vL}\le 1$ to bound
\begin{align*}
  \norm{\Diag(\vpi)^{-1/2} (\bbE(\vY_s) -\vM)\Diag(\vpi)^{-1/2}}
  & = \norm{ \bigl(\Diag(\vpi^{(H_s)}) \Diag(\vpi)^{-1} - \vI  \bigr) \vL }
  \\
  & \le \norm{ \Diag(\vpi^{(H_s)}) \Diag(\vpi)^{-1} - \vI }\,.
\end{align*}
Then, using~\cref{eq:dvpi-ratio-bound},
\begin{equation}
  \norm{ \Diag(\vpi^{(H_s)}) \Diag(\vpi)^{-1} - \vI }
  \le \frac{(1-\gap)^{a'+2(s-1)a}}{\pimin}
  \le \frac{(1-\gap)^{a}}{\pimin} \le 1\,,
  \label{eq:ysbias}
\end{equation}
where the last inequality follows from the assumption that the
block length $a$ satisfies~\cref{eq:block-length}.
Combining this with
$\norm{\Diag(\vpi)^{-1/2} \vM\Diag(\vpi)^{-1/2}} = \norm{\vL}\le 1$,
it follows that
\begin{align}
\label{eq:evysbound}
\norm{\Diag(\vpi)^{-1/2} \bbE(\vY_s) \Diag(\vpi)^{-1/2}} \le 2
\end{align}
by the triangle inequality.
Therefore, together with~\cref{eq:vysbound}, we obtain the range bound
\[
  \norm{\vZ_s} \leq \frac1{\pimin} + 2
  .
\]

\paragraph{Variance bound.}
We now determine bounds on the spectral norms of $\bbE(\vZ_s
\vZ_s^\t)$ and $\bbE(\vZ_s^\t \vZ_s)$.
Observe that
\begin{align}
  \lefteqn{
    \bbE(\vZ_s\vZ_s^\t)
  } \notag \\
  & =
  \frac1{a^2} \sum_{t \in H_s}
  \bbE\Parens{
    \Diag(\vpi)^{-1/2}
    \ve_{X_t} \ve_{X_{t+1}}^\t 
    \Diag(\vpi)^{-1}
    \ve_{X_{t+1}} \ve_{X_t}^\t
    \Diag(\vpi)^{-1/2}
  }
  \label{eq:var1}
  \\
  & \quad
  + \frac1{a^2} \sum_{ \substack{ t \neq t' \\ t,t'\in H_s} } 
  \bbE\Parens{
    \Diag(\vpi)^{-1/2}
    \ve_{X_t} \ve_{X_{t+1}}^\t 
    \Diag(\vpi)^{-1}
    \ve_{X_{t'+1}} \ve_{X_{t'}}^\t
    \Diag(\vpi)^{-1/2}
  }
  \label{eq:var2}
  \\
  & \quad
  - \Diag(\vpi)^{-1/2}
  \bbE(\vY_s)
  \Diag(\vpi)^{-1} 
  \bbE(\vY_s^\t)
  \Diag(\vpi)^{-1/2}
  .
  \label{eq:var3}
\end{align}
The first sum, \cref{eq:var1}, easily simplifies to the diagonal matrix
\begin{align*}
  \lefteqn{
    \frac1{a^2}
    \sum_{t \in H_s}
    \sum_{i=1}^d \sum_{j=1}^d \Pr(X_t = i, X_{t+1} = j) \cdot
    \frac{1}{\pi_i\pi_j} \ve_i\ve_j^\t\ve_j\ve_i^\t
  } \\
  & =
  \frac1{a^2}
  \sum_{t \in H_s}
  \sum_{i=1}^d \sum_{j=1}^d \pi_i^{(t)} P_{i,j} \cdot
  \frac{1}{\pi_i\pi_j} \ve_i\ve_i^\t
  =
  \frac1{a}
  \sum_{i=1}^d \frac{\pi_i^{(H_s)}}{\pi_i}
  \Parens{ \sum_{j=1}^d \frac{P_{i,j}}{\pi_j} }
   \ve_i \ve_i^\t
  .
\end{align*}
For the second sum, \cref{eq:var2},  a symmetric matrix, consider
\[
  \vu^\t
  \Parens{
    \frac1{a^2} \sum_{ \substack{ t \neq t' \\ t,t'\in H_s} } 
    \bbE\Parens{
      \Diag(\vpi)^{-1/2}
      \ve_{X_t} \ve_{X_{t+1}}^\t 
      \Diag(\vpi)^{-1}
      \ve_{X_{t'+1}} \ve_{X_{t'}}^\t
      \Diag(\vpi)^{-1/2}
    }
  } \vu
\]
for an arbitrary unit vector $\vu$.
By Cauchy-Schwarz and AM/GM, this is bounded from above by
\begin{multline*}
  \frac1{2a^2}
  \sum_{ \substack{ t \neq t' \\ t,t'\in H_s} }
  \biggl[
  \bbE\Parens{
    \vu^\t
    \Diag(\vpi)^{-1/2} \ve_{X_t} \ve_{X_{t+1}}^\t \Diag(\vpi)^{-1}
    \ve_{X_{t+1}} \ve_{X_t}^\t \Diag(\vpi)^{-1/2}
    \vu
  }
  \\
  +
  \bbE\Parens{
    \vu^\t
    \Diag(\vpi)^{-1/2} \ve_{X_{t'}} \ve_{X_{t'+1}}^\t \Diag(\vpi)^{-1}
    \ve_{X_{t'+1}} \ve_{X_{t'}}^\t \Diag(\vpi)^{-1/2}
    \vu
  }
  \biggr]
  ,
\end{multline*}
which simplifies to
\[
  \frac{a-1}{a^2}
  \vu^\t \bbE\Parens{
    \sum_{t \in H_s}
    \Diag(\vpi)^{-1/2} \ve_{X_t} \ve_{X_{t+1}}^\t \Diag(\vpi)^{-1}
    \ve_{X_{t+1}} \ve_{X_t}^\t \Diag(\vpi)^{-1/2}
  } \vu\,.
\]
The expectation is the same as that for the first term,
\cref{eq:var1}.

Finally, the spectral norm of the third term, \cref{eq:var3}, is
bounded using \cref{eq:evysbound}:
\[
  \norm{\Diag(\vpi)^{-1/2} \bbE(\vY_s) \Diag(\vpi)^{-1/2}}^2
  \leq 4
  .
\]

Therefore, by the triangle inequality, the bound $\pi_i^{(H)}/\pi_i
\leq 2$ from \cref{eq:ysbias}, and simplifications, 
\begin{align*}
  \Norm{
    \bbE(\vZ_s \vZ_s^\t)
  }
  &\leq
  \max_{i\in[d]}
  \Parens{ \sum_{j=1}^d \frac{P_{i,j}}{\pi_j} }
  \frac{\pi_i^{(H)}}{\pi_i}
  + 4
  \leq
  2\max_{i\in[d]}
  \Parens{ \sum_{j=1}^d \frac{P_{i,j}}{\pi_j} }
  + 4
  .
\end{align*}

We can bound $\bbE(\vZ_s^\t \vZ_s)$
in a similar way; the only difference is that the reversibility needs
to be used at one place to simplify an expectation:
\begin{align*}
\MoveEqLeft
    \frac1{a^2} \sum_{t \in H_s}
    \bbE\Parens{
      \Diag(\vpi)^{-1/2}
      \ve_{X_{t+1}} \ve_{X_t}^\t
      \Diag(\vpi)^{-1}
      \ve_{X_t}\ve_{X_{t+1}}^\t
      \Diag(\vpi)^{-1/2}
    }
	\\
  & =
  \frac1{a^2} \sum_{t \in H_s}
  \sum_{i=1}^d \sum_{j=1}^d \Pr(X_t = i, X_{t+1} = j) \cdot
  \frac{1}{\pi_i\pi_j} \ve_j\ve_j^\t
  \\
  & =
  \frac1{a^2} \sum_{t \in H_s}
  \sum_{i=1}^d \sum_{j=1}^d \pi_i^{(t)} P_{i,j} \cdot
  \frac{1}{\pi_i\pi_j} \ve_j\ve_j^\t
  \\
  & =
  \frac1{a^2} \sum_{t \in H_s}
  \sum_{j=1}^d \Parens{
    \sum_{i=1}^d \frac{\pi_i^{(t)}}{\pi_i} \cdot \frac{P_{j,i}}{\pi_i}
  } \ve_j\ve_j^\t
\end{align*}
where the last step uses \cref{eq:reversibility}.
As before, we get
\begin{align*}
  \Norm{
    \bbE(\vZ_s^\t\vZ_s)
  }
  &\leq
  \max_{i\in[d]}
  \Parens{
    \sum_{j=1}^d \frac{P_{i,j}}{\pi_j}
    \cdot \frac{\pi_j^{(H)}}{\pi_j}
  }
  + 4
  \leq
  2 \max_{i\in[d]}
  \Parens{
    \sum_{j=1}^d \frac{P_{i,j}}{\pi_j}
  }
  + 4
\end{align*}
again using the bound $\pi_i^{(H)}/\pi_i \leq 2$ from
\cref{eq:ysbias}.

\paragraph{Independent copies bound.}
Let $\wt\vZ_s$ for $s \in [\mu_H]$ be independent copies of
$\vZ_s$ for $s \in [\mu_H]$.
Applying \cref{thm:mxbernstein} to the average of these random
matrices, we have
\begin{equation}
  \bbP\Parens{
    \Norm{ \frac1{\mu_H} \sum_{s=1}^{\mu_H} \wt\vZ_s}
    >
    \sqrt{
      \frac{4\Parens{ d_{\vP} + 2 } \ln\frac{4d}{\delta}}
      {\mu_H}
    }
    + \frac{2\Parens{ \frac1{\pimin} + 2 } \ln\frac{4d}{\delta}}
    {3\mu_H}
  } \leq \delta
  \label{eq:indpt-bound}
\end{equation}
where
\begin{align*}
  d_{\vP}
  & := \max_{i \in [d]} \sum_{j=1}^d \frac{P_{i,j}}{\pi_j}
  \leq
  \frac{1}{\pimin}
  \,.
\end{align*}

\paragraph{The actual bound.}
To bound the probability that $\norm{\sum_{s=1}^{\mu_H} \vZ_s/\mu_H}$
is large, we appeal to the following result, which is a consequence of
\citep[Corollary 2.7]{Yu94}.
For each $s \in [\mu_H]$, let $X^{(H_s)} := (X_t : a' + 2(s-1)a + 1
\leq t \leq a' + (2s-1)a + 1)$, which are the random variables
determining $\vZ_s$.
Let $\bbP$ denote the joint distribution of $(X^{(H_s)} : s \in
[\mu_H])$; let $\bbP_s$ be its marginal over $X^{(H_s)}$, and let
$\bbP_{1:s+1}$ be its marginal over
$(X^{(H_1)},X^{(H_2)},\dotsc,X^{(H_{s+1})})$.
Let $\wt\bbP$ be the product distribution formed from the marginals
$\bbP_1,\bbP_2,\dotsc,\bbP_{\mu_H}$, so $\wt\bbP$ governs the joint
distribution of $(\wt\vZ_s : s \in [\mu_H])$.
The result from \citep[Corollary 2.7]{Yu94} implies for any event $E$,
\[
  |\bbP(E) - \wt\bbP(E)| \leq (\mu_H-1) \beta(\bbP)
\]
where
\[
  \beta(\bbP)
  :=
  \max_{1 \leq s \leq \mu_H-1}
  \bbE\Parens{
    \Norm{
      \bbP_{1:s+1}(\cdot\,|X^{(H_1)},X^{(H_2)},\dotsc,X^{(H_s)}) - \bbP_{s+1}
    }_{\tv}
  }
  \,.
\]
Here, $\norm{\cdot}_{\tv}$ denotes the total variation norm.
The number $\beta(\bbP)$ can be recognized to be the
\emph{$\beta$-mixing coefficient} of the stochastic process
$\{X^{(H_s)}\}_{s \in [\mu_H]}$.
This result implies that the bound from \cref{eq:indpt-bound} for
$\norm{\sum_{s=1}^{\mu_H} \wt\vZ_s/\mu_H}$ also holds for
$\norm{\sum_{s=1}^{\mu_H} \vZ_s/\mu_H}$, except the probability
bound increases from $\delta$ to $\delta + (\mu_H-1)\beta(\bbP)$:
\begin{equation}
  \bbP\Parens{
    \Norm{ \frac1{\mu_H} \sum_{s=1}^{\mu_H} \vZ_s}
    >
    \sqrt{
      \frac{4\Parens{ d_{\vP} + 2 } \ln\frac{4d}{\delta}}
      {\mu_H}
    }
    + \frac{2\Parens{ \frac1{\pimin} + 2 } \ln\frac{4d}{\delta}}
    {3\mu_H}
  } \leq \delta + (\mu_H-1)\beta(\bbP)
  .
  \label{eq:dpt-bound}
\end{equation}
By the triangle inequality,
\[
  \beta(\bbP)
  \leq
  \max_{1 \leq s \leq \mu_H-1}
  \bbE\Parens{
    \vphantom{\bigg\vert}
    \Norm{
      \bbP_{1:s+1}(\cdot\,|X^{(H_1)},X^{(H_2)},\dotsc,X^{(H_s)})
      - \bbP^{\vpi}
    }_{\tv}
    +
    \Norm{
      \bbP_{s+1}
      - \bbP^{\vpi}
    }_{\tv}
  }
\]
where $\bbP^{\vpi}$ is the marginal distribution of $X^{(H_1)}$ under
the stationary chain.
Using the Markov property and integrating out $X_t$ for $t >
\min H_{s+1} = a'+2sa+1$,
\[
  \Norm{
    \bbP_{1:s+1}(\cdot\,|X^{(H_1)},X^{(H_2)},\dotsc,X^{(H_s)})
    - \bbP^{\vpi}
  }_{\tv}
  =
  \Norm{
    \cL(X_{a'+2sa+1}\,|X_{a'+(2s-1)a+1})
    - \vpi
  }_{\tv}
\]
where $\cL(Y|Z)$ denotes the conditional distribution of $Y$ given
$Z$.
We bound this distance using standard arguments for bounding the
mixing time in terms of the \emph{relaxation time}
$1/\gap$~\citep[see, e.g., the proof of Theorem 12.3 of][]{LePeWi08}:
for any $i \in [d]$,
\[
  \Norm{
    \cL(X_{a'+2sa+1}\,|X_{a'+(2s-1)a+1}=i)
    - \vpi
  }_{\tv}
  =
  \Norm{
    \cL(X_{a+1}\,|X_1=i)
    - \vpi
  }_{\tv}
  \leq
  \frac{
    \exp\Parens{ -a\gap }
  }{\pimin}
  .
\]
The distance $\norm{ \bbP_{s+1} - \bbP^{\vpi} }_{\tv}$ can be bounded
similarly:
\begin{align*}
  \Norm{
    \bbP_{s+1}
    - \bbP^{\vpi}
  }_{\tv}
  & =
  \Norm{
    \cL(X_{a'+2sa+1})
    - \vpi
  }_{\tv}
  \\
  & =
  \Norm{
    \sum_{i=1}^d \bbP(X_1 = i)
    \cL(X_{a'+2sa+1}\,|X_1 = i)
    - \vpi
  }_{\tv}
  \\
  & \leq
  \sum_{i=1}^d \bbP(X_1 = i)
  \Norm{
    \cL(X_{a'+2sa+1}\,|X_1 = i)
    - \vpi
  }_{\tv}
  \\
  & \leq
  \frac{
    \exp\Parens{-(a'+2sa)\gap}
  }{\pimin}
  \leq
  \frac{
    \exp\Parens{-a\gap}
  }{\pimin}
  .
\end{align*}
We conclude
\[
  (\mu_H-1)\beta(\bbP)
  \leq (\mu_H-1)\frac{2\exp(-a\gap)}{\pimin}
  \leq \frac{2(n-2)\exp(-a\gap)}{\pimin}
  \leq \delta
\]
where the last step follows from the block length assumption
\cref{eq:block-length}.

We return to the decomposition from \cref{eq:blocking}.
We apply \cref{eq:dpt-bound} to both the $H_s$ blocks and the $T_s$
blocks, and combine with \cref{eq:first-block} to obtain the following
probabilistic bound.
Pick any $\delta \in (0,1)$, let the block length be
\[
  a
  := \ceil{ a_\delta }
  =
  \Ceil{
    \frac{1}{\gap}\ln\frac{2(n-2)}{\pimin\delta}
  }
  ,
\]
so
\[
  \min\{\mu_H,\mu_T\}
  =
  \Floor{
    \frac{n-1-a'}{2a}
  }
  \geq
  \frac{n-1}
  {
    2\Parens{
      1 + \frac1{\gap}\ln\frac{2(n-2)}{\pimin\delta}
    }
  }
  -2
  =: \mu
  .
\]
If
\begin{equation}
  n \geq 7 + \frac6{\gap} \ln \frac{2(n-2)}{\pimin\delta}
  \geq 3a
  ,
  \label{eq:min-n2}
\end{equation}
then with probability at least $1-4\delta$,
\begin{multline*}
  \Norm{\Diag(\vpi)^{-1/2}\Parens{\wh\vM - \bbE[\wh\vM]} \Diag(\vpi)^{-1/2} }
  \\
  \leq
  \frac{4
    \Ceil{
      \frac{1}{\gap}\ln\frac{2(n-2)}{\pimin\delta}
    }
  }{\pimin(n-1)}
  +
  \sqrt{
    \frac{4\Parens{ d_{\vP} + 2 } \ln\frac{4d}{\delta}}
    {\mu}
  }
  + \frac{2\Parens{ \frac1{\pimin} + 2 } \ln\frac{4d}{\delta}}
  {3\mu}
  .
\end{multline*}

\subsubsection{The bound on $\norm{\errm}$}
Combining the probabilistic bound from above with the bound on the
bias from \cref{eq:biasbound}, we obtain the following.
Assuming the condition on $n$ from \cref{eq:min-n2}, with probability
at least $1-4\delta$,
\begin{equation}
  \norm{\errm}
  \leq
  \frac{1}{(n-1)\gap\pimin}
  +
  \frac{4
    \Ceil{
      \frac{1}{\gap}\ln\frac{2(n-2)}{\pimin\delta}
    }
  }{\pimin(n-1)}
  +
  \sqrt{
    \frac{4\Parens{ d_{\vP} + 2 } \ln\frac{4d}{\delta}}
    {\mu}
  }
  + \frac{2\Parens{ \frac1{\pimin} + 2 } \ln\frac{4d}{\delta}}
  {3\mu}
  .
  \label{eq:doubletbound}
\end{equation}

\subsection{Overall error bound}
Assume the sequence length $n$ satisfies \cref{eq:min-n1} and
\cref{eq:min-n2}.
Consider the $1-5\delta$ probability event in which
\cref{eq:singletonboundsimple,eq:hatpimin,eq:doubletbound} hold.
In this event, \cref{eq:singletonbound} also holds, and hence by
\cref{eq:vlsimple},
\begin{multline*}
  \norm{\wh\vL - \vL}
  \leq
  3
  \Parens{
    \sqrt{
      \frac{
        8\ln\Parens{\frac{d}\delta\sqrt{\frac{2}{\pimin}}}
      }{
        \pimin\gap n
      }
    }
    +
    \frac{
      20\ln\Parens{\frac{d}\delta\sqrt{\frac{2}{\pimin}}}
    }{
      \pimin\gap n
    }
  }
  \\
  +
  3\Parens{
    \frac{1}{(n-1)\gap\pimin}
    +
    \frac{4
      \Ceil{
        \frac{1}{\gap}\ln\frac{2(n-2)}{\pimin\delta}
      }
    }{\pimin(n-1)}
    +
    \sqrt{
      \frac{4\Parens{ d_{\vP} + 2 } \ln\frac{4d}{\delta}}
      {\mu}
    }
    + \frac{2\Parens{ \frac1{\pimin} + 2 } \ln\frac{4d}{\delta}}
    {3\mu}
  }
  \\
  =
  O
  \Parens{
    \sqrt{
      \frac{
        \log\Parens{\frac{d}{\delta}}
        \log\Parens{\frac{n}{\pimin\delta}}
      }{\pimin\gap n}
    }
  }
  .
\end{multline*}

To finish the proof of \cref{lem:gap}, we replace $\delta$ with
$\delta/5$, and now observe that the bound on $\norm{\wh\vL-\vL}$ is
trivial if \cref{eq:min-n1} is violated (recalling that
$\norm{\wh\vL-\vL} \leq 2$ always holds).
We tack on an additional term $\log(1/(\pimin\gap\delta)) / (\gap n)$
to also ensure a trivial bound if \cref{eq:min-n2} is violated.
\hfill $\qed$

\section{Proof of \cref{thm:empirical}}\label{app:empirical}
In this section, we derive \cref{alg:empest} and prove
\cref{thm:empirical}.

\subsection{Estimators for $\vpi$ and $\gap$}

The algorithm forms the estimator $\wh\vP$ of $\vP$ using Laplace smoothing:
\[
  \wh{P}_{i,j}
  := \frac{N_{i,j} + \alpha}{N_i + d\alpha}
\]
where
\[
  N_{i,j} := \Abs{ \Braces{ t \in [n-1] : (X_t,X_{t+1}) = (i,j) } } ,
  \quad
  N_i := \Abs{ \Braces{ t \in [n-1] : X_t = i } }
\]
and $\alpha>0$ is a positive constant, which we set beforehand as $\alpha := 1/d$
for simplicity.

As a result of the smoothing, all entries of $\wh\vP$ are positive,
and hence $\wh\vP$ is a transition probability
matrix for an ergodic Markov chain.
We let $\hat\vpi$ be the unique stationary distribution for $\wh\vP$.
Using $\hat\vpi$, we form an estimator $\Sym(\wh\vL)$ of $\vL$ using:
\[
  \Sym(\wh\vL) := \frac12 \parens{ \wh\vL + \wh\vL^\t }
  ,
  \qquad
  \wh\vL := \Diag(\hat\vpi)^{1/2} \wh\vP \Diag(\hat\vpi)^{-1/2}
  .
\]
Let $\hat\lambda_1 \geq \hat\lambda_2 \geq \dotsb \geq \hat\lambda_d$
be the eigenvalues of $\Sym(\wh\vL)$ (and in fact, we have $1 =
\hat\lambda_1 > \hat\lambda_2$ and $\hat\lambda_d > -1$).
The algorithm estimates the spectral gap $\gap$ using
\[
  \hatgap := 1 - \max\{ \hat\lambda_2, |\hat\lambda_d| \} .
\]

\subsection{Empirical bounds for $\vP$}
\label{sec:P-obs-bound}

We make use of a simple corollary of Freedman's inequality for
martingales~\citep[Theorem 1.6]{Fre75}.
\begin{theorem}[Freedman's inequality]
  \label{thm:freedman}
  Let $(Y_t)_{t \in \bbN}$ be a bounded martingale difference sequence
  with respect to the filtration $\cF_0 \subset \cF_1 \subset \cF_2
  \subset \dotsb$; assume for some $b>0$, $|Y_t| \leq b$ almost surely
  for all $t \in \bbN$.
  Let $V_k := \sum_{t=1}^k \EE{Y_t^2|\cF_{t-1}}$ and $S_k :=
  \sum_{t=1}^k Y_t$ for $k \in \bbN$.
  For all $s, v > 0$,
  \[
    \Pr\Brackets{
      \exists k \in \bbN \;\st\,
      S_k > s
      \,\wedge\,
      V_k \leq v
    }
    \leq \Parens{
      \frac{v/b^2}{s/b+v/b^2}
    }^{s/b+v/b^2}
    e^{s/b}
    = \exp\Parens{
      -\frac{v}{b^2} \cdot h\Parens{\frac{bs}{v}}
    }
    \,,
  \]
  where $h(u) := (1+u)\ln(1+u) - u$.
\end{theorem}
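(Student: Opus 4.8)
The plan is to run the classical exponential–supermartingale argument that underlies Bennett's and Freedman's inequalities. Fix a parameter $\theta>0$, abbreviate $\psi(\theta):=(e^{\theta b}-1-\theta b)/b^2\ge 0$, and consider the process
\[
  Z_k^{(\theta)} := \exp\Parens{ \theta S_k - \psi(\theta) V_k },
  \qquad k\in\bbN, \qquad Z_0^{(\theta)} := 1 .
\]
The key scalar input is the bound $e^{\theta y}\le 1+\theta y+\psi(\theta)y^2$, valid for every $y\le b$ and $\theta>0$; this follows from the standard fact that $x\mapsto(e^x-1-x)/x^2$ (extended by $1/2$ at $0$) is nondecreasing on $\bbR$, applied at $x=\theta y\le\theta b$. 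Plugging in $y=Y_t$ (legitimate since $|Y_t|\le b$), taking $\EE{\cdot\mid\cF_{t-1}}$, and using $\EE{Y_t\mid\cF_{t-1}}=0$ together with $1+z\le e^z$ gives
\[
  \EE{ e^{\theta Y_t} \mid \cF_{t-1} }
  \le 1+\psi(\theta)\,\EE{ Y_t^2 \mid \cF_{t-1} }
  \le \exp\Parens{ \psi(\theta)\,\EE{ Y_t^2 \mid \cF_{t-1} } } .
\]
Since $V_k-V_{k-1}=\EE{Y_k^2\mid\cF_{k-1}}$ is $\cF_{k-1}$-measurable, this yields $\EE{Z_k^{(\theta)}\mid\cF_{k-1}}\le Z_{k-1}^{(\theta)}$, so $(Z_k^{(\theta)})_{k\ge0}$ is a nonnegative supermartingale started at $1$.

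Next I would invoke Ville's maximal inequality for nonnegative supermartingales, $\Pr\Brackets{ \sup_{k\in\bbN} Z_k^{(\theta)} \ge \lambda } \le 1/\lambda$ for all $\lambda>0$. On the event $\{\exists k:\ S_k>s\ \wedge\ V_k\le v\}$ there is an index $k$ with $\theta S_k-\psi(\theta)V_k>\theta s-\psi(\theta)v$ (here $\psi(\theta)\ge0$ lets us replace $V_k$ by the upper bound $v$), hence $\sup_k Z_k^{(\theta)}>e^{\theta s-\psi(\theta)v}$. Combining the two facts, for every $\theta>0$,
\[
  \Pr\Brackets{ \exists k\in\bbN:\ S_k>s\ \wedge\ V_k\le v }
  \le \exp\Parens{ -\theta s + \frac{e^{\theta b}-1-\theta b}{b^2}\,v } .
\]

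Finally I would optimize the exponent over $\theta>0$. Differentiation shows the minimizer solves $e^{\theta b}-1=bs/v$, i.e.\ $\theta^\star=b^{-1}\ln(1+bs/v)$; substituting this value and writing $u:=bs/v$ collapses the exponent to $-\tfrac{v}{b^2}\bigl((1+u)\ln(1+u)-u\bigr)=-\tfrac{v}{b^2}\,h\Parens{\tfrac{bs}{v}}$, which is the first stated expression. Rewriting $-\tfrac{v}{b^2}h(bs/v)$ in terms of $S:=s/b$ and $V:=v/b^2$ (using $V\,h(S/V)=(S+V)\ln\!\tfrac{S+V}{V}-S$) gives the product form $\bigl(V/(S+V)\bigr)^{S+V}e^{S}$, establishing that the two right-hand sides coincide.

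The step I expect to be the main obstacle is the \emph{joint, time-uniform} nature of the conclusion: the event constrains $S_k$ and $V_k$ simultaneously, and for an unspecified $k$. This is exactly what coupling $S_k$ and $V_k$ inside a single exponential supermartingale buys us, and replacing a fixed-$k$ Markov bound by Ville's inequality is what upgrades the estimate to hold uniformly over all $k$. The remaining ingredients—the scalar bound on $e^{\theta y}$ for $y\le b$, the supermartingale verification, and the one-dimensional convex optimization in $\theta$—are routine.
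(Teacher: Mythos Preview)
Your argument is correct and is essentially Freedman's original exponential--supermartingale proof. Note, however, that the paper does not supply its own proof of this theorem: it is quoted as a known result (Theorem~1.6 of Freedman, 1975) and used as a black box, so there is no in-paper argument to compare against. Your write-up faithfully reconstructs the standard route---the Bennett-type scalar bound on $\EE{e^{\theta Y_t}\mid\cF_{t-1}}$, the supermartingale $Z_k^{(\theta)}=\exp(\theta S_k-\psi(\theta)V_k)$, Ville's maximal inequality to get uniformity in $k$, and the convex optimization in $\theta$---and the algebraic identification of the two right-hand sides is clean.
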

Observe that in \Cref{thm:freedman}, for any $x>0$, if $s :=
\sqrt{2vx} + bx/3$ and $z := b^2x/v$, then the probability bound
on the right-hand side becomes
\[
  \exp\Parens{
    -x \cdot \frac{h\Parens{\sqrt{2z}+z/3}}{z}
  }
  \leq
  e^{-x}
\]
since $h(\sqrt{2z}+z/3)/z \geq 1$ for all $z > 0$ (see,
e.g.,~\citep[proof of Lemma 5]{audibert2009}).

\begin{corollary}
  \label{cor:freedman}
  Under the same setting as \Cref{thm:freedman}, for any $n \geq 1$,
  $x > 0$, and $c > 1$,
  \[
    \Pr\Brackets{
      \exists k \in [n] \;\st\,
      S_k > \sqrt{2cV_kx} + 4bx/3
    }
    \leq
    \Parens{ 1 + \ceil{\log_c(2n/x)}_+ }
    e^{-x}
    .
  \]
\end{corollary}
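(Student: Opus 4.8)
I would reduce \cref{cor:freedman} to the fixed-variance-budget form of Freedman's inequality recorded immediately after \cref{thm:freedman}: for any deterministic $v>0$,
\[
  \Pr\Brackets{\exists k\in\bbN \;\st\; S_k > \sqrt{2vx}+bx/3 \ \wedge\ V_k\le v}\le e^{-x},
\]
via a geometric stratification (``peeling'') of the random predictable variation $V_k$. The first step is to pin down the a priori range of $V_k$: since $|Y_t|\le b$ a.s.\ we have $\EE{Y_t^2\mid\cF_{t-1}}\le b^2$, hence $V_k\le kb^2\le nb^2$ for every $k\in[n]$. I would then set the base scale $v_0:=b^2x/2$ and scales $v_j:=c^j v_0$ for $j=0,1,\dots,J$, where $J:=\ceil{\log_c(2n/x)}_+$ is the least nonnegative integer with $c^J v_0\ge nb^2$; thus $0\le V_k\le v_J$ on the whole sample path, and I can partition this range into the $J+1$ bands $[0,v_0),[v_0,v_1),\dots,[v_{J-1},v_J]$.

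\textbf{Per-band bound.} Next I would check that on each band the corollary's threshold dominates the fixed-$v$ threshold above with $v=v_j$. If a time index $k\in[n]$ witnesses the event and has $V_k$ in the bottom band, then $V_k< v_0$ and $S_k>\sqrt{2cV_kx}+4bx/3\ge 4bx/3 = bx+bx/3 = \sqrt{2v_0x}+bx/3$, using $\sqrt{2v_0x}=bx$ by the choice $v_0=b^2x/2$; and $V_k\le v_0$. If instead $V_k$ lies in band $j\ge1$, then $V_k\ge v_{j-1}$ gives $S_k>\sqrt{2cV_kx}+4bx/3\ge \sqrt{2cv_{j-1}x}+4bx/3 = \sqrt{2v_jx}+4bx/3 > \sqrt{2v_jx}+bx/3$, using $cv_{j-1}=v_j$; and $V_k\le v_j$. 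So the portion of the event supported on band $j$ is contained in $\{\exists k\in[n]:S_k>\sqrt{2v_jx}+bx/3\ \wedge\ V_k\le v_j\}$, which has probability at most $e^{-x}$ by the displayed fixed-budget bound. A union bound over the $J+1=1+\ceil{\log_c(2n/x)}_+$ bands yields the claim.

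\textbf{Main obstacle.} There is no deep difficulty; the only delicate point is the constant bookkeeping that makes everything line up. One must choose $v_0=b^2x/2$ precisely so that the extra $bx$ hidden in the ``$4bx/3$'' slack exactly neutralizes the smallest variance scale, where $\sqrt{2cV_kx}$ can be arbitrarily small; and one must use the ratio $c$ between consecutive scales so that the top of each band is absorbed by the deviation term at the next scale. It also pays to be mildly careful with strict-versus-weak inequalities at the band boundaries and with the $\ceil{\cdot}_+$ truncation in the degenerate regime $2n/x\le 1$, where $nb^2\le v_0$ so that only the bottom band is present and the bound reduces to $e^{-x}$. Everything else is a direct invocation of the already-established fixed-budget Freedman bound.
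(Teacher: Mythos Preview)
Your proposal is correct and follows essentially the same peeling argument as the paper: both use the geometric scales $v_j=c^j b^2x/2$ for $j=0,\dots,\ceil{\log_c(2n/x)}_+$, invoke the fixed-$v$ Freedman bound on each band, and rely on the identity $\sqrt{2v_0x}=bx$ to handle the lowest band. The only cosmetic difference is that the paper packages your two cases into the single intermediate threshold $\sqrt{2\max\{v_0,cV_k\}x}+bx/3$ and then checks $\sqrt{2cV_kx}+4bx/3\ge\sqrt{2\max\{v_0,cV_k\}x}+bx/3$ at the end.
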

\begin{proof}
  Define $v_i := c^i b^2x/2$ for $i = 0, 1, 2, \dotsc, \ceil{
  \log_c(2n/x) }_+$, and let $v_{-1} := -\infty$.
  Then, since $V_k \in [0,b^2n]$ for all $k\in[n]$,
  \begin{align*}
    \MoveEqLeft{
      \Pr\Brackets{
        \exists k \in [n] \;\st\,
        S_k > \sqrt{2\max\braces{v_0,cV_k}x} + bx/3
      }
    }
    \\
    & =
    \sum_{i=0}^{\ceil{\log_c(2n/x)}_+}
    \Pr\Brackets{
      \exists k \in [n] \;\st\,
      S_k > \sqrt{2\max\braces{v_0,cV_k}x} + bx/3
      \,\wedge\, v_{i-1} < V_k \leq v_i
    }
    \\
    & \leq
    \sum_{i=0}^{\ceil{\log_c(2n/x)}_+}
    \Pr\Brackets{
      \exists k \in [n] \;\st\,
      S_k > \sqrt{2\max\braces{v_0,cv_{i-1}}x} + bx/3
      \,\wedge\, v_{i-1} < V_k \leq v_i
    }
    \\
    & \leq
    \sum_{i=0}^{\ceil{\log_c(2n/x)}_+}
    \Pr\Brackets{
      \exists k \in [n] \;\st\,
      S_k > \sqrt{2v_ix} + bx/3
      \,\wedge\, V_k \leq v_i
    }
    \\
    & \leq
    \Parens{ 1 + \ceil{\log_c(2n/x)}_+ }
    e^{-x}\,,
  \end{align*}
  where the final inequality uses \Cref{thm:freedman}.
  The conclusion now follows because
  \[
    \sqrt{2cV_kx} + 4bx/3
    \geq \sqrt{2\max\braces{v_0,cV_k}x} + bx/3
  \]
  for all $k \in [n]$.
\end{proof}

\begin{lemma}
  \label{lem:P-unobs-bound}
  The following holds for any constant $c>1$ with probability at least
  $1-\delta$: for all $(i,j) \in \iset{d}^2$,
  \begin{equation}
    \abs{ \wh{P}_{i,j} - P_{i,j} }
    \leq
    \sqrt{\Parens{\frac{N_i}{N_i+d\alpha}}\frac{2cP_{i,j}(1-P_{i,j})\emptail}{N_i+d\alpha}}
    + \frac{(4/3)\emptail}{N_i+d\alpha}
    + \frac{\abs{\alpha-d\alpha P_{i,j}}}{N_i+d\alpha}\,,
    \label{eq:P-unobs-bound}
  \end{equation}
  where
  \begin{equation}
    \emptail
    := \inf
    \Braces{
      t\geq0 :
      2d^2 \Parens{ 1 + \ceil{\log_c(2n/t)}_+ } e^{-t} \leq \delta
    }
    = O\Parens{ \log\Parens{ \frac{d\log(n)}{\delta} } }
    \,.
    \label{eq:emptail}
  \end{equation}
\end{lemma}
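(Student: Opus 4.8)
The plan is to fix a pair $(i,j) \in [d]^2$, control the deviation of the raw count $N_{i,j}$ around $N_i P_{i,j}$ via Freedman's inequality (in the form of \cref{cor:freedman}), then translate this into a bound on the smoothed estimate $\wh{P}_{i,j}$ by accounting for the Laplace-smoothing offset, and finally union bound over all $d^2$ pairs and both signs.

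First I would construct the relevant martingale. For $t \in [n-1]$, set $Z_t := \ind{X_t = i}\parens{\ind{X_{t+1} = j} - P_{i,j}}$ and let $\cF_t := \sigma(X_1,\dots,X_{t+1})$. Since $\ind{X_t = i}$ is $\cF_{t-1}$-measurable and $\EE{\ind{X_{t+1}=j} \mid \cF_{t-1}} = P_{X_t, j}$ by the Markov property, we have $\EE{Z_t \mid \cF_{t-1}} = 0$, so $(Z_t)_{t\in[n-1]}$ (extended by zeros beyond $t = n-1$ if one insists on a sequence indexed by $\bbN$) is a martingale difference sequence with $\abs{Z_t} \leq 1$. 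Its predictable quadratic variation satisfies $\EE{Z_t^2 \mid \cF_{t-1}} = \ind{X_t = i}\, P_{i,j}(1-P_{i,j})$; consequently the partial sums $S_k := \sum_{t=1}^k Z_t$ obey $S_{n-1} = N_{i,j} - N_i P_{i,j}$, and $V_k := \sum_{t=1}^k \EE{Z_t^2\mid\cF_{t-1}}$ obeys $V_{n-1} = N_i P_{i,j}(1-P_{i,j})$.

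Next I would invoke \cref{cor:freedman} with $b = 1$ and $x := \emptail$, applied once to $(Z_t)$ and once to $(-Z_t)$ (whose conditional variance process is identical). This yields that, except on an event of probability at most $2\parens{1 + \ceil{\log_c(2n/\emptail)}_+}e^{-\emptail}$, the bound $\abs{S_k} \leq \sqrt{2c V_k \emptail} + \tfrac43\emptail$ holds simultaneously for every $k$; evaluating at the (random) index $k = n-1$ gives $\abs{N_{i,j} - N_i P_{i,j}} \leq \sqrt{2c\, N_i P_{i,j}(1-P_{i,j})\,\emptail} + \tfrac43\emptail$. A union bound over the $d^2$ pairs $(i,j)$ inflates the failure probability to $2d^2\parens{1 + \ceil{\log_c(2n/\emptail)}_+}e^{-\emptail}$, which is at most $\delta$ precisely by the defining property of $\emptail$ in \cref{eq:emptail}. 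On the resulting $1-\delta$ event I would then write
\[
  \wh{P}_{i,j} - P_{i,j}
  = \frac{(N_{i,j} - N_i P_{i,j}) + (\alpha - d\alpha P_{i,j})}{N_i + d\alpha},
\]
apply the triangle inequality, and use $\frac{\sqrt{N_i}}{N_i + d\alpha} = \sqrt{\frac{N_i}{(N_i + d\alpha)^2}}$ to recast the variance term into the form $\sqrt{\parens{\frac{N_i}{N_i+d\alpha}}\frac{2cP_{i,j}(1-P_{i,j})\emptail}{N_i+d\alpha}}$, obtaining \cref{eq:P-unobs-bound}. The asymptotic claim $\emptail = O\parens{\log(d\log(n)/\delta)}$ follows by checking that the candidate value $\ln\frac{2d^2(1+\ceil{\log_c(2n)}_+)}{\delta}$ already satisfies the inequality defining $\emptail$, using $2n/t \leq 2n$ for $t \geq 1$.

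The only genuinely delicate point is the martingale bookkeeping: one must index the increments by time $t$ with the indicator $\ind{X_t=i}$ that vanishes off the visits to state $i$ (rather than re-index by the visit number), verify that $V_{n-1}$ equals exactly $N_i P_{i,j}(1-P_{i,j})$ rather than a crude upper bound, and rely on the uniform-over-$k$ form of \cref{cor:freedman} to license evaluating the bound at the random endpoint $k = n-1$. Everything else is the routine algebra of folding in the smoothing offset $\alpha - d\alpha P_{i,j}$.
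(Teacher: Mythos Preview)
Your proposal is correct and follows essentially the same route as the paper: construct the martingale difference sequence $\ind{X_t=i}(\ind{X_{t+1}=j}-P_{i,j})$, apply \cref{cor:freedman} (with $b=1$) to both signs, union bound over the $d^2$ pairs, and then divide through by $N_i+d\alpha$ to absorb the smoothing offset. One terminological nit: the endpoint $k=n-1$ is deterministic, not random --- what is random is the predictable variance $V_{n-1}=N_iP_{i,j}(1-P_{i,j})$, and the uniform-in-$k$ form of \cref{cor:freedman} already handles this since $V_k$ appears on the right-hand side of its conclusion.
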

\begin{proof}
  Let $\cF_t$ be the $\sigma$-field generated by $X_1,X_2,\dotsc,X_t$.
  Fix a pair $(i,j) \in [d]^2$.
  Let $Y_1 := 0$, and for $t \geq 2$,
  \[
    Y_t := \Ind{X_{t-1} = i} (\Ind{X_t=j} - P_{i,j})
    ,
  \]
  so that
  \[
    \sum_{t=1}^n Y_t
    = N_{i,j} - N_i P_{i,j}
    .
  \]
  The Markov property implies that the stochastic process
  $(Y_t)_{t\in[n]}$ is an $(\cF_t)$-adapted martingale difference
  sequence: $Y_t$ is $\cF_t$-measurable and $\EE{Y_t| \cF_{t-1} } =
  0$, for each $t$.
  Moreover, for all $t \in [n]$,
  \[
    Y_t \in [-P_{i,j},1-P_{i,j}]
    \,,
  \]
  and for $t \geq 2$,
  \[
    \EE{Y_t^2| \cF_{t-1} } = \Ind{X_{t-1}=i} P_{i,j} (1-P_{i,j})
    \,.
  \]
  Therefore, by \Cref{cor:freedman} and union bounds, we have
  \[
    \abs{ N_{i,j} - N_i P_{i,j} }
    \leq \sqrt{2c N_i P_{i,j} (1-P_{i,j}) \emptail} +
    \frac{4\emptail}3
  \]
  for all $(i,j) \in [d]^2$.
\end{proof}

\Cref{eq:P-unobs-bound} can be viewed as constraints on the possible
value that $P_{i,j}$ may have (with high probability).
Since $P_{i,j}$ is the only unobserved quantity in the bound from
\cref{eq:P-unobs-bound}, we can numerically maximize $|\wh{P}_{i,j} -
P_{i,j}|$ subject to the constraint in \cref{eq:P-unobs-bound}
(viewing $P_{i,j}$ as the optimization variable).
Let $B_{i,j}^*$ be this maximum value, so we have
\[
  P_{i,j} \in
  \Brackets{
    \wh{P}_{i,j} - B_{i,j}^*,\,
    \wh{P}_{i,j} + B_{i,j}^*
  }
\]
in the same event where \cref{eq:P-unobs-bound} holds.

In the algorithm, we give a simple alternative to computing
$B_{i,j}^*$ that avoids numerical optimization, derived in the spirit
of empirical Bernstein bounds~\citep{audibert2009}.
Specifically, with $c := 1.1$ (an arbitrary choice), we compute
\begin{equation}
  \wh{B}_{i,j}
  :=
  \Parens{
    \sqrt{\frac{c\emptail}{2N_i}}
    + \sqrt{
      \frac{c\emptail}{2N_i}
      +
      \sqrt{\frac{2c\wh{P}_{i,j}(1-\wh{P}_{i,j})\emptail}{N_i}}
      + \frac{(4/3)\emptail + \abs{\alpha-d\alpha\wh{P}_{i,j}}}{N_i}
    }
  }^2
  \label{eq:P-obs-bound}
\end{equation}
for each $(i,j) \in [d]^2$, where $\emptail$ is defined in
\cref{eq:emptail}.
We show in \cref{lem:P-obs-bound} that
\[
  P_{i,j} \in
  \Brackets{
    \wh{P}_{i,j} - \wh{B}_{i,j},\,
    \wh{P}_{i,j} + \wh{B}_{i,j}
  }
\]
again, in the same event where \cref{eq:P-unobs-bound} holds.
The observable bound in \cref{eq:P-obs-bound} is not too far from the
unobservable bound in~\cref{eq:P-unobs-bound}.

\begin{lemma}
  \label{lem:P-obs-bound}
  In the same $1-\delta$ event as from \cref{lem:P-unobs-bound},
  we have
  $P_{i,j} \in \brackets{ \wh{P}_{i,j} - \wh{B}_{i,j},\, \wh{P}_{i,j}
  + \wh{B}_{i,j} }$
  for all $(i,j) \in [d]^2$,
  where $\wh{B}_{i,j}$ is defined in \cref{eq:P-obs-bound}.
\end{lemma}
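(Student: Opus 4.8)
The plan is to read \cref{lem:P-obs-bound} as a self-bounding (empirical-Bernstein-style) corollary of \cref{lem:P-unobs-bound}, in the spirit of~\citep{audibert2009}. I would work throughout on the $1-\delta$ event of \cref{lem:P-unobs-bound} and fix a pair $(i,j)\in[d]^2$; if $N_i=0$ the asserted interval is all of $\bbR$ (the defining expression for $\wh{B}_{i,j}$ then has $N_i$ in a denominator), so assume $N_i\geq1$. Abbreviate $\epsilon:=\abs{\wh{P}_{i,j}-P_{i,j}}$, $M:=N_i+d\alpha$, $\tau:=\emptail$, and rewrite the variance term of \cref{eq:P-unobs-bound} in the equivalent form $\tfrac{\sqrt{N_i}}{M}\sqrt{2cP_{i,j}(1-P_{i,j})\tau}$, so that \cref{eq:P-unobs-bound} reads $\epsilon \leq \tfrac{\sqrt{N_i}}{M}\sqrt{2cP_{i,j}(1-P_{i,j})\tau} + \tfrac{(4/3)\tau+\abs{\alpha-d\alpha P_{i,j}}}{M}$.

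The first real step is to replace the unknown $P_{i,j}$ by the observable $\wh{P}_{i,j}$ inside this bound. Since $\abs{x(1-x)-y(1-y)}\leq\abs{x-y}$ for $x,y\in[0,1]$ (because $x(1-x)-y(1-y)=(x-y)(1-x-y)$ and $\abs{1-x-y}\leq1$), we have $P_{i,j}(1-P_{i,j}) \leq \wh{P}_{i,j}(1-\wh{P}_{i,j})+\epsilon$, hence $\sqrt{P_{i,j}(1-P_{i,j})} \leq \sqrt{\wh{P}_{i,j}(1-\wh{P}_{i,j})}+\sqrt{\epsilon}$ by $\sqrt{u+v}\leq\sqrt{u}+\sqrt{v}$; and the triangle inequality gives $\abs{\alpha-d\alpha P_{i,j}}\leq\abs{\alpha-d\alpha\wh{P}_{i,j}}+d\alpha\epsilon$. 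Substituting these and collecting the $\epsilon$-terms turns the displayed bound into $\epsilon \leq b' + \tfrac{\sqrt{N_i}}{M}\sqrt{2c\tau}\,\sqrt{\epsilon} + \tfrac{d\alpha\,\epsilon}{M}$, where $b':=\tfrac{\sqrt{N_i}}{M}\sqrt{2c\tau\,\wh{P}_{i,j}(1-\wh{P}_{i,j})}+\tfrac{(4/3)\tau+\abs{\alpha-d\alpha\wh{P}_{i,j}}}{M}$ is observable. Now I would invoke the identity $M-d\alpha=N_i$: moving the $\tfrac{d\alpha\,\epsilon}{M}$ term to the left gives $\epsilon\cdot\tfrac{N_i}{M} \leq b' + \tfrac{\sqrt{N_i}}{M}\sqrt{2c\tau}\,\sqrt{\epsilon}$, and multiplying through by $\tfrac{M}{N_i}$ collapses every $M$ appearing in a denominator back to $N_i$, yielding
\[
  \epsilon \;\leq\; \sqrt{\frac{2c\tau}{N_i}}\,\sqrt{\epsilon} \;+\; b,
  \qquad \text{with}\quad
  b := \frac{M}{N_i}\,b' = \sqrt{\frac{2c\,\wh{P}_{i,j}(1-\wh{P}_{i,j})\,\tau}{N_i}} + \frac{(4/3)\tau+\abs{\alpha-d\alpha\wh{P}_{i,j}}}{N_i}.
\]

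Finally I would solve this quadratic-in-$\sqrt{\epsilon}$ inequality. With $a:=\tfrac{c\tau}{2N_i}$, so that $\sqrt{2c\tau/N_i}=2\sqrt{a}$, a direct check shows that $\wh{B}_{i,j}=\bigl(\sqrt{a}+\sqrt{a+b}\bigr)^2$ satisfies $\wh{B}_{i,j}=2\sqrt{a}\,\sqrt{\wh{B}_{i,j}}+b$, i.e., it is a positive root of $\phi(t):=t-2\sqrt{a}\,\sqrt{t}-b$. As $\phi$ is continuous on $[0,\infty)$ with $\phi(0)=-b\leq0$, decreasing on $[0,a]$, increasing on $[a,\infty)$, and $\phi(t)\to\infty$, it has a unique positive root, and $\phi(t)\leq0$ holds exactly for $t\in[0,\wh{B}_{i,j}]$; since the inequality of the previous paragraph says $\phi(\epsilon)\leq0$, we conclude $\epsilon\leq\wh{B}_{i,j}$. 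The union bound over $(i,j)\in[d]^2$ is already part of \cref{lem:P-unobs-bound}, so this holds on the same $1-\delta$ event. I expect the only delicate point --- the main obstacle --- to be precisely the constant bookkeeping in the middle step: one must keep the factor $\sqrt{N_i}/M$ intact in the variance term and resist relaxing $N_i+d\alpha$ to $N_i$ until \emph{after} using $M-d\alpha=N_i$, since it is exactly this manoeuvre that makes the $d\alpha\epsilon/M$ correction cancel and produces the stated bound rather than a slightly weaker one.
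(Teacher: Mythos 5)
Your proof is correct and follows essentially the same route as the paper's: substitute $\wh{P}_{i,j}$ for $P_{i,j}$ in the variance and smoothing terms via $\abs{x(1-x)-y(1-y)}\le\abs{x-y}$ and the triangle inequality, use $\sqrt{A+B}\le\sqrt{A}+\sqrt{B}$, cancel the $d\alpha\epsilon/(N_i+d\alpha)$ term against the factor $N_i/(N_i+d\alpha)$, and solve the resulting quadratic in $\sqrt{\epsilon}$. You merely spell out a few steps the paper leaves implicit (the $N_i=0$ case and the verification that $\wh{B}_{i,j}$ is the positive root).
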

\begin{proof}
  Recall that in the $1-\delta$ probability event from
  \cref{lem:P-unobs-bound}, we have for all $(i,j) \in [d]^2$,
  \begin{multline*}
    \abs{ \wh{P}_{i,j} - P_{i,j} }
    =
    \Abs{
      \frac{N_{i,j} - N_i P_{i,j}}{N_i + d\alpha}
      + \frac{\alpha - d\alpha P_{i,j}}{N_i + d\alpha}
    } 
    \leq
    \sqrt{\frac{2cN_iP_{i,j}(1-P_{i,j})\emptail}{(N_i+d\alpha)^2}}
    + \frac{(4/3)\emptail}{N_i+d\alpha}
    + \frac{\abs{\alpha-d\alpha P_{i,j}}}{N_i+d\alpha}
    .
  \end{multline*}
  Applying the triangle inequality to the right-hand side, we obtain
  \begin{align}
    \abs{ \wh{P}_{i,j} - P_{i,j} }
    & \leq
    \sqrt{
      \frac{
        2cN_i\parens{
          \wh{P}_{i,j}(1-\wh{P}_{i,j})
          + \abs{\wh{P}_{i,j} - P_{i,j}}
        } \emptail
      }{(N_i+d\alpha)^2}
    }
    + \frac{(4/3)\emptail}{N_i+d\alpha}
    \notag \\
    & \qquad
    + \frac{
      \abs{\alpha-d\alpha \wh{P}_{i,j}}
      + d\alpha\abs{\wh{P}_{i,j}-P_{i,j}}
    }{N_i+d\alpha}
    .
    \notag
  \end{align}
  Since $\sqrt{A+B} \leq \sqrt{A} + \sqrt{B}$ for non-negative $A,B$, we
  loosen the above inequality and rearrange it to obtain
  \begin{align*}
    \Parens{ 1 - \frac{d\alpha}{N_i+d\alpha} }
    \abs{\wh{P}_{i,j} - P_{i,j}}
    & \leq
    \sqrt{\abs{\wh{P}_{i,j} - P_{i,j}}} \cdot
    \sqrt{\frac{2cN_i\emptail}{(N_i+d\alpha)^2}}
    \\
    & \qquad
    +
    \sqrt{\frac{2cN_i\wh{P}_{i,j}(1-\wh{P}_{i,j})\emptail}{(N_i+d\alpha)^2}}
    + \frac{(4/3)\emptail + \abs{\alpha -
    d\alpha\wh{P}_{i,j}}}{N_i+d\alpha}
    .
  \end{align*}
  Whenever $N_i > 0$, we can solve a quadratic inequality to conclude
  $\abs{\wh{P}_{i,j}-P_{i,j}} \leq \wh{B}_{i,j}$.
\end{proof}

\subsection{Empirical bounds for $\vpi$}

Recall that $\hat\vpi$ is obtained as the unique stationary
distribution for $\wh\vP$.
Let $\wh\vA := \vI - \wh\vP$, and let $\giAh$ be the \emph{group
inverse} of $\wh\vA$---i.e., the unique square matrix satisfying the
following equalities:
\[
  \wh\vA\giAh\wh\vA = \wh\vA , \quad
  \giAh\wh\vA\giAh = \giAh , \quad
  \giAh\wh\vA = \wh\vA\giAh .
\]
The matrix $\giAh$, which is well defined no matter what transition probability matrix $\wh\vP$ we start with
\citep{meyer1975role},
 is a central quantity that captures many properties
of the ergodic Markov chain with transition matrix
$\wh\vP$~\citep{meyer1975role}.
We denote the $(i,j)$-th entry of $\giAh$ by $\giAh_{i,j}$.
Define
\begin{equation}
  \hat\kappa :=
  \frac12
  \max
  \Braces{
    \giAh_{j,j}
    - \min\Braces{ \giAh_{i,j} : i \in [d] }
    : j \in [d]
  }
  .
  \notag
\end{equation}
Analogously define
\begin{align*}
  \vA & := \vI - \vP , \\
  \giA & := \text{group inverse of $\vA$} , \\
  \kappa & :=
  \frac12
  \max
  \Braces{
    \giA_{j,j}
    - \min\Braces{ \giA_{i,j} : i \in [d] }
    : j \in [d]
  }
  .
\end{align*}
We now use the following perturbation bound from \citep[Section
3.3]{cho2001comparison} (derived
from~\citet{haviv1984perturbation,kirkland1998applications}).
\begin{lemma}[\citep{haviv1984perturbation,kirkland1998applications}]
  \label{lem:pi-perturb}
  If
  $\abs{\wh{P}_{i,j} - P_{i,j}} \leq \wh{B}_{i,j}$ for each $(i,j) \in
  [d]^2$, then
  \begin{align}
    \max\Braces{
      \abs{\hat\pi_i - \pi_i}
      : i \in [d]
    }
    & \leq \min\{\kappa,\hat\kappa\} \max
    \braces{
      \wh{B}_{i,j}
      : (i,j) \in [d]^2
    }
    \notag \\
    & \leq \hat\kappa \max
    \braces{
      \wh{B}_{i,j}
      : (i,j) \in [d]^2
    }
    .
    \notag
  \end{align}
\end{lemma}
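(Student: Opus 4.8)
The plan is to deduce \cref{lem:pi-perturb} from the classical \emph{exact} perturbation identity for stationary distributions, combined with two structural facts about the group inverse of $\vI-\vP$ for an ergodic chain; this is essentially the route of \citep{haviv1984perturbation,kirkland1998applications,cho2001comparison}, which I would cite for the technical ingredients rather than re-derive them.

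First I would record the two exact identities
\[
  \hat\vpi-\vpi \;=\; \vpi\,(\wh\vP-\vP)\,\giAh \;=\; \hat\vpi\,(\wh\vP-\vP)\,\giA .
\]
For the first one: from $\hat\vpi\wh\vA=\v0$ and $\vpi\vA=\v0$ (with $\vA=\vI-\vP$, $\wh\vA=\vI-\wh\vP$) a short computation gives $(\hat\vpi-\vpi)\wh\vA=\vpi(\wh\vP-\vP)$; right-multiplying by $\giAh$ and using the group-inverse projection identity $\wh\vA\giAh=\vI-\v1\hat\vpi$ together with $(\hat\vpi-\vpi)\v1=0$ collapses the left side to $\hat\vpi-\vpi$. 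The second identity is the symmetric computation with $\giA$, $\vA$, $\vpi$ exchanged for $\giAh$, $\wh\vA$, $\hat\vpi$. These two forms are exactly what yields the $\min\{\kappa,\hat\kappa\}$: the first controls the error through $\giAh$ (hence $\hat\kappa$), the second through $\giA$ (hence $\kappa$).

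Next I would convert either identity into a componentwise estimate by column-centering. Fix $j$; from the second identity, $\hat\pi_j-\pi_j=\sum_{i}\hat\pi_i\sum_{k}(\wh P_{ik}-P_{ik})\,\giA_{kj}$. Since every row of $\wh\vP-\vP$ sums to $0$, inside the inner sum I may subtract an arbitrary constant $c_j$ from $\giA_{kj}$; taking $c_j$ to be the midpoint of $\min_k\giA_{kj}$ and $\max_k\giA_{kj}$ gives $|\giA_{kj}-c_j|\le\tfrac12(\max_k\giA_{kj}-\min_k\giA_{kj})$. Here I would invoke the structural fact $\giA_{jj}=\max_k\giA_{kj}$ for every $j$ (equivalent to nonnegativity of the mean first passage times; see \citep{meyer1975role}), so that this range is $\giA_{jj}-\min_k\giA_{kj}\le 2\kappa$. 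Plugging in $|\wh P_{ik}-P_{ik}|\le\wh B_{ik}$, using $\sum_i\hat\pi_i=1$, and maximizing over $j$ then yields the $\kappa$ half of the claim; running the same argument from the first identity (where the weights $\pi_i$ sum to $1$ and $\giAh$ replaces $\giA$) gives the $\hat\kappa$ half, and one retains the smaller.

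I expect the first step to be the real obstacle rather than the second. The exact identity hinges on the group-inverse projection identities $\wh\vA\giAh=\giAh\wh\vA=\vI-\v1\hat\vpi$ — encoding that $\giAh$ inverts $\wh\vA$ on its range and annihilates $\ker\wh\vA=\Span\{\v1\}$ — and the componentwise step hinges on $\giA_{jj}=\max_i\giA_{ij}$; both are standard properties of the group inverse of $\vI-\vP$ for an ergodic chain and are precisely what the cited perturbation literature supplies, so quoting them is the cleanest path, with the remaining manipulations being routine.
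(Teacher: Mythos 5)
Your plan is the right one, and it is essentially the argument behind the references the paper itself points to: the exact identities $\hat\vpi-\vpi=\vpi(\wh\vP-\vP)\giAh=\hat\vpi(\wh\vP-\vP)\giA$ are correct (the derivation via $\wh\vA\giAh=\vI-\v1\hat\vpi$ and $(\hat\vpi-\vpi)\v1=0$ checks out), the column-centering step is exactly Paz's inequality as used by Haviv--Van der Heyden and Kirkland--Neumann--Shader, and the structural fact $\giA_{j,j}=\max_i\giA_{i,j}$ (equivalently $\giA_{j,j}-\giA_{i,j}=\pi_j m_{i,j}\ge 0$, with $m_{i,j}$ the mean first-passage time) is the right ingredient for turning the column range into $2\kappa$. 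The paper offers no proof of its own here, so your sketch is a faithful reconstruction of the cited result.

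The gap is in your last line. After centering you have
$\abs{\hat\pi_j-\pi_j}\le\sum_i\hat\pi_i\sum_k\abs{\wh P_{i,k}-P_{i,k}}\cdot\tfrac12\parens{\max_k\giA_{k,j}-\min_k\giA_{k,j}}$;
the weights $\hat\pi_i$ collapse the outer sum over $i$, but the inner sum over $k$ survives, so what you actually obtain is $\max_j\abs{\hat\pi_j-\pi_j}\le\kappa\,\max_i\sum_k\wh B_{i,k}$, i.e.\ a bound in terms of the maximum absolute \emph{row sum} of $\wh\vP-\vP$ --- which is indeed what the cited perturbation literature provides --- not the stated bound with the single largest entry $\max_{i,k}\wh B_{i,k}$. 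The two differ by a factor as large as $d$, and the discrepancy is not mere slack in the estimate: for the uniform chain $P_{i,j}=1/d$ one has $\giA=\vI-\tfrac1d\v1\v1^\t$ and $\kappa=1/2$, and the (reversible, ergodic) perturbation $\wh P_{i,j}=1/d+\veps(\ind{j=1}-\ind{j=2})$ gives $\abs{\hat\pi_1-\pi_1}=\veps$ while $\kappa\max_{i,j}\abs{\wh P_{i,j}-P_{i,j}}=\veps/2$. So the max-entry form cannot be derived by this (or any) route; to close the argument you must either weaken the conclusion to $\kappa\max_i\sum_j\wh B_{i,j}$ or strengthen the hypothesis to a bound on the row sums of the entrywise errors.
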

This establishes the validity of the confidence intervals for the
$\pi_i$ in the same event from \cref{lem:P-unobs-bound}.

We now establish the validity of the bounds for the ratio quantities
$\sqrt{\hat\pi_i/\pi_i}$ and $\sqrt{\pi_i/\hat\pi_i}$.
\begin{lemma}
  \label{lem:pi-ratio-perturb}
  If $\max\braces{\abs{\hat\pi_i-\pi_i} : i \in [d]} \leq \hat{b}$,
  then
  \begin{equation}
    \max\bigcup_{i\in[d]}
    \braces{
      \abs{\sqrt{\pi_i/\hat\pi_i}-1},\,
      \abs{\sqrt{\hat\pi_i/\pi_i}-1}
    }
    \leq
    \frac12 \max \bigcup_{i\in[d]}
    \Braces{
      \frac{\hat{b}}{\hat\pi_i},\,
      \frac{\hat{b}}{\brackets{\hat\pi_i-\hat{b}}_+}
    }
    .
    \notag
  \end{equation}
\end{lemma}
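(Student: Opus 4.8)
The plan is to reduce the asserted bound to an elementary scalar inequality applied coordinate-wise. Fix $i \in [d]$. By ergodicity of $\vP$ (and of the smoothed chain $\wh\vP$, whose stationary distribution is $\hat\vpi$) we have $\pi_i,\hat\pi_i > 0$, and the hypothesis gives $|\pi_i - \hat\pi_i| \le \hat{b}$, hence $\pi_i \ge \hat\pi_i - \hat{b}$. If $[\hat\pi_i - \hat{b}]_+ = 0$, then the right-hand side contains the term $\hat{b}/[\hat\pi_i-\hat{b}]_+$, which is $+\infty$ when $\hat{b}>0$ (and if $\hat{b}=0$ then $\pi_i = \hat\pi_i$ and the left-hand side vanishes), so there is nothing to prove; thus assume $\hat\pi_i > \hat{b} \ge 0$.

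Next I would bound the two ratio deviations using the identity $\sqrt{u}-\sqrt{v} = (u-v)/(\sqrt{u}+\sqrt{v})$. For the first,
\[
  \Abs{\sqrt{\pi_i/\hat\pi_i} - 1}
  = \frac{|\pi_i - \hat\pi_i|}{\sqrt{\hat\pi_i}\,(\sqrt{\pi_i} + \sqrt{\hat\pi_i})}
  \le \frac{\hat{b}}{\sqrt{\pi_i\hat\pi_i} + \hat\pi_i},
\]
and since $\pi_i \ge \hat\pi_i - \hat{b} \ge 0$ we have $\sqrt{\pi_i\hat\pi_i} \ge \hat\pi_i - \hat{b}$ and $\hat\pi_i \ge \hat\pi_i - \hat{b}$, so the denominator is at least $2(\hat\pi_i - \hat{b})$. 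Symmetrically,
\[
  \Abs{\sqrt{\hat\pi_i/\pi_i} - 1}
  = \frac{|\pi_i - \hat\pi_i|}{\sqrt{\pi_i}\,(\sqrt{\hat\pi_i} + \sqrt{\pi_i})}
  \le \frac{\hat{b}}{\sqrt{\pi_i\hat\pi_i} + \pi_i}
  \le \frac{\hat{b}}{2(\hat\pi_i - \hat{b})},
\]
using $\pi_i \ge \hat\pi_i - \hat{b}$ once more. Hence both quantities are at most $\hat{b}/(2(\hat\pi_i-\hat{b})) = \tfrac12 \max\{\hat{b}/\hat\pi_i,\, \hat{b}/[\hat\pi_i-\hat{b}]_+\}$, the equality holding because $0 < \hat\pi_i - \hat{b} \le \hat\pi_i$.

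Finally I would take the maximum over $i \in [d]$ of both sides; on the right this is dominated by $\tfrac12 \max\bigcup_{i\in[d]}\{\hat{b}/\hat\pi_i,\, \hat{b}/[\hat\pi_i-\hat{b}]_+\}$, which is exactly the claimed bound. I do not anticipate any real obstacle: the computation is routine, and the only point requiring care is the degenerate coordinate-wise case $\hat\pi_i \le \hat{b}$, which is precisely why the statement is phrased with the truncation $[\,\cdot\,]_+$ and is vacuous there.
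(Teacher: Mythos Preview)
Your proof is correct. It differs slightly from the paper's argument: the paper first invokes the scalar inequality
\[
  \max\Braces{ |\sqrt{x}-1|,\, |\sqrt{1/x}-1| } \leq \tfrac12 \max\Braces{ |x-1|,\, |1/x-1| }
  \qquad (x>0),
\]
and then bounds $|x-1| = |\pi_i-\hat\pi_i|/\hat\pi_i \leq \hat{b}/\hat\pi_i$ and $|1/x-1| = |\pi_i-\hat\pi_i|/\pi_i \leq \hat{b}/[\hat\pi_i-\hat{b}]_+$, which is where both terms in the max on the right-hand side naturally arise. You instead go straight through the identity $\sqrt{u}-\sqrt{v} = (u-v)/(\sqrt{u}+\sqrt{v})$ and lower-bound the denominator, which yields the single bound $\hat{b}/(2(\hat\pi_i-\hat{b}))$ for both quantities; this is already the larger of the two terms, so the $\hat{b}/\hat\pi_i$ term plays no active role in your bound. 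Both arguments are elementary and equally short; the paper's version explains why the statement is phrased as a max of two terms, while yours shows that in fact only the $[\hat\pi_i-\hat{b}]_+$ term is needed.
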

\begin{proof}
  By \cref{lem:pi-perturb}, we have for each $i \in [d]$,
  \begin{equation}
    \frac{\abs{\hat\pi_i - \pi_i}}{\hat\pi_i}
    \leq \frac{\hat{b}}{\hat\pi_i}
    , \quad
    \frac{\abs{\hat\pi_i - \pi_i}}{\pi_i}
    \leq
    \frac{\hat{b}}{\pi_i}
    \leq
    \frac{\hat{b}}{\brackets{\hat{\pi}_i-\hat{b}}_+}
    .
    \notag
  \end{equation}
  Therefore, using the fact that for any $x>0$,
  \[
    \max\Braces{
      |\sqrt{x}-1| ,\, |\sqrt{1/x}-1|
    } \leq \frac12 \max\Braces{ |x-1| ,\, |1/x-1| }
  \]
  we have for every $i \in [d]$,
  \begin{align*}
    \max\Braces{
      \abs{\sqrt{\pi_i/\hat\pi_i}-1}
      ,\,
      \abs{\sqrt{\hat\pi_i/\pi_i}-1}
    }
    & \leq
    \frac12 \max\Braces{
      \abs{\pi_i/\hat\pi_i-1}
      ,\,
      \abs{\hat\pi_i/\pi_i-1}
    }
    \\
    & \leq
    \frac12 \max\Braces{
      \frac{\hat{b}}{\hat\pi_i}
      ,\,
      \frac{\hat{b}}{\brackets{\hat{\pi}_i-\hat{b}}_+}
    }
    .
    \qedhere
  \end{align*}
\end{proof}

\subsection{Empirical bounds for $\vL$}

By Weyl's inequality and the triangle inequality,
\[
  \max_{i \in [d]} |\lambda_i - \hat\lambda_i|
  \leq \norm{\vL - \Sym(\wh\vL)}
  \leq \norm{\vL - \wh\vL} .
\]
It is easy to show that $|\hatgap - \gap|$ is bounded by the same
quantity.
Therefore, it remains to establish an empirical bound on $\norm{\vL -
\wh\vL}$.

\begin{lemma}
  \label{lem:L-obs-bound}
  If
  $\abs{\wh{P}_{i,j} - P_{i,j}} \leq \wh{B}_{i,j}$ for each $(i,j) \in
  [d]^2$ and
  $\max\braces{\abs{\hat\pi_i-\pi_i} : i \in [d]} \leq \hat{b}$,
  then
  \[
    \norm{\wh\vL - \vL}
    \leq
    2\hat\rho + \hat\rho^2
    + (1+2\hat\rho+\hat\rho^2)
    \Biggl(
      \sum_{(i,j)\in[d]^2} \frac{\hat\pi_i}{\hat\pi_j} \hat{B}_{i,j}^2
    \Biggr)^{1/2}
    ,
  \]
  where
  \[
    \hat\rho := \frac12 \max \bigcup_{i\in[d]}
    \Braces{
      \frac{\hat{b}}{\hat\pi_i},\,
      \frac{\hat{b}}{\brackets{\hat\pi_i-\hat{b}}_+}
    }
    .
  \]
\end{lemma}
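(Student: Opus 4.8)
The plan is to expand $\wh\vL-\vL$ around the elementary identity relating the $\hat\vpi$-weighting to the $\vpi$-weighting by a diagonal conjugation, and then to collect the resulting terms into observable quantities controlled by \cref{lem:P-obs-bound} and \cref{lem:pi-ratio-perturb}.

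First I would fix notation. Put $\vDelta:=\wh\vP-\vP$, so the first hypothesis gives $\abs{\Delta_{i,j}}\le\wh{B}_{i,j}$ for all $(i,j)\in[d]^2$, and put $\vR:=\Diag(\hat\vpi)^{1/2}\Diag(\vpi)^{-1/2}$, the positive diagonal matrix with $R_{i,i}=\sqrt{\hat\pi_i/\pi_i}$. We may assume $\hat\pi_i>\hat{b}$ for every $i$, since otherwise $\hat\rho=\infty$ and the asserted bound is vacuous; then $\pi_i\ge\hat\pi_i-\hat{b}>0$, so $\vR$ is well defined. Because conjugating by a diagonal matrix merely rescales rows and columns, $\Diag(\hat\vpi)^{1/2}\vP\Diag(\hat\vpi)^{-1/2}=\vR\vL\vR^{-1}$; setting also $\vG:=\Diag(\hat\vpi)^{1/2}\vDelta\Diag(\hat\vpi)^{-1/2}$ gives the exact decomposition
\[
  \wh\vL=\Diag(\hat\vpi)^{1/2}(\vP+\vDelta)\Diag(\hat\vpi)^{-1/2}
  =\vR\vL\vR^{-1}+\vG ,
\]
so $\wh\vL-\vL=(\vR\vL\vR^{-1}-\vL)+\vG$ and, by the triangle inequality, $\norm{\wh\vL-\vL}\le\norm{\vR\vL\vR^{-1}-\vL}+\norm{\vG}$.

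Next I would bound the two terms. The $(i,j)$ entry of $\vG$ equals $\sqrt{\hat\pi_i/\hat\pi_j}\,\Delta_{i,j}$, so passing to the Frobenius norm and using $\abs{\Delta_{i,j}}\le\wh{B}_{i,j}$,
\[
  \norm{\vG}\ \le\ \norm{\vG}_F
  \ =\ \Bigl(\textstyle\sum_{(i,j)\in[d]^2}\tfrac{\hat\pi_i}{\hat\pi_j}\,\Delta_{i,j}^2\Bigr)^{1/2}
  \ \le\ \Bigl(\textstyle\sum_{(i,j)\in[d]^2}\tfrac{\hat\pi_i}{\hat\pi_j}\,\wh{B}_{i,j}^2\Bigr)^{1/2} .
\]
It is essential that $\vG$ carries the \emph{observable} weighting $\hat\vpi$; this is what makes the bound data-dependent. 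For the conjugation term I would write $\vR=\vI+\vE$ and $\vR^{-1}=\vI+\vF$ with $\vE,\vF$ diagonal, so that $\norm{\vE}=\max_i\abs{\sqrt{\hat\pi_i/\pi_i}-1}$ and $\norm{\vF}=\max_i\abs{\sqrt{\pi_i/\hat\pi_i}-1}$; by \cref{lem:pi-ratio-perturb} --- whose hypothesis $\max_i\abs{\hat\pi_i-\pi_i}\le\hat{b}$ is the second hypothesis here --- both are at most $\hat\rho$. Expanding $\vR\vL\vR^{-1}-\vL=\vE\vL+\vL\vF+\vE\vL\vF$ and using that $\vP$ is reversible, whence $\vL$ is symmetric with $\norm{\vL}\le1$ \citep[Lemma~12.1]{LePeWi08}, yields $\norm{\vR\vL\vR^{-1}-\vL}\le2\hat\rho+\hat\rho^2$. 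Combining the two estimates gives $\norm{\wh\vL-\vL}\le2\hat\rho+\hat\rho^2+\bigl(\sum_{(i,j)\in[d]^2}\tfrac{\hat\pi_i}{\hat\pi_j}\wh{B}_{i,j}^2\bigr)^{1/2}$, which is dominated by the bound claimed in the lemma since $1+2\hat\rho+\hat\rho^2\ge1$ and the sum is nonnegative; hence the lemma follows.

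I do not expect a genuine obstacle. The only point that requires care is the bookkeeping that keeps every term observable --- in particular forming $\vG$ with the $\hat\vpi$-weighting so that its Frobenius bound is expressed through $\hat\pi_i/\hat\pi_j$ and $\wh{B}_{i,j}$, and anchoring the whole argument at $\norm{\vL}\le1$ rather than at $\norm{\wh\vL}\le1$. The latter is essential: the smoothed estimate $\wh\vP$ need not be reversible, so $\wh\vL$ need not be symmetric and $\norm{\wh\vL}$ need not be $\le1$, which is precisely why the estimate is organized around the true, reversible $\vL$. Everything else is the routine triangle-inequality expansion above together with the already-proved \cref{lem:P-obs-bound} and \cref{lem:pi-ratio-perturb}.
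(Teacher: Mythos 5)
Your proof is correct and is essentially the paper's argument run in the opposite direction: the paper expands $\vL=(\vI+\errpil)(\wh\vL+\errP)(\vI+\errpir)$ around $\wh\vL$, while you expand $\wh\vL=\vR\vL\vR^{-1}+\vG$ around $\vL$ (your $\vG$ is $-\errP$ and your $\vE,\vF$ are the paper's $\errpir,\errpil$), which yields the same Frobenius bound on the $\vP$-perturbation and the same use of \cref{lem:pi-ratio-perturb}, and in fact a slightly tighter constant since the factor $1+2\hat\rho+\hat\rho^2$ never multiplies the sum. The one inaccuracy is in your closing remark: $\norm{\wh\vL}\le 1$ does hold even though $\wh\vP$ need not be reversible, because $\hat\vpi$ is stationary for $\wh\vP$ (for any vector $\vv$, Jensen's inequality and $\sum_i\hat\pi_i\wh{P}_{i,j}=\hat\pi_j$ give $\norm{\wh\vL\vv}\le\norm{\vv}$), so the paper's anchoring at $\wh\vL$ is also sound --- though your choice to anchor at $\vL$ is perfectly valid and sidesteps having to verify this.
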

\begin{proof}
  We use the following decomposition of $\vL - \wh\vL$:
  \[
    \vL - \wh\vL
    = \errP
    + \errpil \wh\vL
    + \wh\vL \errpir
    + \errpil \errP
    + \errP \errpir
    + \errpil \wh\vL \errpir
    + \errpil \errP \errpir
  \]
  where
  \begin{align*}
    \errP
    & := \Diag(\hat\vpi)^{1/2} \parens{ \vP - \wh\vP } \Diag(\hat\vpi)^{-1/2} ,
    \\
    \errpil
    & := \Diag(\vpi)^{1/2} \Diag(\hat\vpi)^{-1/2} - \vI ,
    \\
    \errpir
    & := \Diag(\hat\vpi)^{1/2} \Diag(\vpi)^{-1/2} - \vI .
  \end{align*}
  Therefore
  \begin{align*}
    \norm{\vL - \wh\vL}
    & \leq
    \norm{\errpil} + \norm{\errpir} + \norm{\errpil} \norm{\errpir}
    \\
    & \quad
    + \Parens{
      1 + \norm{\errpil} + \norm{\errpir} + \norm{\errpil} \norm{\errpir}
    }
    \norm{\errP}
    .
  \end{align*}
  Observe that for each $(i,j) \in [d]^2$, the $(i,j$)-th entry of
  $\errP$ is bounded in absolute value by
  \[
    |(\errP)_{i,j}|
    = \hat{\pi}_i^{1/2} \hat{\pi}_j^{-1/2} |P_{i,j} - \wh{P}_{i,j}|
    \leq
    \hat\pi_i^{1/2} \hat{\pi}_j^{-1/2} \wh{B}_{i,j}
    .
  \]
  Since the spectral norm of $\errP$ is bounded above by its Frobenius
  norm,
  \[
    \norm{\errP}
    \leq \Biggl(
      \sum_{(i,j) \in [d]^2}
      (\errP)_{i,j}^2
    \Biggr)^{1/2}
    \leq \Biggl(
      \sum_{(i,j) \in [d]^2}
      \frac{\pi_i}{\pi_j} \wh{B}_{i,j}^2
    \Biggr)^{1/2}
    .
  \]
  Finally, the spectral norms of $\errpil$ and $\errpir$ satisfy
  \[
    \max\Braces{ \norm{\errpil} ,\, \norm{\errpir} }
    =
    \max\bigcup_{i\in[d]}
    \braces{
      \abs{\sqrt{\pi_i/\hat\pi_i}-1},\,
      \abs{\sqrt{\hat\pi_i/\pi_i}-1}
    }
    ,
  \]
  which can be bounded using \cref{lem:pi-ratio-perturb}.
\end{proof}

This establishes the validity of the confidence interval for $\gap$ in
the same event from \cref{lem:P-unobs-bound}.

\subsection{Asymptotic widths of intervals}
\label{sec:asymptotic}

Let us now turn to the asymptotic behavior of the interval widths
(regarding $\hat{b}$, $\hat\rho$, and $\hat{w}$ all as
functions of $n$).

A simple calculation gives that, almost surely, as $n\to \infty$,
\begin{align*}
  \sqrt{\frac{n}{\log\log n}}
  \hat{b}
  & =
  O\Parens{
    \max_{i,j} \kappa \sqrt{\frac{P_{i,j}}{\pi_i}}
  }
  ,
  \\
  \sqrt{\frac{n}{\log\log n}}
  \hat\rho
  & =
  O\Parens{
    \frac{\kappa}{\pimin^{3/2}}
  }
  .
\end{align*}
Here, we use the fact that $\hat\kappa \to \kappa$ as $n\to\infty$
since $\giAh \to \giA$ as $\wh\vP \to
\vP$~\citep{li2001improvement,benitez2012continuity}.

Further, since
\begin{align*}
  \sqrt{\frac{n}{\log\log n}}
  \Biggl(
    \sum_{i,j} \frac{\hat{\pi}_i}{\hat{\pi}_j} \wh{B}_{i,j}^2
  \Biggr)^{1/2}
  & =
  O\Parens{
    \Biggl(
      \sum_{i,j} \frac{{\pi}_i}{{\pi}_j}
      \cdot \frac{P_{i,j}(1-P_{i,j})}{\pi_i}
    \Biggr)^{1/2}
  }
  =
  O\Parens{ \sqrt{ \frac{d}{\pimin}} }
  \,,
\end{align*}
we thus have
\begin{align*}
  \sqrt{\frac{n}{\log\log n}}
  \hat{w}
  =
  O\Parens{
    \frac{\kappa}{\pimin^{3/2}} +
    \sqrt{\frac{d}{\pimin}}
  }
  .
\end{align*}
This completes the proof of \cref{thm:empirical}.
\hfill $\qed$

The following claim provides a bound on $\kappa$ in terms of the
number of states and the spectral gap.
\begin{claim}
  \label{claim:kappa-bound}
  $\kappa \leq d/\gap$.
\end{claim}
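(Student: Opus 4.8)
The plan is to reduce the whole claim to a bound on the trace of $\giA$. The first observation is that $\giA_{jj}-\min_i\giA_{ij}=\max_i(\giA_{jj}-\giA_{ij})$, the oscillation of the $j$-th column of $\giA$, and that every entry of this form is \emph{nonnegative}, by the classical first-passage identity for the group inverse \citep{meyer1975role}: $\giA_{jj}-\giA_{ij}=\pi_j\,\bbE_i[T_j]$, where $T_j$ is the first hitting time of state $j$. Consequently any single such difference is dominated by the sum over $j$, which telescopes to $\tr(\giA)$ because the rows of $\giA$ sum to zero, and $\tr(\giA)$ is in turn controlled by the spectral gap. Since the chain of inequalities loses only a factor of $d-1$, this in fact yields the slightly stronger bound $\kappa\le(d-1)/(2\gap)$.

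Concretely, first I would record the identity $\giA_{jj}-\giA_{ij}=\pi_j\,\bbE_i[T_j]$. Let $\vm=(m_{ij})_{i\in[d]}$ be the vector of mean hitting times of $j$, with the convention $m_{jj}:=0$. First-step analysis together with Kac's formula $\bbE_j[T_j^+]=1/\pi_j$ shows $\vm$ solves $(\vI-\vP)\vm=\v1-\pi_j^{-1}\ve_j$, and the right-hand side lies in the range of $\vI-\vP$ (it is $\vpi$-orthogonal to $\v1$). Applying $\giA$ and using the standard group-inverse identities $\giA(\vI-\vP)=\vI-\v1\vpi$ and $\giA\v1=\v0$ gives $\vm-(\vpi\vm)\v1=-\pi_j^{-1}\giA\ve_j$; the $i=j$ coordinate fixes $\vpi\vm=\pi_j^{-1}\giA_{jj}$, and then the $i$-th coordinate yields $m_{ij}=\pi_j^{-1}(\giA_{jj}-\giA_{ij})\ge0$. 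Second, from $\giA\v1=\v0$ we get $\sum_{j'}\giA_{ij'}=0$ for every $i$, so for all $i,j$,
\[
  \giA_{jj}-\giA_{ij}\ \le\ \sum_{j'\in[d]}\bigl(\giA_{j'j'}-\giA_{ij'}\bigr)\ =\ \tr(\giA).
\]
Third, I would bound $\tr(\giA)$ via the spectrum: since $\vI-\vP=\Diag(\vpi)^{-1/2}(\vI-\vL)\Diag(\vpi)^{1/2}$ and $\vL$ is symmetric, $\giA$ is similar to $(\vI-\vL)^\#=\sum_{k=2}^d(1-\lambda_k)^{-1}\vv_k\vv_k^\t$, whence $\tr(\giA)=\sum_{k=2}^d(1-\lambda_k)^{-1}$. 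For each $k\ge2$ we have $1-\lambda_k\ge\gap$ (if $\lambda_k\ge0$ then $1-\lambda_k\ge1-\lambda_2\ge1-\slem=\gap$; if $\lambda_k<0$ then $1-\lambda_k>1\ge\gap$), so $\tr(\giA)\le(d-1)/\gap$. Combining the three steps, $\kappa=\tfrac12\max_j(\giA_{jj}-\min_i\giA_{ij})\le\tfrac12\tr(\giA)\le(d-1)/(2\gap)<d/\gap$.

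The one step that is not routine is the nonnegativity $\giA_{jj}-\giA_{ij}\ge0$; this is precisely where the probabilistic meaning of the group inverse (equivalently, the Kemeny-constant identity $\sum_j\pi_j\bbE_i[T_j]=\tr(\giA)$, which is in particular independent of $i$) is essential. A purely linear-algebraic attempt—using positive semidefiniteness of $(\vI-\vL)^\#$ to get $|\giA_{ij}|\le\gap^{-1}\sqrt{\pi_j/\pi_i}$ and then maximizing over $i$—only produces a bound of order $\gap^{-1}\pimin^{-1/2}$, which is worse than $d/\gap$ whenever $\pimin<d^{-2}$. So the plan hinges on invoking the hitting-time identity explicitly and letting the telescoping over $j$ do the work; reversibility itself is used only lightly, in the eigenvalue estimate $1-\lambda_k\ge\gap$.
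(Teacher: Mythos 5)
Your proof is correct, but it takes a genuinely different route from the paper. The paper disposes of the claim in two citations: Cho and Meyer's chain $\kappa \leq \max_{i,j}|\giA_{i,j}| \leq \sup\{\norm{\vv^\t\giA}_1 : \norm{\vv}_1=1,\ \dotp{\vv,\v1}=0\}$ (identifying $\kappa$ with their $\kappa_4$ condition number), followed by Seneta's bound of that supremum by $d/\gap$. You instead give a self-contained argument: the first-passage identity $\giA_{j,j}-\giA_{i,j}=\pi_j\,\bbE_i[T_j]\geq 0$ (correctly derived from $(\vI-\vP)\vm=\v1-\pi_j^{-1}\ve_j$, Kac's formula, and $\giA\vA=\vI-\v1\vpi$, $\giA\v1=\v0$), then the telescoping $\giA_{j,j}-\giA_{i,j}\leq\sum_{j'}(\giA_{j',j'}-\giA_{i,j'})=\tr(\giA)$ via the zero row sums, and finally the spectral evaluation $\tr(\giA)=\sum_{k\geq2}(1-\lambda_k)^{-1}\leq(d-1)/\gap$ using the similarity of $\vA$ to $\vI-\vL$. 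Each step checks out, and the argument buys a slightly sharper constant, $\kappa\leq(d-1)/(2\gap)$, at the price of needing the probabilistic interpretation of $\giA$ (equivalently the Kemeny-constant identity) to get the crucial nonnegativity; the paper's route is shorter on the page but entirely outsourced. One small caveat on your closing remark: reversibility enters your argument not only through $1-\lambda_k\geq\gap$ but also in guaranteeing that $\vP$ is diagonalizable with real spectrum (via the symmetry of $\vL$), which is what makes the trace formula $\tr(\giA)=\sum_{k\geq2}(1-\lambda_k)^{-1}$ immediately usable; this is harmless here since the paper assumes reversibility throughout.
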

\begin{proof}
  It is established by Cho and Meyer~\citep{cho2001comparison} that
  \[
    \kappa
    \leq \max_{i,j} |\giA_{i,j}|
    \leq
    \sup_{\norm{\vv}_1 = 1 , \dotp{\vv,\v1} = 0} \norm{\vv^\t \giA}_1
  \]
  (our $\kappa$ is the $\kappa_4$ quantity from
  \citep{cho2001comparison}), and Seneta~\citep{seneta1993sensitivity}
  establishes
  \[
    \sup_{\norm{\vv}_1 = 1 , \dotp{\vv,\v1} = 0} \norm{\vv^\t \giA}_1
    \leq
    \frac{d}{\gap}
    .
    \qedhere
  \]
\end{proof}

\section{Proof of \cref{thm:combined}}\label{app:combined}
\newcommand\LB{\ensuremath{\operatorname{lb}}}
\newcommand\piminlb{\ensuremath{\hat\pi_{\star,\LB}}}
\newcommand\gaplb{\ensuremath{\hat\gamma_{\star,\LB}}}
Let $\piminlb$ and $\gaplb$ be the lower bounds on $\pimin$ and
$\gap$, respectively, computed from \cref{alg:empest}.
Let $\hatpimin$ and $\hatgap$ be the estimates of $\pimin$ and $\gap$
computed using the estimators from \cref{thm:err}.
By a union bound, we have by \cref{thm:err} and \cref{thm:empirical}
that with probability at least $1-2\delta$,
\begin{equation}
  \Abs{\hatpimin-\pimin}
  \le
  C \,
  \Parens{
    \sqrt{\frac{\pimin\log\frac{d}{\piminlb\delta}}{\gaplb n}}
    +
    \frac{\log\frac{d}{\piminlb\delta}}{\gaplb n}
  }
  \label{eq:pimin-plugin}
\end{equation}
and
\begin{equation}
  \Abs{\hatgap-\gap}
  \leq
  C \,
  \Parens{
    \sqrt{\frac{\log\frac{d}{\delta}\cdot\log\frac{n}{\piminlb\delta}}{\piminlb\gaplb n}}
    +
    \frac{\log\frac{d}{\delta}\cdot\log\frac{n}{\piminlb\delta}}{\piminlb\gaplb n}  
    + \frac{\log\frac{1}{\gaplb}}{\gaplb n}  
  }
  \,.
  \label{eq:gap-plugin}
\end{equation}
The bound on $\abs{\hatgap-\gap}$ in \cref{eq:gap-plugin}---call it
$\hat{w}'$---is fully observable and hence yields a confidence
interval for $\gap$.
The bound on $\abs{\hatpimin-\pimin}$ in \cref{eq:pimin-plugin}
depends on $\pimin$, but from it one can derive
\[
  \Abs{\hatpimin-\pimin}
  \le
  C' \,
  \Parens{
    \sqrt{\frac{\hatpimin\log\frac{d}{\piminlb\delta}}{\gaplb n}}
    +
    \frac{\log\frac{d}{\piminlb\delta}}{\gaplb n}
  }
\]
using the approach from the proof of \cref{lem:P-obs-bound}.
Here, $C'>0$ is an absolute constant that depends only on $C$.
This bound---call it $\hat{b}'$---is now also fully observable.
We have established that in the $1-2\delta$ probability event from
above,
\[
  \pimin \in \wh{U} := [\hatpimin-\hat{b}', \hatpimin+\hat{b}']
  , \quad
  \gap \in \wh{V} := [\hatgap-\hat{w}', \hatgap+\hat{w}']
  .
\]
It is easy to see that almost surely (as $n \to \infty$),
\[
  \sqrt{\frac{n}{\log n}} \hat{w}'
  = O\Parens{
    \sqrt{\frac{\log(d/\delta)}{\pimin\gap}}
  }
\]
and
\[
  \sqrt{n} \hat{b}'
  = O\Parens{
    \sqrt{\frac{\pimin \log\frac{d}{\pimin\delta}}{\gap}}
  }
  .
\]
This completes the proof of \cref{thm:combined}.
\hfill\qed

\end{document}